\newif\ifarxiv
\ifarxiv  \usepackage{amsthm} \fi
\def\algbackskip{\hskip-\ALG@thistlm}
\newcolumntype{P}[1]{>{\centering\arraybackslash}p{#1}}
\newcommand{\rank}{\operatorname{rank}}
    \newtheorem{theorem}{Theorem}[section]
    \newtheorem{lemma}[theorem]{Lemma}
    \newtheorem{proposition}[theorem]{Proposition}
    \newtheorem{corollary}[theorem]{Corollary}
\newtheorem{fact}[theorem]{Fact}
    \theoremstyle{definition}
    \newtheorem{definition}[theorem]{Definition}
\newtheorem{example}[theorem]{Example}
\newtheorem{remark}[theorem]{Remark}
\newcommand{\rev}[1]{\textcolor{black}{{#1}}}
\newcommand{\PP}{\mathbb{P}}
\newcommand{\RR}{\mathbb{R}}
\newcommand{\CC}{\mathbb{C}}
\newcommand{\FS}{\mathcal{M}_{\bm k, \bm s}}
\newcommand{\CFS}{\overline{\mathcal{M}}_{\bm k, \bm s}}
\title{Function Space and Critical Points
\ifarxiv \\ \fi
of Linear Convolutional Networks}
\author{Kathl\'en Kohn\thanks{Department of Mathematics, KTH Royal Institute of Technology, Stockholm, Sweden.}, Guido Mont\'ufar\thanks{Departments of Mathematics and Statistics, UCLA, CA, USA; Max Planck Institute for Mathematics in the Sciences, Leipzig, Germany.}, Vahid Shahverdi$^\ast$, Matthew Trager\thanks{Amazon AWS AI Labs, New York, NY, USA. Work done outside of Amazon.} 
  }
\author{Kathlén Kohn, Guido Mont\'ufar, Vahid Shahverdi, Matthew Trager}
\begin{document}
\maketitle
\begin{abstract}
We study the geometry of linear networks with one-dimensional convolutional layers. 
The function spaces of these networks can be identified with semi-algebraic families of polynomials admitting sparse factorizations. We analyze the impact of the network's architecture on the function space's dimension, boundary, and singular points. We  also describe the critical points of the network's parameterization map.
Furthermore, we study the optimization problem of training a network with the squared error loss.
We prove that for architectures where all strides are larger than one and generic data, the non-zero critical points of that optimization problem are smooth interior points of the function space.
This property is known to be false for dense linear networks and linear convolutional networks with stride one. 
\end{abstract}

\section{Introduction} 

\emph{Linear networks} are artificial neural networks with linear activation functions. Despite only representing linear functions, linear networks have been widely studied as a simplified model for analyzing the behavior of deep neural architectures. Previous work investigated various aspects of linear networks, including the optimization landscape and critical points of the loss function~\cite{NIPS1988_123, 10.1109/72.392248, 
NIPS2016_6112, 
DBLP:journals/corr/LuK17, 
zhou2018critical, 
pmlr-v80-laurent18a, 
zhu2020global, 
geometryLinearNets, 
mehta2022loss, 
bharadwaj2023complex}, 
the dynamics of training~\cite{DBLP:journals/corr/SaxeMG13}, and the convergence of gradient flow \cite{DBLP:journals/corr/abs-1910-05505} and gradient descent \cite{nguegnang2021convergence}. 
In particular,~\cite{geometryLinearNets} provided a detailed analysis of ``pure'' and ``spurious'' critical points, which are critical points arising from the local geometry of function space (i.e., the set of end-to-end linear functions) and the parametrization. 
\rev{In this context we may also mention \cite{NEURIPS2019_a0dc078c}, which studied the geometry of the set of functions represented by networks with polynomial activation.}

\emph{Linear Convolutional Networks (LCNs)} are a type of linear network in which each linear map is a convolution. This requirement imposes linear constraints on the entries of the weight matrices---conditions sometimes known as ``weight sharing'' and ``restricted connectivity.'' Although Convolutional Neural Networks (CNNs) are widely used in computer vision applications, LCNs have not received as much attention as their fully-connected counterparts. In~\cite{dft}, LCNs were studied from the perspective of the implicit bias of local parameter optimization. That work, however, considered only non-local convolutions with filters of maximal size. More recent works have studied the effect on the function space of regularizing the parameters of the network, obtaining results for stride-one LCNs with arbitrary filter size fixed across layers \cite{dai2021representation} 
and for certain two-layer multi-channel LCNs \cite{pmlr-v119-pilanci20a,
pmlr-v178-jagadeesan22a}.  
Most closely related to our present work is \cite{LCN}, which studied the geometry of the function space  represented by LCNs for varying filter size sequences. That work showed that the function space of LCNs can be viewed as a semi-algebraic set consisting of polynomials admitting certain factorizations. Several theoretical results on the function space were presented, including a characterization of the boundary and its singularities for LCNs with stride one. For convolutions of higher strides, it was shown that the function space is always contained in a lower-dimensional algebraic set, although many questions remained open in that case. 

In this paper, we aim to fill this gap by studying the geometry of the set of functions represented by LCNs with arbitrary strides. We consider networks with an arbitrary number of layers of one-dimensional convolutions having arbitrary filter sizes. Our main results are a characterization of the dimension, boundary, and singularities of 
the function spaces, as a function of the network's architecture (Theorems~\ref{thm:dim-mu}, \ref{thm:thick-closed}, \ref{thm:sing}, \ref{thm:boundaryProperties}). We also describe the critical points of the parameterization map (Theorem~\ref{thm:crit-gene}). 
Based on that description, we prove the following for architectures where all strides are larger than one:
For generic training data, the non-zero critical points (in parameter space) of the squared error loss correspond to smooth interior points of the function space that are critical points of the loss on that function space (i.e., they are not induced by the network's parametrization map) (Theorem~\ref{thm:exposed}). Our results show that LCNs with arbitrary strides have a rich structure that is manifested in the geometry of certain families of polynomials with structured roots. 
To our knowledge, these polynomial families 
have not been previously explored in detail, and could be of independent interest. 

\rev{We interpret LCNs in terms of reduced LCN architectures (with stride larger than one) composed with stride-one sub-architectures. The reduced architectures can be regarded as defining an initial set of constraints and the stride-one sub-architectures as imposing additional inequality constraints. This can be used in architecture design as it allows us to determine which function spaces are contained in each other and control the inequality constraints by choosing the stride-one sub-architectures.} 

Our analysis on LCNs provides insights into the complex geometric properties of neural networks, some of which may transfer to networks with nonlinear activations. 
Unlike densely connected feedforward linear networks, LCNs have a function space that is semi-algebraic, i.e., it is a manifold with boundary (and singularities). Moreover, unlike the stride-one setting considered in~\cite{LCN}, for arbitrary strides the function space is generally a low-dimensional subset of its ambient space. We believe that both of these qualitative aspects are important features of general neural architectures. Interestingly, increasing the stride size in LCNs also leads to non-linear behavior, since it requires associating filters with polynomials of variables in higher degrees. Geometrically, this means that increasing the stride can be seen as ``twisting'' the function space. As we discuss, the geometry of the function space plays a crucial role in optimization, 
since boundary points and singular points are typically more exposed during training. In LCNs, these special points correspond to functions with ``more structure,'' since they are functions that can be expressed by more restrictive architectures or that can be obtained as compositions of repeated filters. This sort of stratification is also characteristic of general neural networks. The algebraic nature of LCNs enables however a precise and quantitative description of this structure, revealing, for example, unexpected differences in behavior between ``reduced'' (where all strides are larger than one) and ``non-reduced'' architectures. In the future, our analysis could be further extended by considering convolutions of higher dimensions or introducing algebraic activation functions \rev{as in \cite{NEURIPS2019_a0dc078c}}.

\medskip 

This paper is organized as follows. 
In Section~\ref{sec:main-results} we present our main results about LCNs with arbitrary strides. 
We provide proofs of these results in subsequent sections. 
Specifically, in Section~\ref{sec:zariskiClosure} we prove results on the Zariski closure of the function space; in Section~\ref{sec:criticalMu}, we focus on critical points of the parameterization map; in Section~\ref{sec:singular} we analyze the singular points of the function space and in Section~\ref{sec:boundary} we describe its boundary; finally, in Section~\ref{sec:opt} we discuss the optimization of the squared loss.
\rev{We keep track of the major notational concepts in Table \ref{tab:symbols}.}

\section{Main results} 
\label{sec:main-results} 

Linear convolutional networks are 
families of linear maps parameterized as compositions of convolutions. In this work, we focus on single-channel convolutions for one-dimensional signals but we allow arbitrary strides. 
In this setting, \rev{a 
convolution or convolutional layer is associated with a filter $w \in \RR^k$, a stride $s \in \mathbb N$, and an output dimension $d' \in \mathbb{N}$. 
The associated convolution
is a linear map $\alpha_{w,s}: \RR^{d} \rightarrow \RR^{d'}$ with input dimension $d := s(d'-1)+k$}  defined by
\begin{equation}
\label{eq:convolution-definition}
\alpha_{w,s}(\texttt{x})[i] = \sum_{j=0}^{k-1} w[j] \cdot \texttt{x}[{is + j}] \quad  \rev{\text{ for } \texttt{x} \in \mathbb{R}^d \text{ and } i = 0, 1, \ldots, d'-1.}
\end{equation}
\rev{Note that this relation ensures that $is+j$ ranges from $0$ to $d-1$.}
The map~\eqref{eq:convolution-definition} can also be represented as a generalized Toeplitz matrix $T_{w,s} \in \RR^{d' \times d}$.
However, for most of our analysis, we will not be required to specify the input and output dimensions of convolutions, since compositions of layers can be defined purely in terms of filter vectors and strides. This will be clear from Proposition~\ref{prop:poly-mult} below.

\begin{definition} 
The \emph{function space} $\mathcal M_{\bm k, \bm s}$ of a linear convolutional network (LCN) architecture with filter sizes ${\bm k} = (k_1,\ldots,k_L)$, and strides ${\bm s} = (s_1,\ldots,s_L)$ is the set of all linear maps $\alpha$ that can be expressed as a composition $\alpha=\alpha_L \circ \cdots \circ \alpha_1$, where $\alpha_l$ is a 
convolution of filter size $k_l$ and stride $s_l$. Here $L$ is the number of layers of the LCN.
\end{definition}
In the following, we  assume that $k_l > 1$ for all $l=1,\ldots, L$ (layers with filter size $k_l=1$ yield only scalar multiplication and can be discarded without loss of generality; see Proposition~\ref{prop:poly-mult} below).
Each stride $s_l$ can be an arbitrary positive integer. 
The linear maps $\alpha$ in an LCN function space $\mathcal M_{\bm k, \bm s}$ are convolutions of filter size
$k:=k_1+\sum_{l=2}^L(k_l-1)\prod_{i=1}^{l-1}s_i$ and stride $s:= s_1 \cdots s_L$ (\cite[Proposition~2.2]{LCN}). 
Since each such convolution is uniquely determined by its filter $w \in \mathbb{R}^k$, we can view $\mathcal M_{\bm k, \bm s}$ as a subset of~$\RR^k$.

To study $\mathcal M_{\bm k, \bm s}$, we use the fact that compositions of convolutions can be described using polynomial multiplication. For any positive integer $s$ and filter 
$w=(w[0],\ldots,w[k-1]) \in \RR^k$, 
we consider the polynomial
\begin{equation} 
\pi_s(w) := w[0]x^{s(k-1)} + w[1] x^{s(k-2)}y^s+\cdots+w[k-2]x^s y^{s(k-2)} + w[k-1]y^{s(k-1)} \in \mathbb{R}[x^s,y^s]_{k-1} . 
\label{eq:polynomials}
\end{equation} 
The map $\pi_s$ is an isomorphism between $\mathbb{R}^k$ and the vector space $\mathbb{R}[x^s,y^s]_{k-1}$ of all homogeneous polynomials of degree $k-1$ in the variables $(x^s,y^s)$.

\begin{proposition}[\cite{LCN}]\label{prop:poly-mult} The function space of the LCN architecture $(\bm k, \bm s)$ 
\rev{can be identified with the following subset of $\RR^k$:}
\begin{equation}\label{eq:polynomialFactorization}
\FS=\left\{w \in \RR^k \colon \pi_1(w) = \prod_{l=1}^L \pi_{S_l}
(w_l),
\,\, w_l \in \RR^{k_l}\right\}, \quad \text{ where } S_l := \prod_{i=1}^{l-1} s_i.
\end{equation}
\rev{Here, $\pi_s$ is the map from~\eqref{eq:polynomials}}. Equivalently, $\FS$ is the image of the parameterization map
\begin{equation}
\mu_{\bm k, \bm s}: \RR^{k_1} \times \cdots \times \RR^{k_L} \rightarrow \RR^k, \qquad (w_1,\ldots,w_L) \mapsto \pi_1^{-1}\left(\prod_{l=1}^L \pi_{S_l}(w_l)\right). 
\label{eq:parameterization-def}
\end{equation}
\end{proposition}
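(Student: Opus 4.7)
The plan is to proceed by induction on the number of layers $L$, using a two-layer key lemma that translates composition of convolutions into multiplication of the associated polynomials. The base case $L = 1$ is immediate: a single layer is the convolution $\alpha_{w_1, s_1}$ with filter $w_1 \in \RR^{k_1}$, and since $S_1 = 1$, the claimed identity $\pi_1(w) = \pi_{S_1}(w_1)$ reduces to $w = w_1$, so $\FS = \RR^{k_1}$ coincides with the image of $\mu_{\bm k, \bm s}$.

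The key lemma is: for any filters $w_1 \in \RR^{k_1}$, $w_2 \in \RR^{k_2}$ and strides $s_1, s_2 \in \mathbb{N}$, the composition $\alpha_{w_2, s_2} \circ \alpha_{w_1, s_1}$ is itself a convolution, with stride $s_1 s_2$, filter size $k = k_1 + s_1(k_2 - 1)$, and effective filter $w$ satisfying $\pi_1(w) = \pi_1(w_1) \cdot \pi_{s_1}(w_2)$. I would prove this by direct expansion: from the definition,
\[
(\alpha_{w_2, s_2} \circ \alpha_{w_1, s_1})(\texttt{x})[n] = \sum_{m,j} w_2[m]\, w_1[j]\, \texttt{x}[n s_1 s_2 + m s_1 + j],
\]
so the effective filter has entries $w[p] = \sum_{j + s_1 m = p} w_1[j]\, w_2[m]$. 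This is the stride-$1$ convolution of $w_1$ with the $s_1$-dilated filter obtained by inserting $s_1 - 1$ zeros between consecutive entries of $w_2$, and in the $\pi$-encoding this corresponds exactly to the product $\pi_1(w_1)\cdot \pi_{s_1}(w_2)$; the degrees match since $(k_1 - 1) + s_1(k_2 - 1) = k - 1$.

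With the key lemma in hand, the inductive step is straightforward. Assuming the result for $L-1$ layers, the composition $\alpha_{L-1} \circ \cdots \circ \alpha_1$ is a convolution of stride $S_L$ with effective filter $w'$ satisfying $\pi_1(w') = \prod_{l=1}^{L-1} \pi_{S_l}(w_l)$. Applying the key lemma to $\alpha_L \circ \alpha_{w', S_L}$ yields a convolution of stride $S_L s_L = s$ with filter $w$ such that
\[
\pi_1(w) = \pi_1(w') \cdot \pi_{S_L}(w_L) = \prod_{l=1}^L \pi_{S_l}(w_l),
\]
which is the claim; equivalently, $w = \mu_{\bm k, \bm s}(w_1, \ldots, w_L)$.

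The main obstacle is the index bookkeeping in the two-layer identity, namely tracking how a stride-$s_2$ subsampling of a stride-$s_1$ output collapses into a single stride-$s_1 s_2$ convolution with an appropriately dilated filter. The encoding $\pi_s$ is tailored precisely to absorb this dilation: a filter applied after prior downsampling with total factor $S_l$ corresponds, in the original signal domain, to inserting $S_l - 1$ zeros between its consecutive entries, and this is exactly what replacing $\pi_1$ by $\pi_{S_l}$ records. Once this correspondence is in place, the rest is coefficient matching.
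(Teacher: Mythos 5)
Your proof is correct. The paper itself does not prove this proposition but cites it from \cite{LCN} (see the reference to Remark~2.8 of that work in the proof of Proposition~\ref{prop:functionspace-poly}); your two-layer lemma with the index computation $w[p]=\sum_{j+s_1m=p}w_1[j]w_2[m]$, matched against the exponents in $\pi_1(w_1)\cdot\pi_{s_1}(w_2)$, followed by induction on $L$, is exactly the standard argument establishing it, and the degree and stride bookkeeping ($k=k_1+s_1(k_2-1)$, composite stride $s_1s_2$) all checks out.
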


In light of this result, we often view $\FS$ as a family of homogeneous polynomials admitting a  sparse factorization as in \eqref{eq:polynomialFactorization}. Note that the final stride $s_L$ has no effect on the function space; for this reason, we assume from now on that $s_L=1$. We say that an LCN architecture is \emph{reduced} if all strides other than $s_L$ are greater than one.

\begin{example}
\label{ex:reduced-(3,2)}
Consider the architecture $\bm{k}=(3,2)$ and $\bm{s}=(2,1)$. Then we have that 
$L = 2$, 
$k = 
5$, 
$S_1 =  
1$, 
$S_2 = 
2$. 
Thus, $\mu_{\bm{k},\bm{s}}\colon \mathbb{R}^3\times \mathbb{R}^2\to \mathbb{R}^5; (w_1,w_2) \mapsto 
\pi_1^{-1}(\pi_2(w_2) \pi_1(w_1))$. 
Writing $w_1=(w_1[0],w_1[1],w_1[2])$ and $w_2=(w_2[0],w_2[1])$, 
the function space $\mathcal{M}_{\bm{k},\bm{s}}$ consists of all $w\in\mathbb{R}^5$ with 
$\pi_1(w) = \pi_{2}(w_2) \pi_{1}(w_1) = (w_{2}[0]x^{2} + w_{2}[1]y^2) (w_{1}[0]x^{2} + w_{1}[1]x^{1}y^{1} + w_{1}[2]y^{2})$. 
Multiplying out the latter expression and collecting the coefficients of individual monomials, we see that 
$w = ( w_{2}[0]w_{1}[0], w_{2}[0]w_{1}[1], w_{2}[0]w_{1}[2]+w_{2}[1]w_{1}[0], w_{2}[1]w_{1}[1], w_{2}[1]w_{1}[2] )$. 
These are precisely the filters of the end-to-end convolutions represented by products $T_{w_2,s_2} T_{w_1,s_1}$ of two generalized Toeplitz matrices $T_{w_1,s_1}$ and $T_{w_2,s_2}$ 
with filter size and stride pairs $k_1=3,s_1=2$ and $k_2=2,s_2=1$, which in the concrete case of end-to-end functions $\mathbb{R}^5\to\mathbb{R}^1$ take the form 
$T_{w_1,s_1} = \begin{bmatrix} w_{1}[0]&w_{1}[1]&w_{1}[2]&0&0\\0&0&w_{1}[0]&w_{1}[1]&w_{1}[2]\end{bmatrix}$, 
$T_{w_2,s_2} = \begin{bmatrix} w_{2}[0]& w_{2}[1]\end{bmatrix}$. 
According to \cite[Example~4.12]{LCN}, 
\rev{the implicit description of }
the function space for this architecture is given by 
\begin{equation*}
\mathcal{M}_{\bm{k},\bm{s}} = \{\rev{w = }(A,B,C,D,E) : AD^2+B^2E-BCD=0 \text{ and } C^2-4AE \ge 0\} \subset \RR^5.
\end{equation*}
The Zariski closure $\overline{\mathcal{M}}_{\bm{k},\bm{s}}$ of this function space is visualized in Figure \ref{fig:singulAR},
which displays a $3$-dimensional slice.

\begin{figure}[t]
\includegraphics[width=12cm]{./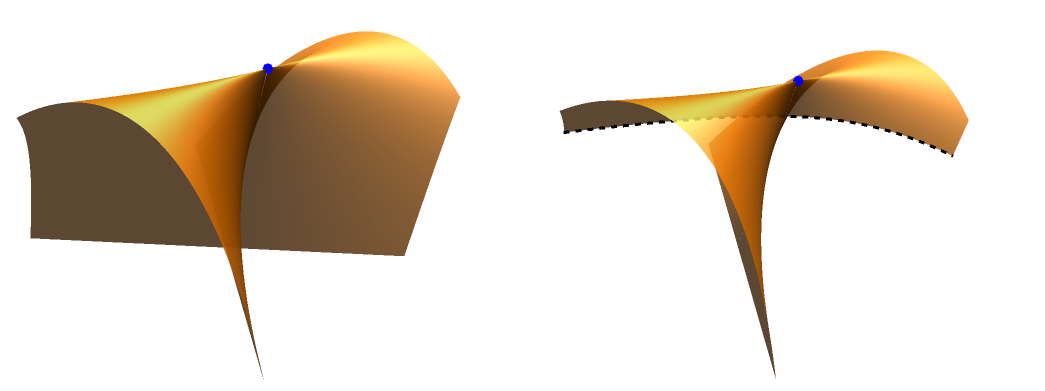}
\centering
\caption{
Left: Slice of the semi-algebraic set $AD^2+B^2E-BCD=0$, $C^2-4AE\ge 0$, obtained by setting $A=1$ and $C=-1$. This set corresponds to the function space $\mathcal{M}_{(3,2),(2,1)}\subseteq \mathbb{R}^5$ in Example~\ref{ex:reduced-(3,2)}.
 Right: The same set intersected with $B^4-4AB(BC-AD)\geq 0$, $D^4-4DE(CD-BE)\ge 0$ and ($AE\le 0$ or $AC\le 0$). This intersection corresponds to the function space $\mathcal{M}_{(2,2,2),(1,2,1)}$ discussed in Example~\ref{ex:k222s121}. The reduced boundary points and the stride-one boundary points are depicted as a blue point and a black dashed curve, respectively; see Theorem \ref{thm:boundaryProperties}.}
\label{fig:singulAR}
\end{figure}
\end{example}
As the previous example suggests, LCN function spaces are semialgebraic sets, that is, they are subsets of $\RR^k$ that are finite unions of solutions sets of finitely many polynomial equalities and inequalities.

\begin{theorem}
\label{thm:dim-mu}
    The LCN function space $\mathcal{M}_{{\bm k},{\bm s}}$ is a semialgebraic Euclidean-closed subset of~$\RR^k$. Its dimension does not depend on the strides and is equal to $k_1+\cdots+k_L- (L-1)$.
\end{theorem}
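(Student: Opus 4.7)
The statement has three parts, and my plan handles them separately. Semialgebraicity is immediate: $\mathcal{M}_{\bm k,\bm s}$ is the image of the polynomial parameterization $\mu_{\bm k,\bm s}$ from~\eqref{eq:parameterization-def}, so the Tarski--Seidenberg theorem forces the image to be semialgebraic.

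For Euclidean-closedness, the plan is to exploit the multilinearity of $\mu_{\bm k,\bm s}$: the map is homogeneous of degree one in each argument $w_l$ separately, and the product of nonzero factors is nonzero. Hence it descends to a continuous map of real projective spaces $\bar\mu: \mathbb{P}^{k_1-1}_\mathbb{R} \times \cdots \times \mathbb{P}^{k_L-1}_\mathbb{R} \to \mathbb{P}^{k-1}_\mathbb{R}$, well-defined everywhere on its compact domain. Since the source is compact, the image of $\bar\mu$ is closed. Pulling back under the continuous quotient $\mathbb{R}^k\setminus\{0\} \to \mathbb{P}^{k-1}_\mathbb{R}$ shows that $\mathcal{M}_{\bm k,\bm s}\setminus\{0\}$ is closed in $\mathbb{R}^k \setminus\{0\}$, and since $0\in \mathcal{M}_{\bm k,\bm s}$ and $\mathcal{M}_{\bm k,\bm s}$ is a cone, the full function space is closed in $\mathbb{R}^k$.

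For the dimension, the plan is to compute the generic fiber of $\mu_{\bm k,\bm s}$. The multiplicative scaling $(w_1,\ldots,w_L)\mapsto(\lambda_1 w_1,\ldots,\lambda_L w_L)$ with $\prod_l\lambda_l=1$ leaves $\mu_{\bm k,\bm s}$ invariant, giving fibers of dimension at least $L-1$ and the inequality $\dim\mathcal{M}_{\bm k,\bm s}\leq \sum_l k_l -(L-1)$. For the matching lower bound, I would show that at a generic parameter point the kernel of $d\mu_{\bm k,\bm s}$ is exactly $(L-1)$-dimensional. A direct expansion of $\mu_{\bm k,\bm s}(w+\epsilon\dot w)$ reveals that $(\dot w_1, \dots, \dot w_L)$ lies in $\ker d\mu_{\bm k,\bm s}$ if and only if $\sum_l \pi_{S_l}(\dot w_l) \prod_{l'\neq l} \pi_{S_{l'}}(w_{l'}) = 0$. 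Isolating the $l_0$-th summand shows that $\pi_{S_{l_0}}(w_{l_0})$ must divide $\pi_{S_{l_0}}(\dot w_{l_0})\prod_{l'\neq l_0}\pi_{S_{l'}}(w_{l'})$; assuming the factors $\pi_{S_l}(w_l)$ are pairwise coprime in $\mathbb{R}[x,y]$, this forces $\pi_{S_{l_0}}(w_{l_0})\mid\pi_{S_{l_0}}(\dot w_{l_0})$, and matching degrees together with the linearity of $\pi_{S_{l_0}}$ then yields $\dot w_{l_0}=\alpha_{l_0}w_{l_0}$ for some scalar $\alpha_{l_0}$. Substituting back into the original equation collapses it to $\sum_l\alpha_l=0$, cutting the kernel to precisely $L-1$ dimensions.

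The main obstacle I anticipate is establishing the generic pairwise coprimality of the $\pi_{S_l}(w_l)$. My plan is to pass to the complexification: each $\pi_{S_l}(w_l) \in \mathbb{C}[x,y]$ is a homogeneous polynomial of degree $S_l(k_l-1)$ whose projective roots form unions of orbits under multiplication of $x$ by $S_l$-th roots of unity. The locus in $\mathbb{R}^{k_l}\times\mathbb{R}^{k_{l'}}$ where $\pi_{S_l}(w_l)$ and $\pi_{S_{l'}}(w_{l'})$ share a projective root is the vanishing locus of a resultant, hence Zariski-closed; it is proper because one can exhibit explicit parameter pairs whose root sets are disjoint. Taking the complement of the union of these loci across all pairs of layers produces a Zariski-dense open set on which all factors are pairwise coprime, completing the fiber computation and yielding the stride-independent formula $\dim \mathcal{M}_{\bm k,\bm s} = \sum_l k_l-(L-1)$.
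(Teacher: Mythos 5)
Your proposal is correct. The semialgebraicity (Tarski--Seidenberg) and Euclidean-closedness arguments coincide with the paper's: both pass to the induced continuous map on compact real projective spaces, observe the image is closed, and descend to the affine cone. Where you genuinely diverge is the dimension count. The paper projectivizes, notes that the fibers of $\nu$ are finite (each fiber corresponds to the finitely many ways of distributing the roots of $\pi_1(w)$ among the layer factors), and invokes the general fact that a morphism of projective varieties with zero-dimensional fibers preserves dimension, giving $\dim\PP(\FS)=\sum_l(k_l-1)$ in one stroke. You instead work infinitesimally: the upper bound comes from the $(L-1)$-dimensional rescaling orbits inside every fiber, and the lower bound from showing that at a parameter with pairwise coprime factors $\pi_{S_l}(w_l)$ the kernel of $d\mu_{\bm k,\bm s}$ is exactly the scaling directions $\{(\alpha_1w_1,\ldots,\alpha_Lw_L):\sum_l\alpha_l=0\}$; your divisibility argument in the UFD $\RR[x,y]$ and the genericity of coprimality via resultants are both sound. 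The paper's route is shorter given the algebro-geometric machinery; yours is more elementary and self-contained, and as a bonus it identifies the regular locus of the parameterization (pairwise coprime factors), which anticipates the paper's later analysis of critical points of $\mu$ in Section~\ref{sec:criticalMu} (cf.\ Theorem~\ref{thm:crit-gene} and Lemma~\ref{lem:imageDiffMu}, where essentially the same kernel computation reappears). The only step worth making explicit when writing this up is the standard fact that the dimension of the image of a polynomial map is at least the rank of its differential at any point, which justifies passing from the generic rank to the lower bound on $\dim\FS$.
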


Our main goal in this work is to investigate how the geometric properties of LCN function spaces are affected by the choice of architecture (the sequences of filter sizes and strides) and to describe how changes in the geometry impact the optimization of a training loss. 
The case of stride-one architectures was studied in detail in~\cite{LCN}. 
In this work, we see that the situation for arbitrary strides is considerably more complex. 
We focus especially on the following basic qualitative features:
\begin{itemize}
\item \emph{thick vs.~thin}: We say that a function space is \emph{thick} if $\dim(\mathcal{M}_{\bm{k},\mathbf{s}}) = k$, that is, if $\FS$ is a full-dimensional semialgebraic subset of its ambient space $\RR^k$ or, equivalently, if its Zariski closure $\overline{\mathcal{M}}_{\bm{k},\bm{s}}$ equals $\RR^k$. We say that the function space is \emph{thin} if $\dim(\FS) < k$, that is, if it is contained in a proper algebraic subset of $\RR^k$.
\item \emph{Zariski closed vs.~non-closed:} A function space is Zariski closed (equivalently, it is an algebraic set) if it can be described using only polynomial equalities. It is Zariski non-closed if its characterization as a semialgebraic set necessarily involves  
polynomial inequalities. 
\item \emph{smooth vs.~singular}: We call a filter $w$ in the function space $\FS$ \emph{singular} if it is a singular point of the algebraic variety $\overline{\mathcal{M}}_{\bm{k},\bm{s}}$.
Otherwise, we say that $w$ is \emph{smooth}.
We say the function space $\FS$ is \emph{smooth} if every filter $w \in \FS$ is smooth.
Note that the Euclidean relative interior of $\FS$ is a manifold. The singular points of that manifold are contained in the singular locus of $\overline{\mathcal{M}}_{\bm{k},\bm{s}}$. Hence, by describing all singular points in the algebraic sense (in Theorem \ref{thm:sing}), we find a superset of all singular points in the manifold sense.
\end{itemize}

These distinctions are relevant for the study of the optimization of training losses for LCNs.
The simplest  
cases are those  
where the function space is both thick and Zariski closed\rev{.
In this case, the LCN function space is equal to its ambient space $\mathbb{R}^k$.
Thus,
minimizing a smooth convex loss function on the function space is simply a convex optimization problem.}


\rev{If the function space is not Zariski closed, it has a non-trivial Euclidean (relative) boundary. That boundary might be more exposed during the optimization, in the sense that many critical points of the optimization problem lie on the boundary. This happens for instance for stride-one LCN architectures \cite{LCN}.} 

\rev{Similarly, if the function space is singular, its singular points might be more exposed as well. This happens for dense linear networks \cite{geometryLinearNets,nguegnang2021convergence}.}

As long as the function space is thick, its relevant boundary is the standard boundary in the Euclidean topology on $\mathbb{R}^k$.
If the function space is thin, we need to consider its \emph{relative boundary},
\emph{i.e.}, the set of points in the function space that are limit points of sequences in $\overline{\mathcal{M}}_{{\bm k},{\bm s}} \setminus \mathcal{M}_{{\bm k},{\bm  s}}$.

\begin{table}[ht]
    \centering
    \begin{adjustbox}{max width=\textwidth}
        \begin{tabular}{c|c|c|c|c}
            & \multicolumn{2}{c|}{Zariski closed} & 
            \multicolumn{2}{c}{Zariski non-closed}  \\
            \hline
            Thick &\multicolumn{2}{c|}{$\bm{k}=(3,2), \bm{s}=(1,1)$} &\multicolumn{2}{c}{$\bm{k}=(2,2), \bm{s}=(1,1)$}  \\[1mm] 
            Thin & $\bm{k}=(3,3), \bm{s}=(2,1)$ & $\bm{k}=(2,2), \bm{s}=(2,1)$ &$\bm{k}=(2,2,2), \bm{s}=(1,2,1)$ & $\bm{k}=(3,2), \bm{s}=(2,1)$\\[-3mm] 
            \multicolumn{2}{c}{} &
            \multicolumn{2}{c}{$\underbrace{\hspace{85mm}}_{\text{\normalsize 0 is the only singular point}}$}
        \end{tabular}
    \end{adjustbox}
    \caption{Minimal examples of non-trivial LCN architectures (with at least two layers) of the different geometric types classified in Theorems~\ref{thm:thick-closed} and Theorem \ref{thm:sing}. 
    }
    \label{tab:mtable}
\end{table}

To study these geometric properties, it is useful to decompose an LCN architecture into a reduced architecture and several stride-one sub-architectures.
Intuitively, we reduce an architecture by merging all neighboring layers $l$ and $l+1$ where $s_l=1$ (\emph{i.e.}, $S_{l+1}=S_l$) by multiplying 
their polynomial factors in \eqref{eq:polynomialFactorization}. Formally: 

\begin{definition}
    \label{def:reduction}
Given an LCN architecture $(\bm k, \bm s)$,
we define its \emph{reduced architecture} as $(\tilde {\bm k}, \tilde {\bm s})$, where $\tilde {\bm s}:=(s_{l_1},\ldots,s_{l_{M-1}},1)$ is the subsequence of strides in $\bm s$ that are greater than one (with a final stride equal to one) and
 $\tilde {\bm k}:=(\tilde k_1, \ldots, \tilde k_M)$ with $\tilde k_{j+1} := \sum_{i=l_{j}+1}^{l_{j+1}} (k_i-1)+1$ (and $l_0:=0$, $l_{M}:=L$). 
We  define $M$ \emph{associated stride-one} architectures $(\tilde {\bm k}^j, \bm 1)$, with $\tilde {\bm k}^{j+1} = (k_{l_{j}+1},\ldots, k_{l_{j+1}})$ for $j=0,\ldots, M-1$. 
\end{definition}
Note that if $(\bm k, \bm s)$ was already reduced, then $(\tilde{\bm k}, \tilde{\bm s})=(\bm k, \bm s)$ and $\tilde {\bm k}^j = (k_j)$ for all $1 \leq j \leq L$. 
The parameterization map $\mu_{\bm k, \bm s}$ of the original architecture factorizes as 
\begin{equation}\label{eq:stride-one-reduced-factorization}
\mu_{\bm k, \bm s} 
= \mu_{\tilde {\bm k}, \tilde {\bm s}} \circ  
(\mu_{\tilde {\bm k}^1, {\bm 1}} , \ldots , \mu_{\tilde {\bm k}^M, {\bm 1}}). 
\end{equation}
Here the different arguments $w_1,\ldots, w_L$ of the parametrization map are assigned to their respective stride-one sub-architectures, $\mu_{\tilde{\bm{k}}^{j+1}, \bm{1}} (w_{l_j +1}, \ldots, w_{l_{j+1}})$ for $j=0,\ldots, M-1$, as illustrated below.
\newcommand*{\tikzhl}[1]{\tikz[baseline=(X.base)] \node[fill=black!10,inner sep=2pt, rounded corners] (X) {#1};}
\begin{center}
\begin{tikzpicture}[->,>=stealth',shorten >=1pt,inner sep=.1cm]
\node[text height=.2cm, text depth=0] (W1) {$($\tikzhl{$w_1,\ldots, w_{l_1}$}$,$}; 
\node[text height=.2cm, text depth=0] (W2) [right =0cm of W1]  {\tikzhl{$w_{l_1+1},\ldots,w_{l_2}$}$,$};  
\node[text height=.2cm, text depth=0] (D) [right =0cm of W2] {$\ldots,$}; 
\node[text height=.2cm, text depth=0] (WM) [right=0cm of D] {\tikzhl{$w_{l_{M-1}+1},\ldots, w_L$}$)$}; 

\node (U1) [below = 1cm of W1,text height=.2cm, text depth=0] {$($\tikzhl{$\;\;\;\tilde{w}_1\;\;\;$}$,$}; 
\node (U2) [below = 1cm of W2,text height=.2cm, text depth=0] {\tikzhl{$\;\;\;\tilde{w}_2\;\;\;$}$,$}; 
\node (UD) [below = 1cm of D,text height=.2cm, text depth=0] {$\ldots,$}; 
\node (UM) [below = 1cm of WM,text height=.2cm, text depth=0] {\tikzhl{$\;\;\;\tilde{w}_M\;\;\;$}$)$}; 

\draw[->] (W1) to node[left] {$\mu_{\tilde{\bm{k}}^1, \bm{1}}$} (U1);
\draw[->] (W2) to node[left] {$\mu_{\tilde{\bm{k}}^2, \bm{1}}$} (U2);
\draw[->] (WM)  to node[left] {$\mu_{\tilde{\bm{k}}^M, \bm{1}}$} (UM); 

\node (W) [right = 2cm of WM] {$w$}; 
\draw[->] (WM) to node[above] {$\mu_{\bm{k}, \bm{s}}$} (W); 

\draw[->, bend right=10, out=-10] (UM) to node[below, yshift=-.1cm] {$\mu_{\tilde{\bm{k}}, \tilde{\bm{s}}}$} (W); 

\end{tikzpicture}
\end{center}

Note that the function spaces $\FS$ and $\mathcal M_{\tilde{\bm k}, \tilde{\bm s}}$ of the initial LCN architecture $(\bm k, \bm s)$ and of its reduced architecture $(\tilde{\bm k}, \tilde{\bm s})$  live in the same ambient space $\RR^k$. 
We  show that reducing an architecture can enlarge the function space but does not affect its Zariski closure: 
\begin{equation}
\label{eq:reduction}
    \mathcal{M}_{{\bm k},{\bm s}} \subseteq \mathcal{M}_{\tilde{{\bm k}},\tilde{{\bm s}}} \text{ and } \overline{\mathcal{M}}_{{\bm k},{\bm s}} = \overline{\mathcal{M}}_{\tilde{{\bm k}},\tilde{{\bm s}}}.
\end{equation}
See Lemma \ref{lem:functionSpaceReduction}. The intuition for this fact is that the function spaces of the associated stride-one architectures, which are fed to the reduced architecture, may satisfy constraints in their natural ambient spaces but are always full-dimensional. 
\rev{Conversely, we can interpret \eqref{eq:reduction} as follows: The function space of a reduced architecture with filter sizes $(\tilde k_1,\ldots, \tilde k_M)$ and strides $(\tilde s_1,\ldots, \tilde s_{M-1},\tilde s_{M}=1)$ contains the function space of any architecture with filter sizes $(k^1_1,\ldots, k^1_{m_1},\ldots, k^M_1,\ldots, k^M_{m_M})$ and 
strides $(s^1_1,\ldots, s^1_{m_1},\ldots, s^M_1,\ldots, s^M_{m_M})$ satisfying $\sum_{i=1}^{m_j} (k^j_i-1) + 1 = \tilde k_j$ and $s^j_{m_j}=\tilde s_j$ for $j=1,\ldots, M$ and $s^j_{i}=1$ otherwise. 
} 

\begin{example}
\label{ex:k222s121}
Consider the architecture $\bm{k} = (2,2,2)$ and $\bm{s}=(1,2,1)$. Then $M=2$ and $l_1=2$, $l_2=3$. 
The associated reduced architecture is 
$(\tilde{\bm{k}} , \tilde{\bm{s}} ) 
= ( (3,2) , (2,1) ) $, which we already encountered in Example~\ref{ex:reduced-(3,2)}. 
Both architectures appear in Table \ref{tab:mtable}. 
The associated stride-one architectures are 
$(\tilde{\bm{k}^1}, \bm{1}) 
= ((2,2),(1,1))$ 
and 
$(\tilde{\bm{k}^2}, \bm{1}) 
= ((2),(1))$, which have output filters of sizes $
3$ and $2$, respectively, fitting to the domain of the parametrization map of the reduced architecture. 
The function space $\FS$ is 
described by the equality $AD^2+B^2E-BCD=0$ and inequalities $(AE \le 0$ or  $AC \le 0$), $B^4-4AB(BC-AD) \ge 0$, $D^4-4DE(CD-BE) \ge 0$, and $C^2-4AE\ge 0$.
Nonetheless, both architectures have the same Zariski closure. 
This is because one function space is a subset of the other, and they are irreducible and have the same dimension by Theorem \ref{thm:dim-mu}. 
\end{example}

We are now ready to describe the qualitative features of the LCN function space.

\begin{theorem} 
\label{thm:thick-closed}
Let $(\bm k, \bm s)$ be an LCN architecture.
\begin{enumerate} 
    \item[a)] The function space $\mathcal{M}_{\bm k, \bm s}$ is thick if and only if $\bm s = \bm 1$.
    \item[b)] The function space $\mathcal{M}_{\bm k, \bm s}$ is smooth if and only if it is thick.
    \item[c)] To determine whether $\mathcal{M}_{\bm k, \bm s}$ is Zariski-closed:
    \begin{enumerate}
    \item[c1)] If $(\bm k, \bm s)$ is reduced, the function space $\mathcal{M}_{\bm k, \bm s}$ is Zariski-closed if and only if, for all $l = 1, \ldots, L$, $k_l$ is odd or $S_l > \sum_{i=1}^{l-1}  (k_i-1) S_i$.
    \item[c2)] If $\bm s = \bm 1$, then $\mathcal{M}_{\bm k, \bm s}$ is Zariski-closed if and only if at most one of its filter sizes is even. 
    \item[c3)] In general, the function space $\mathcal{M}_{\bm k, \bm s}$ is Zariski-closed if and only if the function spaces of its associated reduced architecture and of its associated stride-one architectures are all Zariski-closed.
    \end{enumerate}
\end{enumerate}
\end{theorem}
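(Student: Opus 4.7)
The plan is to establish each part separately, using the dimension formula of Theorem~\ref{thm:dim-mu}, the factorization \eqref{eq:stride-one-reduced-factorization} of the parameterization into a reduced architecture composed with stride-one sub-architectures, and (for part (b)) the description of singular points to come in Theorem~\ref{thm:sing}.

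For part (a), I would compute directly. By Theorem~\ref{thm:dim-mu}, $\dim\FS = \sum_l k_l - (L-1)$, while the ambient dimension is $k = k_1 + \sum_{l\ge 2}(k_l-1)S_l$. Their difference equals $\sum_{l\ge 2}(k_l-1)(S_l-1)\ge 0$, and it vanishes iff $S_l=1$ for all $l\ge 2$ (using $k_l>1$); combined with the convention $s_L=1$, this is $\bm s=\bm 1$. For part (b), if $\FS$ is thick then $\overline{\FS}$ is a $k$-dimensional irreducible algebraic subset of $\RR^k$, hence equals $\RR^k$, which is smooth. If instead $\FS$ is thin, then $\overline{\FS}$ is a proper irreducible affine cone in $\RR^k$, and for $L\ge 2$ it is not a linear subspace (the factorization relation has degree $L$). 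The vertex of such a cone is singular, so the origin is a singular point of $\FS$. One can equivalently deduce this direction from Theorem~\ref{thm:sing}.

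Part (c3) follows from \eqref{eq:stride-one-reduced-factorization} together with \eqref{eq:reduction}. The containments $\FS\subseteq \mathcal M_{\tilde{\bm k},\tilde{\bm s}}\subseteq\overline{\mathcal M_{\tilde{\bm k},\tilde{\bm s}}}=\overline{\FS}$ show that $\FS$ is Zariski-closed iff both $\FS=\mathcal M_{\tilde{\bm k},\tilde{\bm s}}$ and $\mathcal M_{\tilde{\bm k},\tilde{\bm s}}=\overline{\mathcal M_{\tilde{\bm k},\tilde{\bm s}}}$. Since the generic fibers of $\mu_{\tilde{\bm k},\tilde{\bm s}}$ are scaling orbits and each $\mathcal M_{\tilde{\bm k}^j,\bm 1}$ is itself a cone, the first equality is equivalent to each stride-one sub-function-space equaling its ambient~$\RR^{\tilde k_j}$, i.e., to each stride-one sub-architecture being Zariski-closed.

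For (c2), a polynomial $\pi_1(w)\in\RR[x,y]_{k-1}$ factors over $\RR$ into irreducibles of degree~$1$ or~$2$, and lies in the stride-one $\FS$ iff these irreducibles can be grouped into $L$ bundles of total degrees $k_l-1$. A parity check shows such a grouping exists for every $\pi_1(w)$ iff at most one $k_l-1$ is odd, equivalently at most one $k_l$ is even; the obstruction when two $k_l$ are even is a polynomial with no real roots. The main obstacle is (c1). Over $\CC$, $\overline{\FS}$ is the locus of polynomials whose $\PP^1$-roots admit a partition into $L$ groups, the $l$-th being a union of orbits of the group of $S_l$-th roots of unity (acting on the first coordinate) of total size $S_l(k_l-1)$. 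Over $\RR$ each group must additionally be conjugation-invariant. When $k_l$ is odd, the corresponding factor has enough sparse flexibility to absorb complex-conjugate pairs within its orbits; when $S_l$ exceeds the cumulative degree $\sum_{i<l}(k_i-1)S_i$, the $l$-th group of roots is forced to be geometrically separated from the previous ones, so its realification is independent of the rest. Conversely, when $k_l$ is even and $S_l\le\sum_{i<l}(k_i-1)S_i$, I would construct an explicit polynomial in $\overline{\FS}\setminus\FS$ in the spirit of the $(\bm k,\bm s)=((3,2),(2,1))$ case of Example~\ref{ex:reduced-(3,2)}. The delicate point, and the main work, will be tracking how the root-of-unity orbit structure interacts with conjugation across multiple layers, which I expect to handle by induction on $L$ with careful case analysis on the two clauses of the condition.
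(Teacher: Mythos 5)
Parts (a) and (c2) and the ``thick $\Rightarrow$ smooth'' half of (b) coincide with the paper's argument. Your route to (c3) is genuinely different: the paper re-runs the factorization argument in both directions and, for the converse, constructs an explicit polynomial witnessing non-closedness, whereas you reduce everything to the chain $\FS\subseteq\mathcal M_{\tilde{\bm k},\tilde{\bm s}}\subseteq\CFS$ and characterize each inclusion separately. This is cleaner, but the converse of your first equivalence (that $\FS=\mathcal M_{\tilde{\bm k},\tilde{\bm s}}$ forces each stride-one sub-space to fill its ambient $\RR^{\tilde k_j}$) silently uses that the generic fiber of $\mu_{\tilde{\bm k},\tilde{\bm s}}$ is a single scaling orbit; that is the content of the paper's Corollary~\ref{cor:birational} (plus Lemma~\ref{lem:sFactorsReal} to descend to $\RR$), so it must be proved or cited, not assumed.

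Two genuine gaps remain. In (b), your primary justification that a thin $\CFS$ ``is not a linear subspace (the factorization relation has degree $L$)'' is not a proof: the degree of a parametrization does not bound the degree of its image from below (the image of a bilinear map can be a linear space). The paper closes this with Lemma~\ref{lem:zeroSing}, exhibiting two elements of the function space whose sum leaves the Zariski closure. Your fallback via Theorem~\ref{thm:sing} is logically admissible (after passing to the reduced architecture via \eqref{eq:reduction}, which has at least two layers whenever the space is thin), but note that the paper's proof of that theorem itself relies on Lemma~\ref{lem:zeroSing} for the vertex, so some version of that computation is unavoidable.

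More seriously, (c1) --- the substantive new content of the theorem --- is only a plan. The forward direction needs the precise mechanism: Lemma~\ref{lem:hyperrootPairs} (a non-real $S_l$-hyperroot of a real polynomial forces its conjugate to divide the cofactor), combined with the observation that for odd $k_L$ the factor $P_L$ has even degree, so the $S_L$-hyperroots of $P$ can be regrouped into a real top factor, while for $S_L>\sum_{i=1}^{L-1}(k_i-1)S_i$ no $S_L$-hyperroot fits into the lower layers, so $P_L$ is the product of all of them and is automatically real; one then inducts on $L$. The converse requires an actual construction, not a construction ``in the spirit of'' Example~\ref{ex:reduced-(3,2)}: the paper takes $P_l$ to be an odd-degree product of non-real hyperroots, places the single unpaired conjugate $R$ among the lower layers (possible precisely because $S_l\le\sum_{i<l}(k_i-1)S_i$), and verifies that every refactorization according to the reduced architecture forces an odd-degree, hence non-real, factor $Q_l$. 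Until these two steps are supplied, (c1) is unproven.
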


\rev{In particular, an LCN is a universal approximator of functions in the natural ambient space $\mathbb{R}^k$ if and only if $\bf s = \bf 1$ and at most one of its filter sizes is even. }

\rev{\begin{remark} The condition $S_l > \sum_{i=1}^{l-1}  (k_i-1) S_i$ from the previous statement will reappear in several other results throughout the paper, e.g., in Theorem~\ref{thm:sing} just below. This relation has a simple interpretation: it means that the composition of the first $l-1$ layers has stride size (equal to $S_l$) at least as large as its filter size (equal to $\sum_{i=1}^{l-1}  (k_i-1) S_i+1$). This in turn implies that the ``receptive fields'' of the output coordinates of this convolution do not overlap; that is, the sets of input coordinates that influence each output coordinate are disjoint. As we will see, this condition also influences the geometry of the projectivized parameterization map of the function space (Remark~\ref{rmk:segre}).
\end{remark}}


If the function space is not smooth (respectively, not Zariski closed), we aim to understand its singular points (respectively, its Euclidean relative boundary).
Because of \eqref{eq:reduction}, to describe the singular locus of a function space's Zariski closure, it is sufficient to consider reduced architectures.

\begin{theorem} 
\label{thm:sing}
Let $(\bm k, \bm s)$ be a reduced LCN architecture with at least two layers. 
Then the singular locus of the Zariski-closure of the function space is comprised of the zero filter and the union of all 
LCN function spaces 
with the same sequence of strides
whose Zariski closures are proper subsets of $\overline{\mathcal{M}}_{\bm k, \bm s}$: 
\begin{gather*}
{\rm Sing}(\overline{\mathcal{M}}_{\boldsymbol{k}, \boldsymbol{s}}) = \{0\} \cup \bigcup_{\boldsymbol{k}' \in K}\overline{\mathcal{M}}_{\boldsymbol{k}', \boldsymbol{s}} = \{0\} \cup \bigcup_{\boldsymbol{k}' \in K}{\mathcal{M}}_{\boldsymbol{k}', \boldsymbol{s}}, \text{ where }\\
K := K_{\bm k, \bm s}:=\left\{\bm k' \in \mathbb{Z}_{>0}^L \colon 
\begin{array}{l}
    \bm k' \ne \bm k, \,\, \sum_{i=1}^L (k'_i-1) S_i = \sum_{i=1}^L (k_i - 1) S_i,    \\[.1cm]
     \sum_{i=l}^L (k'_i-1) S_i \geq \sum_{i=l}^L (k_i - 1) S_i \,\, \text{ for all } l = 1, \ldots, L
\end{array}
   \right\}.
\end{gather*}
The set $K$ is empty (i.e., $0$ is the only singular point) if and only  if $S_l > \sum_{i=1}^{l-1} (k_i-1) S_i$ for every layer $l$.
\end{theorem}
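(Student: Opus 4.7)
Let $V = \overline{\mathcal{M}}_{\bm k, \bm s}$, with $d = \dim V = \sum_l k_l - (L-1)$ by Theorem~\ref{thm:dim-mu}. Since $V$ is irreducible (being the Zariski closure of the image of the polynomial map $\mu_{\bm k, \bm s}$ from an irreducible domain), a point $p \in V$ is smooth precisely when $\dim T_p V = d$. My plan is to compare tangent-space dimensions to $d$ via the structure of $\mu_{\bm k, \bm s}$.

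I would first establish $\mathcal{M}_{\bm k', \bm s} \subseteq V$ for every $\bm k' \in K$. By Proposition~\ref{prop:poly-mult}, a $(\bm k, \bm s)$-factorization of $P = \pi_1(p)$ corresponds to a partition of the linear factors of $P$ over $\mathbb{C}$ into $L$ groups, where the $l$-th group consists of $(k_l - 1)$ complete orbits of size $S_l$ under the action of $S_l$-th roots of unity (since a polynomial in $\mathbb{C}[x^{S_l}, y^{S_l}]$ is exactly one whose zero set is invariant under this action). A $(\bm k', \bm s)$-factorization yields an analogous but different partition, and the dominance condition on partial sums permits the refinement of a $\bm k'$-partition into a $\bm k$-partition by splitting $S_{l+1}$-orbits into unions of smaller $S_l$-orbits. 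Over $\mathbb{C}$ this is always possible, and taking Zariski closures yields $\overline{\mathcal{M}}_{\bm k', \bm s} \subseteq V$.

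Next, I would show each $p \in \mathcal{M}_{\bm k', \bm s}$ is singular in $V$. A generic such $p$ admits several inequivalent $(\bm k, \bm s)$-factorizations coming from distinct refinements of its orbit partition; these give multiple irreducible components of $\mu_{\bm k, \bm s}^{-1}(p)$. Each component contributes an image of $d\mu_{\bm k, \bm s}$ to $T_p V$, and I would show that their union spans a linear space of dimension strictly greater than $d$. The origin is singular because $V$ is an affine cone of positive codimension in $\mathbb{R}^k$ by Theorem~\ref{thm:thick-closed}(a), since any reduced multi-layer architecture has $\bm s \neq \bm 1$. Conversely, for $p \in V \setminus (\{0\} \cup \bigcup_{\bm k' \in K} \mathcal{M}_{\bm k', \bm s})$, the orbit partition is rigid (no dominating refinement is possible), so $\mu_{\bm k, \bm s}^{-1}(p)$ is a single orbit under the natural rescaling symmetry; by Theorem~\ref{thm:crit-gene}, $d\mu_{\bm k, \bm s}$ attains full rank $d$ at a generic preimage, giving $\dim T_p V = d$.

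For the final equivalence, suppose $S_{l_0} \leq \sum_{i < l_0}(k_i-1) S_i$ fails at some $l_0$; I would construct $\bm k' \in K$ by setting $k'_{l_0} = k_{l_0}+1$ and decreasing $k'_i$ for $i < l_0$ to absorb $S_{l_0}$ units of degree, which fits by the budget inequality and satisfies the partial-sum conditions by placing the decrease in the smallest indices. Conversely, if $S_l > \sum_{i<l}(k_i-1) S_i$ holds for every $l$, then for any $\bm k' \in K$, taking the largest index $l$ with $k'_l > k_l$ yields $\sum_{i<l}(k_i-k'_i)S_i = (k'_l-k_l)S_l \geq S_l$, which exceeds the bound $\sum_{i<l}(k_i-1)S_i < S_l$—a contradiction, so $K = \emptyset$. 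The main obstacle will be verifying that distinct factorization components at a point of $\mathcal{M}_{\bm k', \bm s}$ genuinely contribute tangent directions summing to dimension strictly greater than $d$; this requires a careful local analysis of $d\mu_{\bm k, \bm s}$, likely via identifying $V$ with a join of Veronese-type subvarieties indexed by the layers and invoking classical singularity statements for such joins.
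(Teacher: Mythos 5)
Your combinatorial bookkeeping is essentially right: the inclusion $\overline{\mathcal{M}}_{\bm k', \bm s} \subseteq \overline{\mathcal{M}}_{\bm k, \bm s}$ via refinement of orbit partitions matches the paper's Lemma~\ref{lem:s-factor-degrees-FS} and Corollary~\ref{cor:strictContainment}, and your argument for the emptiness criterion on $K$ is the same as the paper's (the construction with $k'_{l_0}=k_{l_0}+1$ is Lemma~\ref{lem:integerRelation}; note you must use the divisibility $S_i \mid S_{i+1}$ to guarantee integer exponents $0 \le e_i < k_i$). The genuine gap is exactly the step you flag as the ``main obstacle,'' and it is not a routine verification: it is the heart of the theorem. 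Your criterion --- that the images of $d\mu$ at the several preimages span more than $d$ dimensions --- is sufficient for singularity but by no means automatic (the images could coincide), and conversely, a unique preimage orbit at which $d\mu$ has full rank only gives $\dim T_pV \ge d$, not the upper bound $\dim T_pV \le d$ needed for smoothness. So neither direction of your tangent-space comparison closes.

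The paper's route supplies precisely the missing tool: for a \emph{reduced} architecture the projectivized complex parameterization $\nu^{\CC}$ is a finite, surjective, \emph{birational} morphism from a smooth source (Corollary~\ref{cor:birational}, which itself rests on the uniqueness of the $\bm s$-factorization), and then a smoothness criterion for such morphisms (Fact~\ref{fact:birationalmap-sing}, from \cite{kohn2017secants}) says $y$ is smooth in the image \emph{iff} the fiber is a single point at which the differential is injective. This converts the entire singularity question into the fiber/critical-point analysis that Theorem~\ref{thm:crit-gene} and the $\bm s$-factor degrees already answer, and it is what makes both directions work simultaneously. Two further points you do not address: the argument must be run over $\CC$ (where images of projective morphisms are Zariski closed and birational geometry applies) and then transferred to $\RR$ via the reality of $\bm s$-factorizations (Lemma~\ref{lem:sFactorsReal} and Corollary~\ref{cor:zariskiClosureUnionReduced}); and the singularity of the origin is obtained not from positive codimension (which does not imply the cone point is singular --- a linear subspace is a counterexample) but from the degree of $\overline{\mathcal{M}}$ being larger than one (Lemma~\ref{lem:zeroSing}). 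Without the birationality fact or an equivalent substitute, your proof does not go through.
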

The conditions for $\bm{k}'$ signify that the truncation of the architecture $(\bm{k'},\bm{s})$ to the first $l-1$ layers has at most the same end-to-end filter size as the corresponding truncation of $(\bm{k},\bm{s})$ and the same filter size when all layers are considered. 
The fact that these are precisely the architectures which satisfy $\overline{\mathcal{M}}_{\boldsymbol{k}',\boldsymbol{s}} \subsetneq \overline{\mathcal{M}}_{\boldsymbol{k},\boldsymbol{s}}$ is shown in Corollary~\ref{cor:strictContainment}. 

We denote the Euclidean relative boundary of the function space by $\partial \FS$.
Since the function space is Euclidean closed (see Theorem \ref{thm:dim-mu}), we have $\partial \FS \subseteq \FS$.  
Recall that $\partial \FS$ consists of all points in $\FS$ that are limits of sequences of points in $\CFS \setminus \FS$.
We distinguish between two types of boundary points, using the fact that the reduced architecture $(\tilde{\bm k}, \tilde{\bm s})$ of $(\bm k, \bm s)$ satisfies 
 $\FS \subseteq {\mathcal M}_{\tilde{\bm k}, \tilde{\bm s}} \subseteq \CFS = \overline{{\mathcal M}}_{\tilde{\bm k}, \tilde{\bm s}}$ as in \eqref{eq:reduction}:
\begin{itemize}
    \item \emph{Reduced boundary points $\partial \FS^R$:} limits in $\FS$ of sequences of points in $\overline{\mathcal M}_{{\bm k}, {\bm s}} \setminus {\mathcal M}_{\tilde{\bm k}, \tilde{\bm s}}$.
    \item \emph{Stride-one boundary points $\partial \FS^S$:} limits in $\FS$ of sequences of points in ${\mathcal M}_{\tilde{\bm k}, \tilde{\bm s}} \setminus \FS$.
\end{itemize}

The boundary of LCN function spaces $\mathcal{M}_{\bm k, \bm 1}$ with stride-one architectures has been fully characterized in terms of the real-root structure of the polynomials in $\pi_1(\mathcal{M}_{\bm k, \bm 1})$ \cite[Proposition 4.4]{LCN}. 
The relative boundary in the case of strides larger than one is significantly more complicated. 
In particular, in contrast to the stride-one case, the reduced boundary points form a semialgebraic set of unexpectedly low dimension (i.e., $\dim \partial \FS^R < \dim \FS - 1$). This can be seen in Figure \ref{fig:singulAR}, for the architecture with $\bm{k}=(2,2,2)$ and $\bm{s}=(1,2,1)$, where the reduced boundary has codimension two while the stride-one boundary has codimension one. 
\begin{theorem}
\label{thm:boundaryProperties}
Let $(\bm k, \bm s)$ be an LCN architecture with reduced architecture $(\tilde{\bm k}, \tilde{\bm s})$.
\begin{enumerate}[a)]
\item Reduced boundary points are on the relative boundary of the reduced architecture; in fact $\partial \FS^R = \partial {\mathcal M}_{\tilde{\bm k}, \tilde{\bm s}} \cap \FS$.
\item Reduced boundary points are contained in 
lower-dimensional  
LCN function spaces; more precisely, we have  
$\partial \FS^R \subseteq  \bigcup_{\tilde{\bm k}' \in \tilde{K}} \mathcal{M}_{\tilde{\bm k}', \tilde{\bm s}} \subseteq  {\rm Sing}(\overline{\mathcal{M}}_{\bm k, \bm s})$, where $\tilde{K} := K_{\tilde{\bm k}, \tilde{\bm s}}$ is defined in Theorem \ref{thm:sing}.
\item The dimension of $\partial \FS^R$ is at most $\dim \FS - \min\{s_i: s_i > 1\}$; in particular, its relative co-dimension is strictly larger than $1$.
\end{enumerate}
\end{theorem}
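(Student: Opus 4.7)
The plan is to prove the three parts in order, each relying on the previous.

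\textbf{Part (a).} From \eqref{eq:reduction} we have $\overline{\mathcal{M}}_{\bm k, \bm s} = \overline{\mathcal{M}}_{\tilde{\bm k}, \tilde{\bm s}}$, so
$\overline{\mathcal{M}}_{\bm k, \bm s} \setminus \mathcal{M}_{\tilde{\bm k}, \tilde{\bm s}} = \overline{\mathcal{M}}_{\tilde{\bm k}, \tilde{\bm s}} \setminus \mathcal{M}_{\tilde{\bm k}, \tilde{\bm s}}$. Taking Euclidean limits and intersecting with $\FS$ (which is Euclidean-closed by Theorem~\ref{thm:dim-mu} and satisfies $\FS \subseteq \mathcal{M}_{\tilde{\bm k}, \tilde{\bm s}}$) yields $\partial\FS^R = \partial \mathcal{M}_{\tilde{\bm k}, \tilde{\bm s}} \cap \FS$ directly from the definition of the relative boundary.

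\textbf{Part (b).} The second inclusion is obtained by applying Theorem~\ref{thm:sing} to the reduced architecture: since $\tilde K = K_{\tilde{\bm k}, \tilde{\bm s}}$ and $\overline{\mathcal{M}}_{\bm k, \bm s} = \overline{\mathcal{M}}_{\tilde{\bm k}, \tilde{\bm s}}$, each $\mathcal{M}_{\tilde{\bm k}', \tilde{\bm s}}$ with $\tilde{\bm k}' \in \tilde K$ lies in the singular locus. For the first inclusion, take $w \in \partial \mathcal{M}_{\tilde{\bm k}, \tilde{\bm s}}$ with real factorization $\pi_1(w) = \prod_l \pi_{\tilde S_l}(\tilde w_l)$. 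Nearby points in $\overline{\mathcal{M}}_{\tilde{\bm k}, \tilde{\bm s}} \setminus \mathcal{M}_{\tilde{\bm k}, \tilde{\bm s}}$ admit only complex (non-real) factorizations of the same combinatorial type; tracing the roots of these factorizations as they converge shows that some factor $\pi_{\tilde S_l}(\tilde w_l)$ must absorb degrees from later factors at the limit, producing effective filter sizes $\tilde k'_l$. The arithmetic of this root migration yields exactly the two defining conditions of $\tilde K$: total weighted degree $\sum(\tilde k_i'-1)\tilde S_i$ is preserved, and right partial sums may only grow under the migration of degree from later to earlier layers.

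\textbf{Part (c).} The containment from (b) does not suffice, since $\dim \mathcal{M}_{\tilde{\bm k}', \tilde{\bm s}}$ may exceed $\dim \FS - \min\{s_i : s_i>1\}$; in Example~\ref{ex:k222s121} the space $\mathcal{M}_{(1,3),(2,1)}$ has codimension $1$ in the reduced function space while the reduced boundary has codimension $2$. The strengthening comes from an additional \emph{discriminant-type} condition: to lie on $\partial \mathcal{M}_{\tilde{\bm k}, \tilde{\bm s}}$, a point in $\mathcal{M}_{\tilde{\bm k}', \tilde{\bm s}}$ must also be a limit of non-realizable factorizations. In Example~\ref{ex:reduced-(3,2)} this is visible as $\{B=D=0\}\cap\{C^2=4AE\}$: the structural condition $B=D=0$ expresses the filter-size drop corresponding to $(1,3)\in\tilde K$, and $C^2=4AE$ is the stride-one boundary condition pulled back to the lifted variables $(x^{\tilde S_2}, y^{\tilde S_2}) = (x^2, y^2)$ of the second factor. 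Together these give relative codimension $2 = \tilde s_1 = \min\{s_i : s_i>1\}$.

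In the general case I would (i) identify the layer $l^*$ whose stride $\tilde s_{l^*}$ realizes $\min\{s_i : s_i>1\}$; (ii) show that the structural codimension from the filter-size drop plus the codimension of the stride-one boundary of the lifted factor (via \cite[Proposition~4.4]{LCN}) totals at least $\tilde s_{l^*}$; and (iii) verify this uniformly over all $\tilde{\bm k}' \in \tilde K$ by case analysis on which layer first witnesses the degeneration. The final assertion (relative codimension strictly larger than $1$) then follows because $\min\{s_i : s_i>1\}\geq 2$. The main obstacle is step (iii): different elements of $\tilde K$ correspond to different migration patterns across multiple layers simultaneously, so a uniform bookkeeping argument is needed to show the combined codimension is always bounded below by $\min\{s_i : s_i>1\}$.
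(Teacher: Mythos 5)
Your part (a) is correct and is exactly the paper's argument: both $\partial\FS^R$ and $\partial\mathcal M_{\tilde{\bm k},\tilde{\bm s}}$ are defined as limits of sequences from the same set $\overline{\mathcal M}_{\bm k,\bm s}\setminus\mathcal M_{\tilde{\bm k},\tilde{\bm s}}=\overline{\mathcal M}_{\tilde{\bm k},\tilde{\bm s}}\setminus\mathcal M_{\tilde{\bm k},\tilde{\bm s}}$, differing only in where the limit is required to land. For part (b), the second inclusion is fine, but your ``root migration'' sketch for the first inclusion is not a proof. The clean route is to observe that the complement $\overline{\mathcal M}_{\tilde{\bm k},\tilde{\bm s}}\setminus\mathcal M_{\tilde{\bm k},\tilde{\bm s}}$ is \emph{already} contained in $\bigcup_{\tilde{\bm k}'\in\tilde K}\mathcal M_{\tilde{\bm k}',\tilde{\bm s}}$ (this is Corollary~\ref{cor:zariskiClosureUnionReduced}, whose proof hinges on Lemma~\ref{lem:sFactorsReal}: the $\bm s$-factorization of a real polynomial is automatically real, so a real polynomial in the Zariski closure that fails to factor according to $(\tilde{\bm k},\tilde{\bm s})$ must have $\bm s$-factor degrees realizing some $\tilde{\bm k}'\in\tilde K$); since each $\mathcal M_{\tilde{\bm k}',\tilde{\bm s}}$ is Euclidean closed, the finite union is closed and contains all limit points. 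Your version argues only about what happens ``at the limit,'' which conflates the two steps and leaves the arithmetic unverified.

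The genuine gap is part (c), and you acknowledge it yourself. You correctly diagnose that (b) alone is insufficient (the codimension of $\mathcal M_{(1,3),(2,1)}$ really is $1$, not $2$) and that an extra discriminant-type condition must be imposed, but you never establish what that condition is or why it holds for every reduced boundary point. The paper's resolution has two ingredients you are missing. First, a trichotomy (Proposition~\ref{prop:boundaryReducedToTwo}): every nonzero $w\in\partial\mathcal M_{\tilde{\bm k},\tilde{\bm s}}$ either (i) has a real double $S_l$-hyperroot, or (ii) has $\bm s$-factor degrees violating at least two of the defining inequalities strictly, or (iii) violates one inequality by at least $2S_l$. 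Proving this requires first characterizing the two-layer boundary (Theorem~\ref{thm:boundary2layer}, via real $s$-hyperroot multiplicities) and then a degree-bookkeeping reduction of the $L$-layer case to a two-layer sub-architecture; neither appears in your proposal. Second, the quantitative bound comes from Lemma~\ref{lem:intergerDimension}: for any $\tilde{\bm k}'\in\tilde K$ the dimension drop is at least $\min_i\{\tilde s_i\}-1$, and it is \emph{strict} under conditions (ii) or (iii), while case (i) gains the extra codimension by intersecting with the discriminant hypersurface (which cannot contain any irreducible $\overline{\mathcal M}_{\tilde{\bm k}',\tilde{\bm s}}$). Note also that the $\min\{s_i:s_i>1\}$ in the bound arises from a telescoping induction over \emph{all} layers in Lemma~\ref{lem:intergerDimension}, not from isolating a single layer $l^*$ realizing the minimum as your step (i) suggests; degree can migrate through several layers at once, which is precisely the ``uniform bookkeeping'' you flag as unresolved.
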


Our discussion until this moment has focused on the implicit geometry of the function space. In practice, we are also interested in the parameterization of this space by the network's parametrization map. 
Critical points of  
a loss function can in fact arise from degenerate points of the parameterization (called ``spurious critical points'' in~\cite{geometryLinearNets}). We characterize these points in the following result. 

\begin{theorem}
\label{thm:crit-gene}
Let $L>1$.
A filter tuple $\theta=(w_1,\ldots,w_L)$ is a critical point of the parametrization map $\mu_{{\bm k},{\bm s}}$ if and only if there exists a layer $l \in \{1,\ldots,L\}$ such that $w_l=0$ or the polynomials $\pi_{S_l}(w_l)$ and $ \pi_{S_{l-1}}(w_{l-1})\cdots \pi_1(w_1)$ have a non-trivial common factor of the form $Q(x^{S_l}, y^{S_l})$.
In particular, critical values \rev{$w$ correspond to polynomials $\pi_1(w)$ in the discriminant hypersurface, that is, the set of polynomials with a double root.}
\end{theorem}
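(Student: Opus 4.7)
\medskip
\noindent\textbf{Proof plan.} The plan is to analyze the differential $d\mu_{\bm k,\bm s}$ at $\theta = (w_1,\ldots,w_L)$ via the Leibniz rule on polynomial multiplication. Writing $P_l := \pi_{S_l}(w_l)$, $A := \prod_l P_l$, and $F_l := A/P_l = \prod_{j\neq l} P_j$, one obtains $(\pi_1 \circ d\mu_\theta)(v_1,\ldots,v_L) = \sum_l \pi_{S_l}(v_l)\,F_l$. Since $\pi_1$ is a linear isomorphism, $\theta$ is critical if and only if this map has rank strictly below $\dim \FS = \sum_l k_l - (L-1)$, equivalently if and only if the syzygy module $Z_\theta := \{(T_1,\ldots,T_L) \in \prod_l \mathbb{R}[x^{S_l},y^{S_l}]_{k_l-1} : \sum_l T_l F_l = 0\}$ has dimension exceeding $L-1$. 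An always-present $(L-1)$-dimensional ``generic'' subspace of $Z_\theta$ is $\{(c_l P_l)_l : \sum_l c_l = 0\}$, since $\sum_l c_l P_l F_l = (\sum_l c_l)\,A = 0$. So $\theta$ is critical precisely when $Z_\theta$ contains an \emph{extra} syzygy outside this generic span. The case $w_l = 0$ for some $l$ is immediate: then $F_j = 0$ for every $j\neq l$, so the image of $d\mu_\theta$ is at most $k_l$-dimensional, strictly below $\dim \FS$ (since $L>1$ and all $k_i > 1$).

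Assume now $w_l \neq 0$ for all $l$. For the ``if'' direction of the common-factor statement, suppose $Q(X,Y) \in \mathbb{R}[X,Y]$ has degree $d \geq 1$ and $Q(x^{S_l},y^{S_l})$ divides both $P_l$ and $A_{l-1} := \prod_{j<l} P_j$. Write $P_l = Q(x^{S_l},y^{S_l})\, P_l^\star$. For each $\rho \in \mathbb{R}[x^{S_l},y^{S_l}]_d$, I would set $T_l := -P_l^\star \rho$, $T_j := 0$ for $j>l$, and solve the residual equation $\sum_{j<l} T_j F_j^{<l} = \rho \cdot (A_{l-1}/Q)$ (where $F_j^{<l} := \prod_{i<l,\,i\neq j} P_i$) for $T_j \in \mathbb{R}[x^{S_j},y^{S_j}]_{k_j-1}$, using the differential of the sub-LCN with filter sizes $(k_1,\ldots,k_{l-1})$ and strides $(s_1,\ldots,s_{l-1})$. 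Solvability reduces to a partial-fraction-type decomposition of $\rho/Q$ as $\sum_{j<l}\tau_j/P_j$ respecting both the real structure and the stride constraints; the fact that $Q(x^{S_l},y^{S_l}) \mid A_{l-1}$ in $\mathbb{C}[x,y]$ is what makes this work. A dimension count then shows the construction gives an injection of $\mathbb{R}[x^{S_l},y^{S_l}]_d / \mathbb{R}\cdot Q$ into $Z_\theta$ modulo the generic span, since a constructed syzygy is generic iff $T_l = -P_l^\star \rho$ equals $c_l P_l = c_l Q(x^{S_l},y^{S_l}) P_l^\star$ for some $c_l$, i.e.\ $\rho \in \mathbb{R}\cdot Q$. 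Hence $\dim Z_\theta \geq (L-1) + d \geq L$, so $\theta$ is critical.

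For the converse direction I would induct on $L$. Given an extra syzygy $(T_1,\ldots,T_L) \in Z_\theta$ not in the generic span, clear denominators in $\sum_l T_l/P_l = 0$ and work over $\mathbb{C}[x,y]$. If $P_L$ and $A_{L-1}$ share no common factor of the form $Q(x^{S_L},y^{S_L})$, I would use unique factorization in $\mathbb{C}[x,y]$ together with the stride constraint $T_L \in \mathbb{R}[x^{S_L},y^{S_L}]$ to show $T_L$ must be a scalar multiple of $P_L$: each linear $\mathbb{C}[x,y]$-factor appearing in $T_L$ but not in $P_L$ is forced (by the stride constraint) to sit inside a full orbit under $(x,y)\mapsto (\zeta x,\zeta y)$ over the $S_L$-th roots of unity $\zeta$, and each such orbit must match a factor of $A_{L-1}$ compatible with $\mathbb{R}[x^{S_L},y^{S_L}]$, which the hypothesis rules out. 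Subtracting the resulting generic syzygy reduces to an extra syzygy on layers $1,\ldots,L-1$, to which the induction applies to produce a common factor at some earlier layer. The main obstacle of the proof is exactly this transfer between $\mathbb{C}[x,y]$-divisibility and the subring $\mathbb{R}[x^{S_l},y^{S_l}]$, and the combinatorics of how factors must interleave with all the layers' stride constraints.

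Finally, the statement on critical values is immediate. If $w_l = 0$, then $\mu(\theta) = 0$ lies trivially in the discriminant. Otherwise, by unique factorization in $\mathbb{C}[x,y]$, every linear factor of $Q(x^{S_l},y^{S_l})$ divides both $P_l$ and some $P_j$ with $j<l$ (since $Q(x^{S_l},y^{S_l})\mid A_{l-1}$), so $\pi_1(\mu(\theta)) = \prod_l P_l$ has a repeated linear factor and therefore lies in the discriminant hypersurface.
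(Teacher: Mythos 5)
Your overall framework is sound and runs parallel to the paper's: your ``generic'' syzygies $\{(c_lP_l)_l:\sum_l c_l=0\}$ are exactly the scaling directions that the paper quotients out by passing to the projective map $\nu$ (Lemma~\ref{lem:projectiveCrit}), your zero-filter case is Lemma~\ref{lem:zeroFilterCrit}, and your converse direction (peel off the last layer, show $T_L=cP_L$ when there is no common factor at layer $L$, subtract a generic syzygy and induct) is in substance Propositions~\ref{prop:twoLayerCritMu} and~\ref{prop:lessLayerCritMu}. The discriminant remark is also fine.

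The genuine gap is in the ``if'' direction: the solvability of the residual equation $\sum_{j<l}T_jF_j^{<l}=\rho\cdot A_{l-1}/Q$ is asserted rather than proved, and it is in fact \emph{false} at the stated layer $l$ in general. Take $\bm k=(3,3,2)$, $\bm s=(2,2,1)$ (so $S_2=2$, $S_3=4$) with $P_1=x^2$, $P_2=x^2(x^2+y^2)$, $P_3=x^4$. Then $Q(x^4,y^4)=x^4$ divides both $P_3$ and $A_2=P_2P_1=x^4(x^2+y^2)$, so your hypothesis holds at $l=3$; but every element $T_1P_2+T_2P_1=x^2\bigl(T_1(x^2+y^2)+T_2\bigr)$ of $\mathrm{im}(d\mu^{<3})$ is divisible by $x^2$, whereas $\rho A_2/Q=(\alpha x^4+\beta y^4)(x^2+y^2)$ is divisible by $x^2$ only when $\beta=0$, i.e.\ only for $\rho\in\RR\cdot Q$. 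So your construction yields no extra syzygy at $l=3$, even though $\theta$ is critical --- an extra syzygy is $(y^2,\,-y^2(x^2+y^2),\,0)$, coming from the shared power of $x$ between layers $1$ and $2$. This is exactly the dichotomy the paper must handle: when the common factor is not a power of $x^{S_l}$ or $y^{S_l}$, it splits into pairwise coprime hyperroot pieces $r_1\cdots r_{l-1}$ distributed over the lower layers, and Lemma~\ref{lem:imageDiffMu} (whose induction is powered by the two-layer regularity criterion) supplies the partial-fraction decomposition you need; when it \emph{is} such a power, that coprime splitting is impossible, and one must instead locate two lower layers sharing a factor $x^{S_j}$ and build the syzygy directly there (second case of the proof of Proposition~\ref{prop:lessLayerCritMu}). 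The phrase ``the fact that $Q(x^{S_l},y^{S_l})\mid A_{l-1}$ in $\CC[x,y]$ is what makes this work'' therefore does not suffice; both ingredients must be supplied. (A smaller slip in the converse: the relevant orbit of roots is under $(x,y)\mapsto(\zeta x,y)$ for $\zeta^{S_L}=1$, not $(x,y)\mapsto(\zeta x,\zeta y)$, which acts trivially on projective roots; and the factors to track are those of $P_L$ not dividing $A_{L-1}$, rather than factors of $T_L$ not in $P_L$.)
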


Finally, we investigate the minimization of the squared error on LCNs. 
 Given some training data $\mathcal D = \{(\texttt{x}^{(i)},\texttt{y}^{(i)})\in \RR^{d_0} \times \RR^{d_L} \colon i=1,\ldots,N\}$, 
 the squared error loss on the function space is 
$$
\ell_{\mathcal D}(w):=\sum_{i=1}^N \|\texttt{y}^{(i)} - T_{w,s} \texttt{x}^{(i)}\|^2,
$$
where $T_{w,s} \in \RR^{d_L \times d_0}$ is the generalized Toeplitz matrix associated with a filter $w \in \RR^k$ and with stride $s$ (note that $d_0 = s(d_L-1)+k$).
When training an LCN with data $\mathcal{D}$, we minimize the squared error loss $\mathcal{L}^\mathcal{D}_{\bm k, \bm s} := \ell_\mathcal{D} \circ \mu_{\bm k, \bm s}$ on the parameter space. Commonly we  use gradient descent to minimize this objective function and thus we are interested in its critical points.
\begin{theorem}
\label{thm:exposed}
Let $N \geq k$. 
For 
almost all\footnote{For all points except those contained in some proper algebraic subset of $(\RR^{d_0} \times \RR^{d_L})^N$.} 
data $\mathcal{D} \in (\RR^{d_0} \times \RR^{d_L})^N$, every critical point $\theta$ of $\mathcal{L}^{\mathcal{D}}_{\bm k, \bm s}$ satisfies one of the following:
    \begin{enumerate}
        \item $\theta$ is a regular point of $\mu_{\bm k, \bm s}$ and $\mu_{\bm k, \bm s}(\theta)$ is a smooth, relative interior point of $\mathcal{M}_{\bm k, \bm s}$ (i.e., $\mu_{\bm k, \bm s}(\theta) \notin \mathrm{Sing}(\overline{\mathcal{M}}_{\bm k, \bm s})$ and $\mu_{\bm k, \bm s}(\theta) \notin \partial \FS$), or
        \item  $\mu_{\bm k, \bm s}(\theta)=0$, or
        \item $\theta$ is a critical point of  
        $(\mu_{\tilde {\bm k}^1, {\bm 1}}, \ldots , \mu_{\tilde {\bm k}^M, {\bm 1}})$. 
    \end{enumerate}
\end{theorem}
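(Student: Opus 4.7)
The plan is to combine the factorization $\mu_{\bm k, \bm s} = \mu_{\tilde{\bm k}, \tilde{\bm s}} \circ (\mu_{\tilde{\bm k}^1, \bm 1}, \ldots, \mu_{\tilde{\bm k}^M, \bm 1})$ from \eqref{eq:stride-one-reduced-factorization} with Theorem~\ref{thm:crit-gene}, the boundary and singularity descriptions of Theorems~\ref{thm:sing}~and~\ref{thm:boundaryProperties}, and an incidence-variety dimension count over the data space.

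First I would reduce to the reduced architecture. Let $\theta$ be a critical point of $\mathcal{L}^{\mathcal D}_{\bm k, \bm s}$ not of type~(3), so $\theta$ is a regular point of the stride-one tuple. Each stride-one function space is thick by Theorem~\ref{thm:thick-closed}(a), hence the tuple's differential is surjective at $\theta$, and the chain rule forces $\tilde\theta := (\mu_{\tilde{\bm k}^1, \bm 1}, \ldots, \mu_{\tilde{\bm k}^M, \bm 1})(\theta)$ to be a critical point of $\ell_{\mathcal D} \circ \mu_{\tilde{\bm k}, \tilde{\bm s}}$. I would then split cases via Theorem~\ref{thm:crit-gene}: if $\tilde\theta$ is critical for $\mu_{\tilde{\bm k}, \tilde{\bm s}}$, then either some $\tilde w_l = 0$, giving $\mu_{\bm k, \bm s}(\theta) = 0$ (case~(2)), or the factors share a non-trivial common factor, in which case the last statement of Theorem~\ref{thm:crit-gene} places $\mu_{\bm k, \bm s}(\theta)$ in the discriminant hypersurface $D \subsetneq \RR^k$; if instead $\tilde\theta$ is regular for $\mu_{\tilde{\bm k}, \tilde{\bm s}}$, then $\theta$ is regular for $\mu_{\bm k, \bm s}$ (composition of maps of compatible maximal rank), the image $\mu_{\bm k, \bm s}(\theta)$ is a smooth point of $\overline{\FS}$, and Theorem~\ref{thm:boundaryProperties}(b) excludes the reduced boundary.

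The remaining bad scenarios are $\mu_{\bm k, \bm s}(\theta) \in D$ and $\mu_{\bm k, \bm s}(\theta) \in \partial \FS^S$; both confine $\mu_{\bm k, \bm s}(\theta)$ to a proper Zariski-closed subvariety $V \subsetneq \overline{\FS}$ (for the stride-one boundary, via the double-root characterization of the boundary of each stride-one sub-architecture in \cite[Proposition~4.4]{LCN}). I would rule these out by a dimension count. Writing the loss as $\ell_{\mathcal D}(w) = (w - w^*)^\top A(w - w^*) + \mathrm{const}$, where $A$ is positive definite for $N \ge k$ and generic inputs, the critical equation at a regular $\theta$ reduces to $A(w - w^*) \perp T_w \overline{\FS}$. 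The incidence $\{(w^*, w) : w \in V,\ A(w - w^*) \perp T_w \overline{\FS}\}$ has dimension $\dim V + (k - \dim \overline{\FS}) < k$, so its projection to $w^*$-space, and via the linear surjection $(x, y) \mapsto w^* = A^{-1}b$ to data space, is contained in a proper algebraic subset. The critical-of-$\mu$ sub-case is handled analogously, using the explicit structural constraint $\mu_{\bm k, \bm s}(\theta) \in D$ provided by Theorem~\ref{thm:crit-gene}.

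The hardest part I expect is the dimension count for the stride-one boundary, whose Zariski closure depends intricately on the sub-architectures: one must verify the count is strict on each of its irreducible components and correctly track the generic fiber dimension of $\mu_{\bm k, \bm s}$ over $V$. Also delicate is the critical-of-$\mu$ case, where $\mathrm{Im}(d\mu|_\theta)$ is a proper subspace of $T_{\mu(\theta)}\overline{\FS}$, so that the general-position argument must be refined to exploit the specific constraint $\mu_{\bm k, \bm s}(\theta) \in D$ rather than the full tangent-normal relation.
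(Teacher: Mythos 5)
Your overall architecture matches the paper's: reduce to the reduced architecture via \eqref{eq:stride-one-reduced-factorization}, rewrite the loss as a quadratic form that is positive definite for generic data with $N\ge k$ (this is the paper's Lemma~\ref{lem:seminorm} and Corollary~\ref{cor:squaredLossInnerProduct}), and then bound the dimension of the union of normal spaces $\mathrm{im}(d_\theta\mu)^{\perp}+\mu(\theta)$ over the bad parameter sets (the paper's notion of ``exposed''). The regular-$\theta$ part of your count, $\dim V + (k-\dim\overline{\FS})<k$, is exactly Corollary~\ref{cor:BoundarySingNotExposed}.

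However, there is a genuine gap in the critical-of-$\mu$ case, and it is the technical heart of the theorem. You propose to handle it ``analogously, using the explicit structural constraint $\mu_{\bm k,\bm s}(\theta)\in D$,'' but this constraint provably does not suffice. At a critical point $\theta$ with rank deficiency $\delta := \dim\mathcal{M}-\rank(d_\theta\mu)\ge 1$, the normal space has dimension $k-\dim\mathcal{M}+\delta$, while $D\cap\overline{\FS}$ has codimension only $1$ in $\overline{\FS}$; the incidence count then gives at best $(\dim\mathcal{M}-1)+(k-\dim\mathcal{M}+\delta)=k-1+\delta\ge k$, so the bad set of data filters is not confined to a proper subvariety. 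The paper's resolution works in \emph{parameter} space, not function space: it stratifies $\mathrm{Crit}^\circ(\mu)$ by the common hyperroot degree $\delta$ (Definition~\ref{def:chd}), proves $\rank(d_\theta\mu)\ge\dim\mathcal{M}-\mathrm{chd}(\theta)$ (Proposition~\ref{prop:rankBound}), and crucially shows $\mathrm{codim}(C_\delta)>\delta$ (Proposition~\ref{prop:dimCritBound}) --- an inequality that holds precisely because all strides of the reduced architecture exceed one (in the two-layer base case $\mathrm{codim}(C_\delta)=\delta s_1$). This strict excess of stratum codimension over rank drop is what makes the union of normal spaces lower-dimensional (Theorem~\ref{thm:CritNotExposed}); without it the statement is false, as it indeed is for stride-one architectures. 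A second, more minor, slip: a regular point of $\mu_{\tilde{\bm k},\tilde{\bm s}}$ can still map to a singular point of $\overline{\FS}$, namely when the fiber of $\nu$ is not a singleton (Fact~\ref{fact:birationalmap-sing}), so $\mathrm{Sing}(\overline{\FS})\cap\FS$ must be added to your excluded subvariety $V$ and disposed of by the regular-point count; your claim that regularity of $\tilde\theta$ already yields a smooth image is not correct.
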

Note that the last condition is not possible for reduced architectures. Hence, for reduced architectures, every critical point maps either to zero in the function space $\FS$ or to a smooth interior point of $\FS$ that is a critical point of $\ell_\mathcal{D} |_{\FS}$. In the language of~\cite{geometryLinearNets}, these critical points are ``pure,'' in the sense that they are critical points in function space, rather than being degenerate points of the parameterization map. We remark that in the case of stride-one LCN architectures critical points 
frequently correspond to functions located on the boundary of the function space 
or even  to functions situated in the interior of the function space as spurious (i.e., non-pure) critical points that are induced by the parametrization map; see \cite[Example 5.10]{LCN}. This illustrates the surprising qualitative differences between reduced and stride-one architectures.

\begin{table}[ht]
    \centering \rev{
    \begin{tabular}{cp{11cm}}
        \toprule
        \textbf{Notation} & \textbf{Description} \\
        \midrule
        ${\bm k}=(k_1,\ldots,k_L)$ & sequence of filter sizes for each layer $l=1,\ldots,L$\\[.1cm]
        ${\bm s}=(s_1,\ldots,s_{L-1},1)$ & sequence of strides for each layer $l=1,\ldots,L$ with  $s_L=1$\\[.1cm]
        $({\bm k}, {\bm s})$ & LCN architecture with filters $\bm k$ and strides $\bm s$\\[.1cm]
        $S_l$ & shorthand for $\prod_{i=1}^{l-1}s_i$\\[.1cm]
        $\mathcal M_{\bm k, \bm s}$ & function space of a LCN architecture $(\bm k, \bm s)$, as a subset of $\RR^k$ with $k=\sum_{l=1}^L (k_i-1){S_l} + 1$ \\[.2cm]
        $\pi_s$ & polynomial coefficient map $\RR^k \rightarrow \mathbb{R}[x^s,y^s]_{k-1}$; see~\eqref{eq:polynomials}\\[.1cm]
       $\mu_{\bm k, \bm s}$ & LCN parameterization map $\prod_{i}^L \RR^{k_i} \rightarrow \RR^k$; see~\eqref{eq:parameterization-def}\\[.15cm]
    $(\tilde {\bm k}, \tilde {\bm s})$ & reduced architecture associated with $({\bm k}, {\bm s})$; see Definition~\ref{def:reduction}\\[.15cm]
    $\overline{\mathcal M}_{\bm k, \bm s}$ & Zariski closure of ${\mathcal M}_{\bm k, \bm s}$\\[.1cm]
    $\partial{\mathcal M}_{\bm k, \bm s}$ & Euclidean relative boundary of ${\mathcal M}_{\bm k, \bm s}$ \\
        \bottomrule
    \end{tabular}}
    \caption{Table of symbols and notations.}
    \label{tab:symbols}
\end{table}

\section{The function space and its Zariski closure}
    \label{sec:zariskiClosure}

From now on, to simplify notation, we   omit the index of the function space $\mathcal{M} = \mathcal{M}_{\bm{k},\bm{s}}$ and its parameterization map $\mu = \mu_{\bm{k},\bm{s}}$ when the LCN architecture $(\bm{k},\bm{s})$ is clear from context.
As the function space $\mathcal M$ is always  closed under multiplication by scalars (in other words, it forms a cone in $\RR^k$), it is natural to consider its projectivization, denoted by $\PP(\mathcal{M})$. This means that end-to-end filters that differ only by scalar multiplication are treated as equivalent. Similarly, we can projectivize the ambient space of the filters in each layer and replace the parameterization map $\mu$ with a morphism $\nu$ that composes filters up to scaling:
$$ \nu = \nu_{\bm{k},\bm{s}}: \PP_\mathbb{R}^{k_1-1} \times \cdots \times  \PP_\mathbb{R}^{k_L-1} \to \PP(\mathcal{M}_{\bm{k},\bm{s}}) \subseteq  \PP_\mathbb{R}^{k-1}.$$
\rev{\begin{remark}\label{rmk:segre}
    The map $\nu$ is the composition of a Segre embedding of $\PP_\mathbb{R}^{k_1-1} \times \cdots \times  \PP_\mathbb{R}^{k_L-1}$ followed by a linear projection.
    Moreover, $\nu$ itself is  a Segre embedding of $\PP_\mathbb{R}^{k_1-1} \times \cdots \times  \PP_\mathbb{R}^{k_L-1}$ (i.e., the linear projection is not required) if and only if 
    $S_l > \sum_{i=1}^{l-1}  (k_i-1) S_i$ for every layer $l$.
    Note that the latter condition appears in the last statement of Theorem \ref{thm:sing}.
    That statement can now be reinterpreted as follows:
    The projectivization of the function space is smooth if and only if it is a Segre variety (and not a proper linear projection from a Segre variety). 
\end{remark}}

\rev{The} 
projective setting has several technical advantages, including the fact that the image of a projective morphism is Zariski closed over the complex numbers~\cite[II,§4, Theorem 4.9]{hartshorne}.
Moreover, the morphism $\nu$ has finite fibers (see Remark \ref{rmk:finite-morphism-nu}), 
which enables us to use techniques from birational geometry.

In order to leverage these advantages, we introduce the \emph{complex function space} $\mathcal{M}^{\mathbb{C}}=\mathcal{M}^{\mathbb{C}}_{\boldsymbol{k},\boldsymbol{s}}$, consisting of complex filters that can be factorized according to the network architecture with complex filters in each layer. We also define the complex version of the projective morphism $\nu$:
$$ 
\nu^{\mathbb{C}} = \nu_{\boldsymbol{k},\boldsymbol{s}}^{\mathbb{C}}: \PP_\mathbb{C}^{k_1-1} \times \cdots \times  \PP_\mathbb{C}^{k_L-1} \to \PP(\mathcal{M}_{\boldsymbol{k},\boldsymbol{s}}^{\mathbb{C}}) \subseteq  \PP_\mathbb{C}^{k-1}.$$
It now follows from basic real algebraic geometry that the real Zariski closure $\overline{\mathcal{M}}$ of the real function space $\mathcal{M}$ is the set of real points in $\mathcal{M}^{\mathbb{C}}$. We provide a formal proof of this fact and summarize these observations using the identification of filters with polynomials.

\begin{proposition} If $\pi_1$ is the map defined in \eqref{eq:polynomials} and $\pi_1^\mathbb{C}$ is its complex counterpart, then:
\label{prop:functionspace-poly}
    \begin{align*}
          \pi_1(\mathcal{M}_{\boldsymbol{k},\boldsymbol{s}}) &= \left\lbrace P \in \mathbb{R}[x,y]_{k-1} \quad : \quad P =  P_L \cdots P_1, \quad P_i \in \mathbb{R}[x^{S_i},y^{S_i}]_{k_i-1} \right\rbrace, \\
                  \pi_1(\overline{\mathcal{M}}_{\boldsymbol{k},\boldsymbol{s}}) &= \left\lbrace P \in \mathbb{R}[x,y]_{k-1} \quad : \quad P =  P_L \cdots P_1, \quad P_i \in \mathbb{C}[x^{S_i},y^{S_i}]_{k_i-1} \right\rbrace,\\
         \pi_1^{\mathbb{C}}({\mathcal{M}}^\mathbb{C}_{\boldsymbol{k},\boldsymbol{s}}) =\pi_1^{\mathbb{C}}( \overline{\mathcal{M}}^\mathbb{C}_{\boldsymbol{k},\boldsymbol{s}}) &= \left\lbrace P \in \mathbb{C}[x,y]_{k-1} \quad : \quad P =  P_L \cdots P_1, \quad P_i \in \mathbb{C}[x^{S_i},y^{S_i}]_{k_i-1} \right\rbrace.
    \end{align*}    
\end{proposition}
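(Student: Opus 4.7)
The plan is to establish the three identities in sequence, using Proposition~\ref{prop:poly-mult}, the cited fact that images of projective morphisms over $\mathbb{C}$ are Zariski closed, and the standard principle that a polynomial identity on $\mathbb{R}^n$ extends to $\mathbb{C}^n$. The first identity is a reformulation of Proposition~\ref{prop:poly-mult}: since $\pi_{S_l}$ is an isomorphism onto $\mathbb{R}[x^{S_l},y^{S_l}]_{k_l-1}$, the description $\mathcal{M} = \mathrm{im}(\mu)$ translates directly into the claimed factorization description of $\pi_1(\mathcal{M})$. An identical argument over $\mathbb{C}$ yields the first equality of the third identity.

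For the second equality $\pi_1^{\mathbb{C}}(\mathcal{M}^{\mathbb{C}}) = \pi_1^{\mathbb{C}}(\overline{\mathcal{M}}^{\mathbb{C}})$, it suffices to show that $\mathcal{M}^{\mathbb{C}}$ is already Zariski closed in $\mathbb{C}^k$. By the cited theorem, $\nu^{\mathbb{C}}$ has Zariski-closed image in $\PP_{\mathbb{C}}^{k-1}$, so $\PP(\mathcal{M}^{\mathbb{C}})$ is projectively closed; since $\mathcal{M}^{\mathbb{C}}$ is the affine cone over this image (the parametrization being homogeneous in each factor), it is cut out by the same homogeneous polynomials and hence Zariski closed in $\mathbb{C}^k$.

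The main content is the middle identity, which via $\pi_1$ reduces to showing
\[
\overline{\mathcal{M}} \;=\; \mathcal{M}^{\mathbb{C}} \cap \mathbb{R}^k.
\]
For ``$\subseteq$'', the map $\mu^{\mathbb{C}}$ commutes with complex conjugation (as $\mu$ has real coefficients), so $\mathcal{M}^{\mathbb{C}}$ is conjugation-invariant; hence its complex vanishing ideal is generated by real polynomials (combine any $f$ with $\bar f$), and $\mathcal{M}^{\mathbb{C}} \cap \mathbb{R}^k$ is a real algebraic set containing $\mathcal{M}$, thus containing $\overline{\mathcal{M}}$. For ``$\supseteq$'', let $w \in \mathcal{M}^{\mathbb{C}} \cap \mathbb{R}^k$ and let $f \in \mathbb{R}[w_1,\ldots,w_k]$ vanish on $\mathcal{M}$. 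Then $f \circ \mu$ is a polynomial on $\mathbb{R}^{k_1+\cdots+k_L}$ that vanishes identically, hence is the zero polynomial; therefore $f \circ \mu^{\mathbb{C}} \equiv 0$, so $f$ vanishes on all of $\mathcal{M}^{\mathbb{C}}$, in particular at $w$. Since this holds for every real polynomial vanishing on $\mathcal{M}$, we conclude $w \in \overline{\mathcal{M}}$. Pulling this through $\pi_1$ yields the middle identity: a real $P \in \mathbb{R}[x,y]_{k-1}$ lies in $\pi_1(\overline{\mathcal{M}})$ iff its coefficient vector lies in $\mathcal{M}^{\mathbb{C}} \cap \mathbb{R}^k$, iff $P$ admits a complex sparse factorization.

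The main conceptual obstacle is keeping the real and complex Zariski closures straight; both inclusions above rest on exploiting the real-coefficient structure of $\mu$ together with the Zariski density of $\mathbb{R}^n$ in $\mathbb{C}^n$. Everything else is bookkeeping through the coefficient isomorphisms $\pi_1$ and $\pi_1^{\mathbb{C}}$.
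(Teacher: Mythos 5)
Your proposal is correct and follows essentially the same route as the paper: the first and third identities come from Proposition~\ref{prop:poly-mult} and the Zariski-closedness of the image of the projective morphism $\nu^{\mathbb{C}}$, and the middle identity reduces to $\overline{\mathcal{M}} = \mathcal{M}^{\mathbb{C}} \cap \mathbb{R}^k$, proved via the conjugation/real-part trick for one inclusion and the vanishing of $f \circ \mu^{\mathbb{C}}$ (from its vanishing on the Zariski-dense real points) for the other. The paper merely packages the same ingredients slightly differently, by first proving the general fact $\overline{X}^{\mathbb{R}} = \overline{X}^{\mathbb{C}} \cap \mathbb{R}^n$ and then identifying $\overline{\mathcal{M}}^{\mathbb{C}}$ with $\mathcal{M}^{\mathbb{C}}$.
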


\begin{proof}
It was already observed in \cite[Remark 2.8]{LCN} that an end-to-end filter $w = \mu(w_1, \ldots, w_L)$ of an LCN with strides $\bm{s}=(s_1,\ldots,s_L)$ is the coefficient vector of a polynomial with a sparse factorization as follows:
\begin{align*}
    \pi_1(w) = \pi_{S_L}(w_L) \cdot \pi_{S_{L-1}}(w_{L-1}) \cdots \pi_{S_2}(w_2) \cdot \pi_{S_1}(w_1),
\end{align*}
where $S_i = s_{i-1}\cdots s_1$ for $i>1$ and $S_1 = 1$.
That observation immediately implies the first equality of this proposition, which was already stated as Proposition \ref{prop:poly-mult} in Section~\ref{sec:main-results}. 

The equalities in the last row of the statement are the analog over the complex numbers of the first row, using that the image $\PP(\mathcal{M}^\CC)$ of the map $\nu^\mathbb{C}$ is Zariski closed.
Finally, the last equality implies the claim in the middle row.
This follows from the fact that for any subset $X \subseteq \mathbb{R}^n$,  its Zariski closures $\overline{X}^\mathbb{R}$  inside $\mathbb{R}^n$ and  $\overline{X}^\mathbb{C}$ inside $\mathbb{C}^n$ satisfy $\overline{X}^\mathbb{R} = \overline{X}^\mathbb{C}\cap\mathbb{R}^n$.

That fact can be been seen as follows. The inclusion of vanishing ideals $I_{\mathbb{R}}(X) \subseteq I_{\mathbb{C}}(X)$ implies the reverse inclusion of zero loci $\overline{X}^\mathbb{R} \supseteq \overline{X}^\mathbb{C}\cap\mathbb{R}^n$.
Moreover, for any polynomial $g \in I_{\mathbb{C}}(X)$ that vanishes on $X$, the real polynomials $\operatorname{Real}(g) = \frac{g+\overline{g}}{2}$ and $\operatorname{Imag}(g) = \frac{g-\overline{g}}{2i}$ also vanish on $X$, 
meaning that they also vanish on $\overline{X}^\mathbb{R}$ 
and so does $g = \operatorname{Real}(g) + i \operatorname{Imag}(g)$.
In other words, we have $I_{\mathbb{C}}(X) \subseteq I_{\mathbb{C}}(\overline{X}^\mathbb{R})$, 
which yields  $\overline{X}^\mathbb{R} \subseteq \overline{X}^\mathbb{C}$. 

Now, if $X:= \mathcal{M}$, then $\overline{X}^{\mathbb{R}} = \overline{\mathcal{M}}$ and $\overline{X}^{\mathbb{C}} = {\mathcal{M}}^{\mathbb{C}}$. To see the latter equality, note that $ \mathcal{M}^{\mathbb{C}}$ contains $X$ and since it is Zariski closed, we have $\overline{X}^{\mathbb{C}} \subseteq {\mathcal{M}}^{\mathbb{C}}$. To show the other inclusion, we consider a polynomial  $f \in I_{\mathbb{C}}(X)$ in the complex vanishing ideal of $X$. Then, $f \circ \mu^{\mathbb{C}}$ vanishes on all real inputs, thus $f \circ \mu^{\mathbb{C}} = 0$. This implies $f \in I_{\mathbb{C}}(\mathcal{M}^{\mathbb{C}})$, and therefore, $I(\overline{X}^{\mathbb{C}}) \subseteq I(\mathcal{M}^{\mathbb{C}})$. The
inclusion of ideals yields the reverse inclusion of varieties,   ${\mathcal{M}}^{\mathbb{C}} \subseteq \overline{X}^{\mathbb{C}}$, which finishes the proof.
\end{proof}

\begin{remark}
\label{rmk:finite-morphism-nu}
One direct consequence of Proposition \ref{prop:functionspace-poly} is that each fiber $\nu^{-1}(w)$ or $(\nu^{\mathbb{C}})^{-1}(w)$  is finite, based on the different arrangements of the roots of the polynomial $P=\pi_1^{\mathbb{C}}(w)$ into factors $P=P_L \cdots P_1$ according to the LCN architecture.
\end{remark}

Hence, using the projective morphism $\nu$, we can easily compute the dimension of LCN function spaces and show that they are closed in the Euclidean topology.

\begin{proof}[Proof of Theorem \ref{thm:dim-mu}]
Since the function space $\mathcal{M}$ has a polynomial parameterization, it is semialgebraic by Tarski-Seidenberg.
To see that $\mathcal{M}$ is Euclidean closed, we consider the projective spaces appearing in the map $\nu$ endowed with the quotient topology of the Euclidean topology on their underlying real vector spaces.
Then these spaces are Hausdorff (unlike in the Zariski topology) and compact.
Hence, since $\nu$ is continuous, its image is closed.
Since the function space $\mathcal{M}$ is the affine cone over the image of $\nu$, it is closed in the Euclidean topology. To find the dimension of the function space, we use again the projective morphism~$\nu$. By Remark~\ref{rmk:finite-morphism-nu}, every fiber is zero-dimensional and hence its domain and image have the same dimension by {\cite[II,§3, Exercise 3.22]{hartshorne}}. We conclude that
$$
\dim \mathcal{M}_{\bm{k},\bm{s}}-1  =\dim(\PP(\mathcal{M}_{\bm{k},\bm{s}}))=\dim (\PP^{k_1-1} \times \cdots \times  \PP^{k_L-1}) = k_1+\cdots+k_L- L.
$$
~
\end{proof} 

Our next goal is to prove Theorem~\ref{thm:thick-closed}. For that, we study the root structure of the polynomial factors in Proposition~\ref{prop:functionspace-poly}. 

\begin{definition}
    Given a positive integer $s$, an \emph{$s$-hyperroot} is any binomial of the form $ax^s+by^s$ with $a,b \in \CC$, $(a,b) \neq (0,0)$.
    We say that the $s$-hyperroot is \emph{real} if $a,b \in \RR$.
    An $s$-hyperroot $R$ is \emph{non-real} if $\alpha R$ is not real for any $\alpha \in \CC\setminus\{0\}$.
\end{definition}

\begin{lemma}\label{lem:hyperrootPairs}
    If a non-real $s$-hyperroot $R$ divides a real polynomial $P \in \RR[x,y]$, then its complex conjugate $\overline{R}$ divides $\frac{P}{R}$.
\end{lemma}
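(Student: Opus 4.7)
The plan is to deduce the lemma from two observations: (i) $\overline{R}$ itself divides $P$ in $\CC[x,y]$, and (ii) $R$ and $\overline{R}$ share no common factor in the UFD $\CC[x,y]$. Combining these in a UFD gives $R\,\overline{R} \mid P$, which is exactly the statement that $\overline{R} \mid P/R$.

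Observation (i) is immediate from reality: writing $P = R\cdot Q$ with $Q \in \CC[x,y]$ and applying complex conjugation, the identity $P = \overline{P}$ yields $P = \overline{R}\cdot\overline{Q}$, so $\overline{R} \mid P$.

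The substantive step, and the main obstacle, is (ii). Since $R$ and $\overline{R}$ are both homogeneous of degree $s$ in two variables over $\CC$, they factor completely into linear forms, so coprimality amounts to having no common linear factor. I would suppose for contradiction that some linear form $\alpha x + \beta y$ with $(\alpha,\beta) \neq (0,0)$ divides both, and evaluate at the point $(x,y) = (\beta, -\alpha)$ (which lies on that linear form). This shows that the vector $(\beta^s, (-\alpha)^s) \in \CC^2$, which is nonzero, is orthogonal in the standard bilinear pairing to both $(a,b)$ and $(\overline{a}, \overline{b})$. I would then split on whether those two coefficient vectors are $\CC$-linearly independent: if so, no nonzero vector can be orthogonal to both, giving an immediate contradiction.

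Otherwise $(\overline{a}, \overline{b}) = \lambda\,(a,b)$ for some $\lambda \in \CC$; conjugating this relation once more forces $|\lambda| = 1$, so $\lambda = e^{-2i\theta}$ for some real $\theta$, and then $e^{-i\theta} R$ has real coefficients, contradicting the assumption that $R$ is non-real (no nonzero scalar multiple of $R$ is real). Hence no common linear factor can exist, $R$ and $\overline{R}$ are coprime, and the UFD property of $\CC[x,y]$ delivers $R\,\overline{R} \mid P$, completing the argument. The only delicate point is ruling out the linearly dependent case, which requires the full strength of \emph{non-real} rather than merely \emph{not real}.
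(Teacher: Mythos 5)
Your proof is correct and follows essentially the same strategy as the paper's: both arguments reduce to showing that $\overline{R}$ divides the real polynomial $P$ while $R$ and $\overline{R}$ share no common factor, so that $R\overline{R}\mid P$ in the UFD $\CC[x,y]$. The only difference is implementational --- the paper establishes coprimality by factoring $R=L_1\cdots L_s$ into linear forms and noting that each root of a non-real binomial is non-real with conjugate not a root of $R$, whereas you argue directly on the coefficient vectors $(a,b)$ and $(\overline{a},\overline{b})$; both correctly use the full strength of ``non-real'' (no scalar multiple is real) at the decisive step.
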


\begin{proof}
    The statement is clear for $s=1$.
    For $s>1$, we write $R = L_1 \cdots L_s$, where $L_i \in \CC[x,y]_1$. Since the linear factors $L_i$ correspond to the roots of $R$ which was assumed to be non-real, each $L_i$ is non-real and its the complex conjugate $\overline{L_i}$ does not divide $R$.
    However, $\overline{L_i}$ must divide the real polynomial $P$. 
    Hence, $\overline{R} = \overline{L_1} \cdots \overline{L_s}$ divides $\frac{P}{R}$.
\end{proof}

With the notion of hyperroots, we now prove equation \eqref{eq:reduction}.

\begin{lemma}
\label{lem:functionSpaceReduction}
    Let $(\bm{k},\bm{s})$ be an LCN architecture and let $(\tilde{\bm{k}},\tilde{\bm{s}})$ be its associated reduced architecture. Then, 
    $\mathcal{M}_{{\bm k},{\bm s}} \subseteq \mathcal{M}_{\tilde{{\bm k}},\tilde{{\bm s}}}$  and  $\overline{\mathcal{M}}_{{\bm k},{\bm s}} = \overline{\mathcal{M}}_{\tilde{{\bm k}},\tilde{{\bm s}}}$.
    \end{lemma}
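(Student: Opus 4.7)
The plan is to prove the two assertions in sequence, handling first the real inclusion and then extending to the Zariski closures via the complex description from Proposition~\ref{prop:functionspace-poly}.

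For the inclusion $\mathcal{M}_{\bm k, \bm s}\subseteq\mathcal{M}_{\tilde{\bm k},\tilde{\bm s}}$, I would start from any $w=\mu_{\bm k,\bm s}(w_1,\dots,w_L)$ and group the $L$ factors of $\pi_1(w)=\prod_{l=1}^L\pi_{S_l}(w_l)$ according to the blocks determined by the reduction. The key observation is that within a block $\{l_{j-1}+1,\dots,l_j\}$ all intermediate strides equal $1$, so $S_{l_{j-1}+1}=S_{l_{j-1}+2}=\dots=S_{l_j}=\tilde S_j$. Consequently the block product $\prod_{i=l_{j-1}+1}^{l_j}\pi_{\tilde S_j}(w_i)$ is a real polynomial in $\mathbb{R}[x^{\tilde S_j},y^{\tilde S_j}]$ of degree $\sum_{i=l_{j-1}+1}^{l_j}(k_i-1)=\tilde k_j-1$ in $(x^{\tilde S_j},y^{\tilde S_j})$. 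Hence it equals $\pi_{\tilde S_j}(\tilde w_j)$ for some $\tilde w_j\in\mathbb{R}^{\tilde k_j}$, which exhibits $w$ as $\mu_{\tilde{\bm k},\tilde{\bm s}}(\tilde w_1,\dots,\tilde w_M)$, giving the inclusion and, by taking closures, $\overline{\mathcal{M}}_{\bm k,\bm s}\subseteq\overline{\mathcal{M}}_{\tilde{\bm k},\tilde{\bm s}}$.

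For the reverse containment of Zariski closures, I would invoke Proposition~\ref{prop:functionspace-poly} to work with \emph{complex} factorizations. Take any $P\in\pi_1(\overline{\mathcal{M}}_{\tilde{\bm k},\tilde{\bm s}})$; then $P=\tilde P_M\cdots\tilde P_1$ with $\tilde P_j\in\mathbb{C}[x^{\tilde S_j},y^{\tilde S_j}]_{\tilde k_j-1}$. Setting $u=x^{\tilde S_j}$, $v=y^{\tilde S_j}$, the polynomial $\tilde P_j$ becomes a homogeneous polynomial of degree $\tilde k_j-1$ in $(u,v)$, which splits over $\mathbb{C}$ into $\tilde k_j-1$ linear factors. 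Reverting the substitution, $\tilde P_j$ factors as a product of $\tilde k_j-1$ many $\tilde S_j$-hyperroots. I would then partition these hyperroots into consecutive groups, one for each original layer inside the block, with the $i$-th group containing exactly $k_i-1$ hyperroots; the product of the $i$-th group is then a complex polynomial in $\mathbb{C}[x^{\tilde S_j},y^{\tilde S_j}]_{k_i-1}=\mathbb{C}[x^{S_i},y^{S_i}]_{k_i-1}$, i.e., a valid complex factor for the $i$-th layer of the original architecture. Concatenating these block-refinements over all $j=1,\dots,M$ produces a factorization $P=P_L\cdots P_1$ with $P_i\in\mathbb{C}[x^{S_i},y^{S_i}]_{k_i-1}$, so Proposition~\ref{prop:functionspace-poly} places $P\in\pi_1(\overline{\mathcal{M}}_{\bm k,\bm s})$.

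The main obstacle is really just bookkeeping: making sure the exponents $\tilde S_j=S_{l_{j-1}+1}=\cdots=S_{l_j}$ align in both directions and that the degree counts match so that the refinement into smaller complex factors is both possible and consistent with the filter sizes $k_i$ of the original architecture. Once that is checked, the $\mathbb{C}$-splitting of a homogeneous binary polynomial and the grouping of hyperroots give the refinement for free, and no new real-root obstruction arises because we are refining inside the \emph{complex} factorization description of the Zariski closure rather than refining a real factorization.
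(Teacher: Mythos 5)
Your proposal is correct and follows essentially the same route as the paper: the forward inclusion by grouping the real factors block-by-block (using that $S_i$ is constant within each block and equals $\tilde S_j$), and the reverse containment of Zariski closures by passing to the complex factorization description of Proposition~\ref{prop:functionspace-poly} and splitting each complex block factor into $\tilde S_j$-hyperroots that are regrouped to match the original filter sizes. The paper states the hyperroot refinement more briefly, but the degree bookkeeping you carry out is exactly what justifies it.
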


\begin{proof}
We start by showing the inclusion $\mathcal{M}_{\bm{k},\bm{s}} \subseteq \mathcal{M}_{\tilde{\bm{k}},\tilde{\bm{s}}}$. By Proposition \ref{prop:functionspace-poly}, every filter in the function space $\mathcal{M}_{\bm{k},\bm{s}}$ corresponds to a polynomial that can be factorized as $P=P_L\cdots P_1$, where $P_i \in \mathbb{R}[x^{S_i},y^{S_i}]_{k_i-1}$. Using the notation in Definition \ref{def:reduction}, setting $\tilde{P}_{j+1}:= P_{l_{j+1}}\cdots P_{l_j+1}$ yields a factorization $P=\tilde{P}_M \cdots \tilde{P}_1$ according to the reduced architecture, i.e., $\pi_1^{-1}(P) \in \mathcal{M}_{\tilde{\bm{k}},\tilde{\bm{s}}}$.
To prove the equality of Zariski closures, it suffices to show that $\overline{ \mathcal{M}}_{\tilde{\bm{k}},\tilde{\bm{s}}}$ is a subset of $\overline{ \mathcal{M}}_{\bm{k},\bm{s}}$. By Proposition \ref{prop:functionspace-poly}, every filter in $\overline{ \mathcal{M}}_{\tilde{\bm{k}},\tilde{\bm{s}}}$ corresponds to a real polynomial $P$ with a factorization $P=\tilde{P}_M \cdots \tilde{P}_1$, where $\tilde{P}_i \in \mathbb{C}[x^{\tilde{S}_i},y^{\tilde{S}_i}]_{\tilde{k}_i+1}$. Since every complex factor $\tilde{P}_i$ can be written as a product of $\tilde{S}_i$-hyperroots, we can find a complex factorization of $\tilde{P}_i$ according to the stride-one architecture $(\tilde {\bm k}^i, \bm 1)$. This yields a complex factorization of the real polynomial $P$ according to the original architecture $(\bm{k},\bm s)$, so $\pi_1^{-1}(P) \in  \overline{\mathcal{M}}_{\bm{k},\bm s}$ by Proposition \ref{prop:functionspace-poly}.  
 \end{proof}

 The final ingredient for our proof of Theorem \ref{thm:thick-closed} is to show that the zero filter is a singular point for all non-stride-one architectures.

\begin{lemma}
    \label{lem:zeroSing}
    If not all strides are equal to one,  then the
    algebraic degree of $\overline{\mathcal{M}}$ is larger than one.
    In particular, the zero filter is a singular point of the affine cone $\overline{\mathcal{M}}$. 
\end{lemma}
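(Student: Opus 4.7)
The plan is to establish the degree statement $\deg \overline{\mathcal{M}} > 1$, from which the singularity of $0$ follows by a general property of irreducible affine cones. For such a cone $V \subseteq \mathbb{R}^k$, the Zariski tangent space $T_0 V$ is cut out by the linear forms in the (homogeneous) ideal of $V$, so it coincides with the linear span of $V$. Hence $0$ is smooth in $V$ if and only if $V$ equals its linear span, that is, if and only if $V$ is a linear subspace of $\mathbb{R}^k$, equivalently, has degree one. It therefore suffices to prove that $\overline{\mathcal{M}}$ is strictly contained in its linear span $V \subseteq \mathbb{R}^k$ whenever $\boldsymbol{s} \neq \boldsymbol{1}$.

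First I would identify $V$ combinatorially. By Proposition~\ref{prop:functionspace-poly}, every polynomial in $\pi_1(\mathcal{M})$ factors as $\prod_{l=1}^L \pi_{S_l}(w_l)$ and hence has monomial support contained in $\{x^{k-1-j} y^j : j \in J\}$, where
\[
J \;:=\; \sum_{l=1}^L S_l \cdot \{0,1,\ldots,k_l-1\}
\]
is the Minkowski sum of arithmetic progressions in $\mathbb{Z}_{\geq 0}$. Conversely, for each $j \in J$ with decomposition $j = \sum_l i_l S_l$, taking $w_l$ to be the $i_l$-th standard basis vector of $\mathbb{R}^{k_l}$ yields $\prod_l \pi_{S_l}(w_l) = x^{k-1-j} y^j$, so this monomial lies in $\pi_1(\mathcal{M})$. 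Therefore $\dim V = |J|$.

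It remains to show that $|J| > \dim \overline{\mathcal{M}} = \sum_l k_l - (L-1)$ when $\boldsymbol{s} \neq \boldsymbol{1}$. This follows by iterating the classical sumset inequality $|A + B| \geq |A| + |B| - 1$ for finite nonempty $A, B \subseteq \mathbb{Z}$, with equality if and only if $A$ and $B$ are arithmetic progressions sharing a common difference. Iterating yields $|J| \geq \sum_l k_l - (L-1)$, with equality forcing all summands $S_l \cdot \{0,\ldots,k_l-1\}$ to be arithmetic progressions of a common difference; since the $l$-th summand has common difference $S_l$ and $S_1 = 1$, equality is equivalent to $S_l = 1$ for every $l$, i.e., to $\boldsymbol{s} = \boldsymbol{1}$. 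The most delicate step is this equality analysis, which requires tracing backward through the iteration to show that a collapse in the total sumset forces a collapse in every pairwise step. Under the hypothesis $\boldsymbol{s} \neq \boldsymbol{1}$ we therefore obtain $|J| > \dim \overline{\mathcal{M}}$, so $\overline{\mathcal{M}}$ is an irreducible proper subvariety of the linear space $V$ and cannot be a linear subspace itself. Hence $\deg \overline{\mathcal{M}} > 1$, and the origin is a singular point of $\overline{\mathcal{M}}$.
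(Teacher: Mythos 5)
Your proof is correct, but it takes a genuinely different route from the paper's. The paper argues non-linearity by exhibiting two explicit filters $w,w'\in\overline{\mathcal M}$ with $w+w'\notin\overline{\mathcal M}$: it constructs polynomials $P,P'$ factoring according to the architecture whose sum acquires the factor $x^{s_l}y+xy^{s_l}$, which is not divisible by any $s_l$-hyperroot, so the sum cannot carry enough hyperroots to factor. You instead compare $\dim\overline{\mathcal M}=\sum_l k_l-(L-1)$ (Theorem~\ref{thm:dim-mu}, already available at this point, so there is no circularity) with the dimension of the linear span of $\pi_1(\mathcal M)$, which you correctly identify as the coordinate subspace indexed by the Minkowski sum $J=\sum_l S_l\cdot\{0,\ldots,k_l-1\}$: containment of supports gives one inclusion, and evaluating $\mu$ on tuples of standard basis vectors gives the other. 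The conclusion then rests on the equality case of the iterated sumset bound $|A+B|\ge|A|+|B|-1$; since every $k_l\ge 2$, equality throughout forces all the progressions $S_l\cdot\{0,\ldots,k_l-1\}$ to share a common difference, which must be $S_1=1$, i.e.\ $\bm s=\bm 1$. This is a standard additive-combinatorics fact, though the backward induction through the iteration (using that an arithmetic progression with at least two terms has a unique positive common difference) deserves to be written out if the argument were included in full. What each approach buys: the paper's construction is elementary and self-contained but ad hoc; yours is more conceptual, quantifies exactly how far $\overline{\mathcal M}$ sits inside its span, and as a by-product identifies the span itself (equivalently, which coefficients of the end-to-end filter are forced to vanish), at the cost of importing the dimension formula and the sumset equality analysis. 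The only points to polish are routine: degree and the equivalence ``degree one iff linear'' are cleanest over $\CC$, so one should run the dimension count for $\mathcal M^{\CC}$ (where it is identical) — a gloss the paper's own proof also omits.
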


\begin{proof}
It is sufficient to show that there are two filters $w, w' \in \overline{\mathcal{M}}$ such that their sum is not contained in $\overline{\mathcal{M}}$.
We do this by constructing their corresponding polynomials $P = \pi_1(w)$ and $P' = \pi_1(w')$.
Let $l$ be the minimal layer such that $s_l>1$. 
Note that this implies that $S_{l+1}=s_l$ and $S_j = 1$ for all $j \leq l$.
We pick polynomials $P_i \in \RR[x^{S_i},y^{S_i}]_{k_i-1}$ for $i>l+1$ arbitrarily.
We choose an arbitrary $Q_{l+1} \in \RR[x^{s_{l}},y^{s_{l}}]_{k_{l+1}-2}$
and set $P_{l+1} := x^{s_l}Q_{l+1}$.
Finally, we pick $Q_l \in \RR[x,y]_{k_l-2}$, set $P_l := yQ_l$, and choose $P_j \in \RR[x,y]_{k_i-1}$ for $i<l$ such that the product $P_l \cdots P_1$ is not divisible by any $s_l$-hyperroot.
Then, $P:= P_L\cdots P_1 \in \pi_1(\mathcal{M})$.
The second polynomial $P' := P'_L \cdots P'_1 \in \pi_1(\mathcal{M})$ is constructed by setting $P'_i := P_i$ for $i > l+1$, $P'_{l+1} := y^{s_l}Q_{l+1}$, $P'_l := x Q_l$, and $P'_j := P_j$ for $j < l$.
Now we have that
$P+P' = P_L \cdots P_{l+2}Q_{l+1}Q_lP_{l-1}\cdots P_1 (x^{s_l}y + xy^{s_l})$.
Since $x^{s_l}y + xy^{s_l}$ is not divisible by any $s_l$-hyperroot, the sum $P+P'$ does not contain enough $s_l$-hyperroots to admit a factorization according to the architecture $(\bm k, \bm s)$, i.e., $P+P' \notin \pi_1(\overline{\mathcal{M}}_{\bm k, \bm s})$ by Proposition \ref{prop:functionspace-poly}.
\end{proof}

\begin{proof}[Proof of Theorem \ref{thm:thick-closed}]
By  Theorem \ref{thm:dim-mu}, the function space $\FS$ is thick if and only if
\begin{align}
\label{eq:thickEquality}
    k_1 + \sum_{l=2}^L (k_l-1) = \dim (\FS) = k = k_1 + \sum_{l=2}^L (k_l-1) S_l.
\end{align}
Since $S_l \geq 1$ and we always assume that $k_l>1$, 
the equality in \eqref{eq:thickEquality} holds if and only if $S_2 = \cdots = S_L = 1$. 
The latter is equivalent to $s_1 = \ldots = s_{l-1}=1$.
Since we always assume the last stride $s_L$  to be one, we have proven assertion \textit{a)}.

For assertion \textit{b)}, we observe that for thick LCN function spaces, their Zariski closures are vector spaces and thus smooth. 
Thin LCN function spaces have at least one stride larger one by assertion \textit{a)} and  are thus singular by Lemma \ref{lem:zeroSing}.

We now prove assertion \textit{c)}.
Part \textit{c2)} was shown in \cite[Theorem 4.1]{LCN}.
For parts \textit{c1)} and \textit{c3)}, we observe that the function space $\FS$ is Zariski closed if and only if every real polynomial $P$ with a complex factorization $P =  P_L \cdots P_1$,  $P_i \in \mathbb{C}[x^{S_i},y^{S_i}]_{k_i-1}$ admits such a factorization with real factors 
(this follows from Proposition \ref{prop:functionspace-poly}).

We start by proving one direction of part \textit{c1)}, and assume that every layer $l \in \{2, \ldots, L \}$ satisfies that the filter size $k_l$ is odd or $S_l > \sum_{i=1}^{l-1}(k_i-1) S_i$.
We show by induction on $L$ that every real polynomial $P =  P_L \cdots P_1$ with $P_i \in \mathbb{C}[x^{S_i},y^{S_i}]_{k_i-1}$ admits the analogous real factorization. 
The base case of the induction with $L=1$ is trivial.
For $L>1$, we consider the $S_L$-hyperroots dividing  $P$.
We observe that all roots of the factor $P_L$ correspond to $S_L$-hyperroots of $P$, but that $P$ might also have other $S_L$-hyperroots.
By Lemma \ref{lem:hyperrootPairs}, all its non-real $S_L$-hyperroots appear in complex conjugated pairs.
We now distinguish two cases.
First, if $k_L$ is odd, i.e., the degree of $P_L$ is even, then we can rearrange the $S_L$-hyperroots of $P$ into a new factorization $P = \tilde{P}_L \cdots \tilde{P}_1$ of the same format such that the new factor $\tilde{P}_L$ becomes real. 
Second, the inequality  $S_L> \sum_{i=1}^{L-1}  (k_i-1)S_i$ means that no $S_L$-hyperroot fits into the polynomial $P_{L-1}\cdots P_1$ since $\deg(P_{L-1}\cdots P_1)=\sum_{i=1}^{L-1}  (k_i-1)S_i$ is too small. Hence, $P_L$ is the product of all $S_L$-hyperroots of $P$ and thus must be real.
In either case, we have found a factorization of $P = \tilde{P}_L \cdots \tilde{P}_1$ according to the LCN architecture with $\tilde{P}_L$ real. 
So $\tilde{P}_{L-1} \cdots \tilde{P}_1$ is real and we can apply the induction hypothesis to the $(L-1)$-layer LCN that omits the last layer. 

For the converse direction of \textit{c1)}, we assume that the architecture $(\bm{k},\bm{s})$ is reduced and that there exists a layer $l \in \{2, \ldots, L \}$ with even filter size $k_l$ and stride relation $S_l\le \sum_{i=1}^l (k_i-1)S_i$.
We fix  $P_l := (x^{S_l}+2iy^{S_l})(x^{S_l}-2iy^{S_l}) \cdot (x^{S_l}+4iy^{S_l})(x^{S_l}-4iy^{S_l}) \cdots (x^{S_l}+k_liy^{S_l})$ of odd degree $k_l-1$ and $R :=x^{S_l}-k_liy^{S_l}$.
Note that $P_l\cdot R$ has no $S$-hyperroot for any $S > S_l$. 
As $S_l\le \sum_{i=1}^l (k_i-1)S_i$,  we can choose $P_i \in \CC[x^{S_i},y^{S_i}]_{k_i-1}$ for $1 \leq i < l$ such that $R$ divides $P_{l-1} \cdots P_1$.
Moreover, we may choose the $P_i$ such that the quotient $\frac{P_{l-1} \cdots P_1}{R}$ is real and has no $S_l$-hyperroot (since $S_l > S_{l-1}$).
Now, we choose the factors $P_j \in \RR[x^{S_j},y^{S_j}]_{k_j-1}$ for $l < j \leq L$ arbitrarily.
The resulting polynomial $P := P_L \cdots P_1$ is real and hence $\pi_1^{-1}(P) \in \overline{\mathcal{M}}_{\bm{k},\bm{s}}$.
If $P = Q_L \cdots Q_1$ is any other complex factorization of $P$ according to the \emph{reduced} architecture, then $P_L \cdots P_{l+1} = Q_L \cdots Q_{l+1}$ (up to scaling).
Thus,  $Q_l$ is the product of $S_l$-hyperroots of $\frac{P}{P_L \cdots P_{l+1}}$, i.e., $Q_l$  divides $P_l \cdot R$. 
In particular, the odd-degree factor $Q_l$ cannot be real and so $\pi_1^{-1}(P) \notin \FS$.

Finally, we prove \textit{c3)}. We use the notation in Definition \ref{def:reduction}, and start by assuming that the function spaces $\mathcal{M}_{\tilde{\bm{k}},\tilde{\bm{s}}}$ of the associated reduced architecture and $\mathcal{M}_{\tilde{\bm{k}}^j,{\bm{1}}}$ (for $1 \leq j \leq M$) of the associated stride-one architectures are Zariski closed.
Let $P = P_L \cdots P_1$ be a real polynomial with $P_i \in \CC[x^{S_i},y^{S_i}]_{k_i-1}$.
Then $\tilde{P}_{j+1} := P_{l_{j+1}} \cdots P_{l_j+1}$ yields a complex factorization $P = \tilde{P}_M \cdots \tilde{P}_1$ according to the reduced architecture. 
Since $\mathcal{M}_{\tilde{\bm{k}},\tilde{\bm{s}}}$ is Zariski closed, we can find a real factorization 
$P = \tilde{Q}_M \cdots \tilde{Q}_1$ of the same format.
Moreover, since $\mathcal{M}_{\tilde{\bm{k}}^{j+1},{\bm{1}}}$ is Zariski closed, every real polynomial of degree $\tilde{k}_{j+1}-1$ can be factorized  into real factors according to the stride-one architecture $(\tilde{\bm{k}}^{j+1},{\bm{1}})$.
In particular, we can factorize $\tilde{Q}_{j+1} = Q_{l_{j+1}} \cdots Q_{l_j+1}$ such that $Q_i \in \RR[x^{S_i},y^{S_i}]_{k_i-1}$.

For the converse direction of \textit{c3)}, we assume that the function space $\FS$ is Zariski closed.
We see directly from Lemma \ref{lem:functionSpaceReduction} that the function space of the associated reduced architecture must be Zariski closed as well.
We assume for contradiction that the function space $\mathcal{M}_{\tilde{\bm{k}}^j,{\bm{1}}}$ of one of the associated stride-one architectures is not Zariski closed. 
By \textit{c2)}, this means that at least two of its filter sizes are even.
In particular, any  polynomial $\tilde{P}_j \in \RR[x,y]_{\tilde{k}_j-1}$ with one or zero real roots (depending on the parity of its degree) cannot be factorized according to the stride-one architecture $(\tilde{\bm{k}}^{j},{\bm{1}})$.
We now fix such a polynomial $\tilde{P}_j$ such that it has no $s$-hyperroot for any $s>1$.
We further pick $\tilde{P}_i \in \RR[x,y]_{\tilde{k}_i-1}$ for $j < i \leq M$ arbitrarily and for $1 \leq i < j$ such that $\tilde{P}_i$ has no $s$-hyperroot for any $s>1$.
Then $P := \tilde{P}_M(x^{\tilde{S}_M},y^{\tilde{S}_M}) \cdots \tilde{P}_1(x,y)$ is a factorization of the real polynomial $P$ according to the reduced architecture $(\tilde{\bm{k}},\tilde{\bm{s}})$.
Since $\FS$ and $\mathcal{M}_{\tilde{\bm{k}},\tilde{\bm{s}}}$ are Zariski closed, there is a factorization $P = \tilde{Q}_M \cdots \tilde Q_1$ into real factors $\tilde{Q}_i$ according to the reduced architecture such that each $\tilde{Q}_i$ factorizes according to its associated stride-one architecture. 
However, by our construction, $\tilde{Q}_j$ must be equal to $\tilde{P}_j$ (up to scaling), which contradicts that $\tilde{P}_j$ cannot be factorized according to $(\tilde{\bm{k}}^{j},{\bm{1}})$.
\end{proof}

\section{Critical points of the parametrization}
\label{sec:criticalMu}

When determining the critical points of the LCN parametrization map $\mu$, it is once again easier to work with the projective morphism $\nu$.
This is because the kernel of the differential of $\nu$ at a regular point is zero (due to the finiteness of the fibers of $\nu$).
Writing $\bar x$ for the equivalence class of $x \in \RR^{n+1}\setminus \{0\}$ in $\mathbb{P}^n_{\mathbb{R}}$,
the tangent space $T_{\bar{x}}\mathbb{P}^n_{\mathbb{R}}$ is canonically isomorphic to
$\mathrm{Hom}_{\RR}(\langle x \rangle, \mathbb{R}^{n+1}/\langle x \rangle)$ \cite[Example 6.24]{shaf2}.
In our calculations below, we view that tangent space as $\mathbb{R}^{n+1}/\langle x \rangle \;(\cong \mathrm{Hom}_{\RR}(\langle x \rangle, \mathbb{R}^{n+1}/\langle x \rangle))$ for simpler notation.
This still captures the relevant geometry by modding out the trivial kernel vectors of the differentials of the affine parametrization map $\mu$, as we see below in Lemma \ref{lem:projectiveCrit} and its proof.
The underlying reason is that for the map $f: \mathbb{R}^{n+1}\setminus \{0\} \to \mathbb{P}^n_{\mathbb{R}}$ that sends $x$ to $\bar{x}$, the differential  $d_xf: \mathbb{R}^{n+1} \to T_{\bar{x}}\mathbb{P}^{n}_{\mathbb{R}}$ is surjective and its kernel is the line $\langle x \rangle$ spanned by $x$.

\begin{lemma} \label{lem:projectiveCrit}
    A filter tuple $\theta=(w_1,\ldots,w_L) \in \mathbb{R}^{k_1} \times \cdots \times \mathbb{R}^{k_L}$ with $w_i\neq 0$ (for all $i$) is a critical point of  $\mu$ if and only if its corresponding projective point in $\PP^{k_1-1}\times \cdots \times\PP^{k_L-1}$ is a critical point of $\nu$.
\end{lemma}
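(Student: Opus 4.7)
The plan is to exploit the commutative diagram that relates $\mu$ to its projectivization $\nu$, and then compute how the ranks of the two differentials are linked. Let $V := \mathbb{R}^{k_1} \times \cdots \times \mathbb{R}^{k_L}$, $W := \mathbb{R}^k$, and let $f_i: \mathbb{R}^{k_i}\setminus\{0\} \to \mathbb{P}^{k_i-1}_{\mathbb{R}}$, $f: \mathbb{R}^k\setminus\{0\} \to \mathbb{P}^{k-1}_{\mathbb{R}}$ denote the canonical projections. Since $w_i \neq 0$ for all $i$, Proposition~\ref{prop:functionspace-poly} gives $\mu(\theta) \neq 0$, so on a neighborhood of $\theta$ we have the identity $f \circ \mu = \nu \circ (f_1 \times \cdots \times f_L)$. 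Under the identification $T_{\bar x}\mathbb{P}^n_\mathbb{R} \cong \mathbb{R}^{n+1}/\langle x \rangle$ recalled just before the lemma, differentiating this identity produces a commutative square in which the vertical maps are the quotient maps by $K := \bigoplus_i \langle w_i \rangle \subseteq V$ and by $\langle \mu(\theta)\rangle \subseteq W$, and the induced bottom arrow is exactly $d_{\bar\theta}\nu$.

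The key computation is a rank-shift identity: I claim $\mathrm{rank}(d_\theta\mu) = \mathrm{rank}(d_{\bar\theta}\nu) + 1$. To see this, note that $\mu$ is multi-homogeneous of degree one in each argument, so the Euler relation yields
\[
d_\theta\mu(0,\ldots,0,w_i,0,\ldots,0) = \mu(\theta)
\]
for every $i$. Hence $d_\theta\mu$ maps the $L$-dimensional subspace $K$ onto the one-dimensional line $\langle\mu(\theta)\rangle$, and in particular $\mu(\theta) \in \mathrm{image}(d_\theta\mu)$. When we pass to the quotients, the image drops in dimension by exactly one (we mod out by $\langle\mu(\theta)\rangle$), while the part of $K$ that was previously in the kernel of $d_\theta\mu$ is killed for free; a standard dimension count with $\dim K - \dim(K \cap \ker d_\theta\mu) = 1$ then gives the claimed identity.

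Finally, I combine this with the generic-rank statements. By Theorem~\ref{thm:dim-mu} and the finiteness of the fibers of $\nu$ (Remark~\ref{rmk:finite-morphism-nu}), the generic rank of $\mu$ is $\dim \mathcal M = k_1 + \cdots + k_L - L + 1$ and the generic rank of $\nu$ is $\dim \mathbb{P}(\mathcal M) = k_1 + \cdots + k_L - L$. Thus $\theta$ is critical for $\mu$ (rank of $d_\theta\mu$ strictly less than $\dim \mathcal M$) if and only if $\mathrm{rank}(d_{\bar\theta}\nu) < \dim \mathbb{P}(\mathcal M)$, i.e., $\bar\theta$ is critical for $\nu$.

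The only technical obstacle is making the rank-shift accounting fully rigorous: one has to verify carefully that the one dimension lost in the codomain quotient is the \emph{same} dimension contributed by $K$ to $\mathrm{image}(d_\theta\mu)$, so that the two effects do not double-count. This is handled cleanly by writing $\mathrm{rank}(d_{\bar\theta}\nu) = \dim\bigl(\mathrm{image}(d_\theta\mu) + \langle\mu(\theta)\rangle\bigr) - 1$ and using $\mu(\theta) \in \mathrm{image}(d_\theta\mu)$, which follows from the Euler relation established above.
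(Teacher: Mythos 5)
Your proposal is correct and follows essentially the same route as the paper: both arguments rest on the commutative square relating $d_\theta\mu$ and $d_{\bar\theta}\nu$ and on the identification $\mathrm{im}\,d_{\bar\theta}\nu \cong (\mathrm{im}\,d_\theta\mu)/\langle\mu(\theta)\rangle$, from which the rank shift by one matches the dimension shift between $\mathcal{M}$ and $\PP(\mathcal{M})$. Your explicit use of the Euler relation to verify $\mu(\theta)\in\mathrm{im}(d_\theta\mu)$ is a welcome detail that the paper leaves implicit.
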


\begin{proof}
The maps $\mu$ and $\nu$ and their differentials form the following commutative diagrams:
\begin{center}
    \begin{tikzcd}
\prod_{i=1}^{L}(\mathbb{R}^{k_i} \setminus \{0\}) \arrow{r}{\mu} \arrow{d} & \RR^{k}\setminus \{0\} \arrow{d} \\   
\prod_{i=1}^{L}\mathbb{P}^{k_i-1} \arrow[swap]{r}{\nu} & \mathbb{P}^{k-1} 
\end{tikzcd}  \quad
\begin{tikzcd}
\prod_{i=1}^{L} \mathbb{R}^{k_i}  \arrow{r}{d_\theta\mu} \arrow{d} & \RR^{k} \arrow{d} \\   
\prod_{i=1}^{L} (\mathbb{R}^{k_i} / \langle w_i \rangle) \arrow[swap]{r}{d_{\bar\theta}\nu} & \mathbb{R}^{k} / \langle \mu(\theta) \rangle
\end{tikzcd}
\end{center}
In particular, we have that $\mathrm{im}\, d_{\bar\theta}\nu \cong (\mathrm{im}\, d_{\theta}\mu) / \langle \mu(\theta) \rangle$, which implies the assertion. 
\end{proof}

\begin{lemma}
\label{lem:zeroFilterCrit}
Let $L>1$.
    Every filter tuple $\theta=(w_1,\ldots,w_L) \in \mathbb{R}^{k_1} \times \cdots \times \mathbb{R}^{k_L}$ where one of the filters $w_i$ equals zero is a critical point of  $\mu$. 
\end{lemma}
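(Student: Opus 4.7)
The plan is to compute the differential $d_\theta \mu$ explicitly and show that its image is at most $k_i$-dimensional, which is strictly smaller than $\dim \mathcal{M}$ whenever $L > 1$.

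First I would observe that if $w_i = 0$, then $\pi_{S_i}(w_i) = 0$ and hence $\mu(\theta) = \pi_1^{-1}(\prod_{l=1}^L \pi_{S_l}(w_l)) = 0$. Next I would differentiate $\mu$ at $\theta$ using the product rule for polynomial multiplication: for any direction $(v_1, \ldots, v_L)$,
\begin{equation*}
\pi_1(d_\theta \mu (v_1, \ldots, v_L)) = \sum_{l=1}^L \pi_{S_l}(v_l) \prod_{j \neq l} \pi_{S_j}(w_j).
\end{equation*}
Because $\pi_{S_i}(w_i) = 0$, every summand with $l \neq i$ contains a factor equal to zero and vanishes. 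The only surviving term corresponds to $l = i$:
\begin{equation*}
\pi_1(d_\theta \mu (v_1, \ldots, v_L)) = \pi_{S_i}(v_i) \cdot \prod_{j \neq i} \pi_{S_j}(w_j).
\end{equation*}

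Thus the image of $d_\theta \mu$ is contained in the image of the linear map $v_i \mapsto \pi_1^{-1}(\pi_{S_i}(v_i) \cdot P)$ where $P := \prod_{j \neq i} \pi_{S_j}(w_j)$ is fixed. Since $\pi_{S_i}$ is linear and takes values in a $k_i$-dimensional space, this image has dimension at most $k_i$. I would then compare against the maximal rank of $d\mu$, which by Theorem~\ref{thm:dim-mu} equals $\dim \mathcal{M} = k_1 + \cdots + k_L - (L-1) = k_i + \sum_{j \neq i}(k_j - 1)$. Because $L > 1$ and $k_j > 1$ for all $j$ (our standing assumption), we have $\sum_{j \neq i}(k_j - 1) \geq 1$, so the rank of $d_\theta \mu$ is strictly less than the maximal rank, showing $\theta$ is a critical point.

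There is no genuine obstacle here: the argument is essentially the product rule together with a dimension count. The only subtlety is making sure we use the right notion of critical point, namely that $d_\theta \mu$ fails to attain the generic rank (which equals $\dim \mathcal{M}$), rather than failing to be surjective onto the ambient space $\mathbb{R}^k$. Note also that one can give a purely projective argument in the spirit of Lemma~\ref{lem:projectiveCrit}, but it would reduce to the same calculation, so the direct affine computation above is the cleanest route.
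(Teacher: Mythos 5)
Your proof is correct and is essentially identical to the paper's: both apply the product rule to see that only the $l=i$ term of $d_\theta\mu$ survives, bound the rank by $k_i$, and compare with $\dim\mathcal{M}=k_i+\sum_{j\neq i}(k_j-1)$ using $L>1$ and $k_j>1$. Your remark about measuring criticality against the generic rank $\dim\mathcal{M}$ rather than surjectivity onto $\mathbb{R}^k$ is exactly the convention the paper uses.
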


\begin{proof}
Using our identification $\pi_1$ of filters with polynomials, the  map $\mu$ becomes
\begin{align} \label{eq:muPol}
    \mu: \mathbb{R}[x^{S_1},y^{S_1}]_{k_1-1} \times \cdots\times \mathbb{R}[x^{S_L},y^{S_L}]_{k_L-1} \to \mathbb{R}[x,y]_{k-1}, 
    \quad (P_1, \ldots, P_L) \mapsto P_L \cdots P_1, 
\end{align}
and we can write the differential at $\theta=(P_1, \ldots, P_L)$ as
\begin{align*}
    d_{\theta} \mu: \mathbb{R}[x^{S_1},y^{S_1}]_{k_1-1} \times \cdots\times \mathbb{R}[x^{S_L},y^{S_L}]_{k_L-1} &\to \mathbb{R}[x,y]_{k-1}, \\
     (\dot P_1, \ldots, \dot P_L) &\mapsto \dot P_L \cdot P_{L-1} \cdots P_1 + \cdots +  P_L \cdots P_2 \cdot \dot P_1.
\end{align*}
If one of the polynomials, say $P_1$, is zero, the differential simplifies to
$d_{\theta} \mu: (\dot P_1, \ldots, \dot P_L) \mapsto   P_L \cdots P_2 \cdot \dot P_1$, and so we obtain $\rank(d_\theta \mu) \leq k_1$.
Since we assume all filter sizes to be larger than one and $L \geq 2$, we conclude $\rank(d_\theta \mu) \leq k_1 < k_1 + (k_2-1) + \cdots + (k_L-1) = \dim \mathcal{M}$. 
\end{proof}

\begin{remark} \label{rem:specialArchCrit}
Lemma \ref{lem:zeroFilterCrit} does not apply to  single-layer LCN architectures. 
If $L=1$, the parametrization map $\mu$ is the identity map and thus smooth.
\end{remark}

We now compute all critical points of $\mu$ for two-layer architectures.

\begin{proposition} \label{prop:twoLayerCritMu}
Consider a two-layer LCN architecture.
A filter pair $(w_1,w_2)$ is a critical point of $\mu$ if and only if one of the filters is zero or the polynomials $\pi_{s_1}(w_2)$ and $\pi_1(w_1)$ have a common $s_1$-hyperroot.
\end{proposition}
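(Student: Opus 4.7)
The plan is to first reduce to the case where both filters are nonzero (the other case follows from Lemma \ref{lem:zeroFilterCrit}) and then study $\ker d_\theta\mu$ directly. Identifying $w_i$ with $P_i := \pi_{S_i}(w_i)$ (so $P_1 \in \RR[x,y]_{k_1-1}$ and $P_2 \in \RR[x^{s_1}, y^{s_1}]_{k_2-1}$), the differential has the form $d_\theta\mu(\dot P_1, \dot P_2) = P_2 \dot P_1 + P_1 \dot P_2$. By Theorem \ref{thm:dim-mu} one has $\dim\mathcal{M}=k_1+k_2-1$, and since the line $\{(tP_1, -tP_2) : t \in \RR\}$ always lies in the kernel, $\theta$ is critical exactly when $\dim \ker d_\theta\mu \geq 2$.

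Next I would parametrize this kernel. Setting $G := \gcd(P_1, P_2) \in \RR[x,y]$ and writing $P_i = G \tilde P_i$ (so that $\gcd(\tilde P_1, \tilde P_2)=1$), the equation $P_2 \dot P_1 = -P_1 \dot P_2$ forces $(\dot P_1, \dot P_2) = (\tilde P_1 H, -\tilde P_2 H)$ for a unique form $H \in \RR[x,y]_{\deg G}$; the remaining requirement $\dot P_2 \in \RR[x^{s_1}, y^{s_1}]$ becomes the linear condition $\tilde P_2 H \in \RR[x^{s_1}, y^{s_1}]$. Thus $\ker d_\theta\mu \cong V := \{H \in \RR[x,y]_{\deg G} : \tilde P_2 H \in \RR[x^{s_1}, y^{s_1}]\}$, which always contains $G$ itself. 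The proposition then reduces to the claim that $\dim V \geq 2$ if and only if $G$ admits an $s_1$-hyperroot factor, equivalently $P_1$ and $P_2$ have a common $s_1$-hyperroot. Because $V$ is cut out by real-linear conditions, $\dim_{\RR} V = \dim_{\CC} V^{\CC}$, so I may freely work over $\CC$.

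For the ``if'' direction, given a common $s_1$-hyperroot $R$ of $P_1, P_2$ (so $R \mid G$), I would choose any $s_1$-hyperroot $R'$ not proportional to $R$ and set $H := (G/R)\,R'$. Then $H$ has degree $\deg G$ and is not proportional to $G$, and $\tilde P_2 H = (P_2/R)\,R'$ is a product of two elements of $\CC[x^{s_1}, y^{s_1}]$. This exhibits $G$ and $H$ as linearly independent elements of $V^{\CC}$, giving $\dim V \geq 2$.

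The main obstacle is the converse, which I plan to handle by a divisor argument on $\PP^1$. Let $\sigma \colon (x,y) \mapsto (\zeta x, y)$ with $\zeta = e^{2\pi i/s_1}$, so that $\CC[x^{s_1}, y^{s_1}]$ is exactly the $\sigma$-invariant subring. Using $\sigma(P_2) = P_2$, the defining condition $\sigma(\tilde P_2 H) = \tilde P_2 H$ of $V^{\CC}$ rearranges to $G\sigma(H) = H \sigma(G)$, so $f := H/G$ is a $\sigma$-invariant rational function on $\PP^1_{[x:y]}$. Its divisor is therefore constant along each generic $\sigma$-orbit, and its multiplicities at the two $\sigma$-fixed points $[0{:}1], [1{:}0]$ are divisible by $s_1$ (since such functions descend through the ramified cover $[x{:}y] \mapsto [x^{s_1}{:}y^{s_1}]$). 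The hypothesis that $G$ has no $s_1$-hyperroot factor translates to: every generic $\sigma$-orbit contains a point at which $G$ does not vanish, and $\mathrm{mult}_{[0:1]}(G) < s_1$, $\mathrm{mult}_{[1:0]}(G) < s_1$. Combining these with the effectiveness constraint $\mathrm{div}(f) + \mathrm{div}(G) \geq 0$ forces every coefficient of $\mathrm{div}(f)$ to be non-negative; since $\deg \mathrm{div}(f) = 0$, they must all vanish. Hence $f$ is a nonzero constant, $H \in \CC G$, and $\dim_{\CC} V^{\CC} = 1$.
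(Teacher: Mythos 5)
Your proof is correct, and while its skeleton matches the paper's (discard the zero-filter case via Lemma~\ref{lem:zeroFilterCrit}, characterize criticality by a jump in the kernel dimension, prove sufficiency by writing down an explicit kernel element --- your $(\tilde P_1 H,-\tilde P_2 H)$ with $H=(G/R)R'$ is essentially the paper's $([fQ_1],[-fQ_2])$), the necessity direction takes a genuinely different route. The paper works with the projectivized map $\nu$ and runs a direct divisibility chase: a non-trivial kernel element forces $P_2\mid \dot P_2 P_1$, each $s_1$-hyperroot of $P_2$ must then divide $\dot P_2$ or $P_1$, and if none divides $P_1$ the kernel element is trivial. You instead parametrize the entire kernel by $V=\{H:\tilde P_2 H\in\RR[x^{s_1},y^{s_1}]\}$ and show $\dim V=1$ in the hyperroot-free case by descent along the cyclic cover $[x:y]\mapsto[x^{s_1}:y^{s_1}]$: $\sigma$-invariance of $f=H/G$ constrains $\operatorname{div}(f)$ on orbits and at the two ramification points, and effectivity of $\operatorname{div}(f)+\operatorname{div}(G)$ then forces $f$ to be constant. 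Your route is sharper in that it computes $\dim\ker d_\theta\mu$ exactly rather than merely detecting when it exceeds one (anticipating the common-hyperroot-degree bookkeeping of Definition~\ref{def:chd} and Proposition~\ref{prop:rankBound}) and it handles multiplicities cleanly; the paper's route is more elementary and builds the projective, finite-fiber framework that is reused in Proposition~\ref{prop:lessLayerCritMu} and Section~\ref{sec:singular}. One small imprecision: $\CC[x^{s_1},y^{s_1}]$ is the $\sigma$-invariant subring only in degrees divisible by $s_1$ (the full invariant ring is $\CC[x^{s_1},y]$); since you invoke the claim only in degree $s_1(k_2-1)$, where it holds, the argument is unaffected, but the statement should be qualified.
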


\begin{proof}
By Lemma~\ref{lem:zeroFilterCrit}, we can assume that neither $w_1$ nor $w_2$ are zero. 
Then, by Lemma~\ref{lem:projectiveCrit}, we can use the projective map $\nu$ to determine whether $(w_1,w_2)$ is a critical point. 
Using our identification with polynomials $P_1 = \pi_1(w_1)$ and $P_2 = \pi_{s_1}(w_2)$, we can write 
$\nu: \PP (\mathbb{R}[x,y]_{k_1-1}) \times \PP( \mathbb{R}[x^{s_1},y^{s_1}]_{k_2-1}) \to \PP(\mathbb{R}[x,y]_{k-1}), (P_1,P_2) \mapsto P_2 \cdot P_1$ and 
\begin{align*}
    d_{(P_1,P_2)} \nu: \mathbb{R}[x,y]_{k_1-1} / \langle P_1 \rangle  \times \mathbb{R}[x^{s_1},y^{s_1}]_{k_2-1} / \langle P_2 \rangle &\to \mathbb{R}[x,y]_{k-1} / \langle P_1P_2 \rangle, \\ ([\dot P_1], [\dot P_2]) &\mapsto [\dot P_2 P_1 + P_2 \dot P_1].
\end{align*}
Since $\nu$ has finite fibers (see Remark \ref{rmk:finite-morphism-nu}), the kernel of its differential at a regular point is trivial (i.e., zero-dimensional).
At a critical point $(P_1,P_2)$, there is a non-trivial kernel element.
We start by showing that each critical point $(P_1,P_2)$ satisfies that the polynomials $P_1$ and $P_2$ have a common $s_1$-hyperroot.
A non-trivial kernel element of the differential means that there is $([\dot P_1], [\dot P_2]) \neq ([ P_1], [ P_2])$ with $[\dot P_2 P_1 + P_2 \dot P_1]=[P_1P_2]$.
In particular, $P_2$ divides $\dot P_2 P_1$.
Any factor of $P_2$ that is linear in $x,y$ is part of a $s_1$-hyperroot of $P_2$.
Moreover, if such a linear factor divides $\dot P_2$, its whole $s_1$-hyperroot must divide $\dot P_2$.
Hence, every $s_1$-hyperroot of $P_2$ must divide either $\dot P_2$ or $P_1$.
If such a $s_1$-hyperroot divides $P_1$, we are done.
Otherwise, $P_2$ and $\dot P_2$ are equal up to scaling, 
and $[P_1P_2] = [\dot P_2 P_1 + P_2 \dot P_1] = [P_2 \dot P_1]$ implies that $P_1$ and $\dot P_1$ are equal up to scaling as well, which contradicts that we started from a non-trivial kernel element.
For the other direction, if $P_1$ and $P_2$ have a common $s_1$-hyperroot $r \in \RR[x^{s_1},y^{s_1}]_1$, then we can write $P_2 = r Q_2$ and $P_1 = r Q_1$.
We now pick $f \in \RR[x^{s_1},y^{s_1}]_1$ such that $\gcd(P_1P_2,f)=1$.
Then, $([f Q_1],[-f Q_2])$ is a non-trivial kernel element of the differential $d_{(P_1,P_2)} \nu$, which shows that $(P_1,P_2)$ is a critical point of $\nu$.
\end{proof}

To determine the critical points of the parametrization map $\mu$ for arbitrarily many layers, we start by proving a technical lemma, which provides a partial understanding of the image of the differential of $\mu$.

\begin{lemma}
    \label{lem:imageDiffMu}
    Let  $\theta=(w_1,\ldots,w_L) \in \RR^{k_1} \times \cdots \times \RR^{k_L}$ be such that $w_i \neq 0$ for all $i$, 
    the polynomials $\pi_{S_1}(w_1), \ldots, \pi_{S_L}(w_L)$ are pairwise coprime, 
    and their product $\pi_1(\mu(\theta))\in\RR[x,y]_{k-1}$ is a polynomial in $x^{s},y^{s}$ 
    (i.e., $\pi_1(\mu(\theta)) \in \RR[x^s,y^s]_{k'-1}$, where $s = s_1 \cdots s_L$ is the product of the strides and $(k'-1)s=k-1$).
    Then we have that $\RR[x^s,y^s]_{k'-1} \subseteq \pi_1(\mathrm{im}(d_\theta \mu))$. 
\end{lemma}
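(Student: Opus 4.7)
My plan is to reduce the inclusion to the following neighborhood statement: the function space $\mathcal{M}_{\bm k, \bm s}$ contains a Euclidean neighborhood of $P := \pi_1(\mu(\theta))$ inside the linear subspace $\RR[x^s,y^s]_{k'-1}$. Once this is established, the desired inclusion $\RR[x^s,y^s]_{k'-1} \subseteq \pi_1(\mathrm{Im}(d_\theta \mu))$ follows: at a regular point $\theta$ of $\mu$, the map $\mu$ is locally a submersion onto its image, which is a smooth submanifold of $\RR[x,y]_{k-1}$ whose tangent space at $P$ is $\pi_1(\mathrm{Im}(d_\theta \mu))$; and any smooth submanifold that contains a relative open neighborhood of $P$ in a linear subspace must have tangent space at $P$ containing that whole subspace.

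To see that $\theta$ is a regular point of $\mu$, I apply Theorem~\ref{thm:crit-gene}: each $w_i$ is nonzero, and the pairwise coprimality of $\pi_{S_1}(w_1),\dots,\pi_{S_L}(w_L)$ prohibits any common $S_l$-hyperroot factor between $\pi_{S_l}(w_l)$ and $\pi_{S_{l-1}}(w_{l-1})\cdots\pi_1(w_1)$, since such a common factor would be shared with some $\pi_{S_i}(w_i)$ for $i<l$.

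For the neighborhood statement, set $P_i := \pi_{S_i}(w_i)$, so $P = P_1 \cdots P_L$. Over $\CC$, pairwise coprimality of the $P_i$ gives a partition $\mathrm{roots}(P) = \bigsqcup_i \mathrm{roots}(P_i)$ of root multisets in $\PP^1_\CC$, with each $\mathrm{roots}(P_i)$ invariant under multiplication by a primitive $S_i$-th root of unity $\zeta_{S_i}$ (since $P_i \in \RR[x^{S_i},y^{S_i}]$). For any $Q \in \RR[x^s,y^s]_{k'-1}$ sufficiently close to $P$, continuity of roots in $\mathrm{Sym}^{k-1}(\PP^1_\CC)$ yields a unique partition $\mathrm{roots}(Q) = \bigsqcup_i \mathrm{roots}(Q_i)$ by proximity to the clusters of $\mathrm{roots}(P_i)$. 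Let $Q_i$ be the homogeneous polynomial with root multiset $\mathrm{roots}(Q_i)$, with scalars chosen so that $\prod_i Q_i = Q$. The full multiset $\mathrm{roots}(Q)$ is $\zeta_{S_i}$-invariant (since $Q \in \RR[x^s,y^s]\subseteq\RR[x^{S_i},y^{S_i}]$ and $S_i \mid s$), the partition is uniquely determined by proximity, and $\zeta_{S_i}$ preserves $\mathrm{roots}(P_i)$; consequently $\mathrm{roots}(Q_i)$ is $\zeta_{S_i}$-invariant, so $Q_i \in \CC[x^{S_i},y^{S_i}]_{k_i-1}$. An analogous argument with complex conjugation in place of $\zeta_{S_i}$ (using that $P_i$ and $Q$ are real) shows that $Q_i$ is real. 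Hence $Q = \prod_i Q_i$ with $Q_i \in \RR[x^{S_i},y^{S_i}]_{k_i-1}$, placing $\pi_1^{-1}(Q)$ in $\mathcal{M}_{\bm k,\bm s}$.

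The main obstacle is making the root-tracking rigorous when $P$ has repeated roots, in which case individual roots no longer have unambiguous perturbations. This is handled by tracking clusters of roots rather than individual roots: in the symmetric product $\mathrm{Sym}^{k-1}(\PP^1_\CC)$, the map from polynomial coefficients to root multisets is continuous, so for $Q$ close enough to $P$, the partition into the multisets $\mathrm{roots}(Q_i)$ (those roots lying in a small neighborhood of the cluster $\mathrm{roots}(P_i)$) is well-defined and equivariant with respect to both the $\zeta_{S_i}$-action and complex conjugation.
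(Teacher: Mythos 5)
Your route is genuinely different from the paper's. The paper proves Lemma~\ref{lem:imageDiffMu} by induction on $L$: it peels off the last factor, applies the induction hypothesis to the first $L-1$ layers after the change of variables $\tilde x = x^{S_L}$, $\tilde y = y^{S_L}$, and uses that the differential of two-factor polynomial multiplication at a coprime pair is surjective (Proposition~\ref{prop:twoLayerCritMu}) --- essentially an iterated B\'ezout identity. You instead build a local section of $\mu$ over the linear space $W:=\RR[x^s,y^s]_{k'-1}$ near $P$ by Hensel-type root clustering, equivariant under the actions of roots of unity and under conjugation, and read off the tangent space. The idea is sound and arguably more conceptual; note also that the clustered factorization $Q\mapsto(Q_1,\ldots,Q_L)$ depends analytically on $Q$ near a coprime factorization, so you could produce a differentiable curve $t\mapsto\theta_t$ with $\mu(\theta_t)=P+tv$ and conclude $v\in\pi_1(\mathrm{im}(d_\theta\mu))$ directly, bypassing the regularity discussion.

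Three points need repair before this can stand. First, a circularity: you invoke Theorem~\ref{thm:crit-gene} to certify that $\theta$ is regular, but in the paper that theorem is proved \emph{using} Lemma~\ref{lem:imageDiffMu} (via Proposition~\ref{prop:lessLayerCritMu}). You only need the implication ``no zero filter and no common hyperroot $\Rightarrow$ regular,'' which is established independently of this lemma in the first parts of the proofs of Propositions~\ref{prop:twoLayerCritMu} and~\ref{prop:lessLayerCritMu}; cite that direction specifically. Second, the step ``$\mathrm{roots}(Q_i)$ is $\zeta_{S_i}$-invariant, so $Q_i\in\CC[x^{S_i},y^{S_i}]_{k_i-1}$'' is false as a general implication: $xy$ has a root multiset invariant under every such action but lies in no $\CC[x^m,y^m]$ with $m\geq 2$. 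Invariance of the root multiset only yields a product of $S_i$-hyperroots away from the fixed points $[1{:}0]$ and $[0{:}1]$; you must additionally verify that the multiplicities of the factors $x$ and $y$ in $Q_i$ are divisible by $S_i$. This does hold here --- those multiplicities in $Q$ are divisible by $s$, and by coprimality all copies of each fixed point are assigned to a single cluster --- but it has to be argued. Third, your tangent-space argument requires the relative neighborhood of $P$ in $W$ to lie in the image of a \emph{small neighborhood of $\theta$}, not merely in $\mathcal{M}_{\bm k,\bm s}$; your construction does deliver this (after normalizing scalars, the $Q_i$ converge to the $P_i$ as $Q\to P$), so state it explicitly rather than only the global containment.
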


\begin{proof}
We omit writing $\pi_1$. Instead, we directly view $\mu$ as the polynomial multiplication map in \eqref{eq:muPol} and work with the polynomials $P_i := \pi_{S_i}(w_i)$.
    We prove the assertion by induction on $L$.
    For a single layer (i.e., $L=1$), the map $\mu$ is the identity and the assertion is trivial.
    For the induction step, we consider the $(L-1)$-layer LCN architecture that omits the $L$-th layer and denote its parametrization map by $\tilde \mu$.
    Now the  tuple $\tilde \theta = (P_1, \ldots, P_{L-1})$ satisfies the assumptions in Lemma \ref{lem:imageDiffMu} since $\tilde \mu(\tilde \theta) = \frac{\mu(\theta)}{P_L}$ is a polynomial in $x^{S_L},y^{S_L}$.
    Writing $\tilde k-1$ for the degree of $\tilde{\mu}(\tilde \theta)$ and $(\tilde k'-1)S_L=\tilde k-1$, we have $\tilde \mu(\tilde \theta) \in \RR[x^{S_L},y^{S_L}]_{\tilde k'-1}$.
    Applying the induction hypothesis yields that \begin{align}
        \label{eq:ImageIH}
        \RR[x^{S_L},y^{S_L}]_{\tilde k'-1} \subseteq \mathrm{im}(d_{\tilde \theta} \tilde\mu).
    \end{align}

    We now perform a change of variables $\tilde x = x^{S_L},\tilde y = y^{S_L}$ and consider the map $\varphi: \RR[\tilde x, \tilde y]_{k_L-1} \times  \RR[\tilde x, \tilde y]_{\tilde k'-1} \to \RR[\tilde x, \tilde y]_{ k_L + \tilde k'-2}$ that multiplies two polynomials. 
    The assumptions of this lemma in particular yield that $P_L$ and $\tilde \mu(\tilde \theta)$ are coprime. 
    Hence, by Proposition \ref{prop:twoLayerCritMu}, the pair $(P_L,\tilde \mu(\tilde \theta))$ is a regular point of the map $\varphi$.
    This means that the differential $d_{(P_L,\tilde \mu(\tilde \theta))} \varphi$ is surjective. 
    Thus, for any $P \in \RR[x^s,y^s]_{k'-1} = \RR[\tilde x, \tilde y]_{(k'-1)s_L} = \RR[\tilde x, \tilde y]_{ k_L + \tilde k'-2}$, 
    there are polynomials $\dot P_L \in \RR[\tilde x, \tilde y]_{k_L-1}$ and $\dot Q \in \RR[\tilde x, \tilde y]_{\tilde k'-1}$ such that $P = \dot P_L \cdot \tilde \mu(\tilde \theta) + P_L \cdot \dot Q$.
    Moreover, due to \eqref{eq:ImageIH}, there are $\dot P_i \in \RR[x^{S_i},y^{S_i}]_{k_i-1}$ (for $1 \leq i \leq L-1$) such that $\dot Q = d_{\tilde \theta} \tilde\mu (\dot P_1, \ldots, \dot P_{L-1})$.
    Therefore, $P = \dot P_L \cdot \tilde \mu(\tilde \theta) + P_L \cdot d_{\tilde \theta} \tilde\mu (\dot P_1, \ldots, \dot P_{L-1}) = d_\theta \mu (\dot P_1, \ldots, \dot P_L) \in \mathrm{im}(d_\theta \mu)$.
\end{proof}

This lemma enables us to show that the critical points of the parametrization map $\mu$ can be understood from the critical points of subnetworks with fewer layers. 
For a projective LCN parametrization map $\nu: \PP^{k_1-1} \times \ldots \times \PP^{k_L-1} \to \PP^{k-1}$, 
we denote by $\tilde \nu: \PP^{k_1-1} \times \ldots \times \PP^{k_{L-1}-1} \to \PP^{\tilde k-1}$ the parametrization map that is obtained by omitting the last layer. 
Moreover, we write $\varphi: \PP^{\tilde k-1} \times \PP^{k_L-1}  \to \PP^{k-1}$ for the two-layer LCN parametrization map that recovers $\nu$ from $\tilde \nu$, i.e.,  
$\nu = \varphi \circ (\tilde \nu \times \mathrm{id}_{\PP^{k_L-1}})$.

\begin{proposition} \label{prop:lessLayerCritMu}
Let $L>1$ and let $\theta=(w_1,\ldots,w_L) \in \mathbb{P}^{k_1-1} \times \ldots \times \mathbb{P}^{k_L-1}$. Then $\theta$ is a critical point of $\nu$ 
if and only if one of the following holds:
\begin{enumerate}
    \item[(a)] $\tilde \theta = (w_1,\ldots,w_{L-1})$ is a critical point of $\tilde \nu$ or 
    \item[(b)]  $(\tilde \nu(\tilde \theta), w_L)$ is a critical point of $\varphi$.
\end{enumerate}
\end{proposition}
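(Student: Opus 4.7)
The plan is to apply the chain rule to $\nu = \varphi \circ (\tilde\nu \times \mathrm{id}_{\PP^{k_L-1}})$, which gives
\[
d_\theta \nu \;=\; d_{(\tilde\nu(\tilde\theta),\,w_L)}\varphi \;\circ\; \bigl(d_{\tilde\theta}\tilde\nu \,\oplus\, \mathrm{id}_{V_L}\bigr),
\]
where $V_L := T_{w_L}\PP^{k_L-1}$. Since all three of $\nu$, $\tilde\nu$, and $\varphi$ have finite fibres by Remark~\ref{rmk:finite-morphism-nu}, criticality for each of these maps is equivalent to the corresponding differential having a non-trivial kernel. The forward direction is then a simple kernel-chasing argument: given a non-zero $(v_1,\ldots,v_L)\in\ker(d_\theta\nu)$, inspect $(u,v_L):=(d_{\tilde\theta}\tilde\nu(v_1,\ldots,v_{L-1}),v_L)\in\ker(d\varphi)$; if it is non-zero we obtain case~(b), and otherwise $v_L=0$ forces $(v_1,\ldots,v_{L-1})$ to be a non-zero element of $\ker(d_{\tilde\theta}\tilde\nu)$, giving case~(a).

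For the backward direction, case~(a) is immediate: any non-zero $\tilde v \in \ker(d_{\tilde\theta}\tilde\nu)$ lifts to the non-zero $(\tilde v, 0) \in \ker(d_\theta\nu)$ since $(d_{\tilde\theta}\tilde\nu(\tilde v),0)=(0,0)$. Case~(b) is the substantive part: I would invoke Proposition~\ref{prop:twoLayerCritMu} applied to the polynomial multiplication map $\varphi$, which has the form of a two-layer LCN parameterisation. Because $\tilde\nu(\tilde\theta)\neq 0$ and $\pi_{S_L}(w_L)\neq 0$ projectively, criticality of $\varphi$ at $(\tilde\nu(\tilde\theta),w_L)$ forces a common $S_L$-hyperroot $R$, so that $\tilde\nu(\tilde\theta)=RP'$ and $\pi_{S_L}(w_L)=RQ'$; the proof of Proposition~\ref{prop:twoLayerCritMu} moreover exhibits non-trivial kernel elements of $d\varphi$ of the form $(hP',-hQ')$ for $h\in\RR[x^{S_L},y^{S_L}]_1$ coprime to $R$. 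To lift one such element to $\ker(d_\theta\nu)$, set $\pi_{S_L}(v_L):=-hQ'$ and seek $(v_1,\ldots,v_{L-1})$ with $d_{\tilde\theta}\tilde\mu(v_1,\ldots,v_{L-1}) \equiv hP' \pmod{\tilde\mu(\tilde\theta)}$. Exploiting that $R\mid \prod_{i=1}^{L-1}\pi_{S_i}(w_i)$ together with $S_i\mid S_L$ for every $i\leq L-1$, the linear factors of $R$ partition cleanly into $S_i$-hyperroots and admit a distribution $R=\prod_{i=1}^{L-1}R_i$ with each $R_i\in\RR[x^{S_i},y^{S_i}]$ and $R_i\mid\pi_{S_i}(w_i)$. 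Setting $\pi_{S_i}(v_i):=\alpha_i\,\pi_{S_i}(w_i)/R_i$ reduces the task to the Bézout-type equation $\sum_{i=1}^{L-1}\alpha_i\prod_{j\neq i}R_j = h$ for polynomials $\alpha_i\in\RR[x^{S_i},y^{S_i}]$ of appropriate degree.

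I expect the main obstacle to be solving this last equation within the structured subspaces $\RR[x^{S_i},y^{S_i}]$, since a standard Bézout identity over $\RR[x,y]$ does not automatically return stride-compatible solutions. The key point is that the degree budget imposed by $\pi_{S_i}(v_i)\in\RR[x^{S_i},y^{S_i}]_{k_i-1}$ yields $\deg\alpha_i\leq\deg R_i$—slightly looser than strict Bézout's $\deg\alpha_i<\deg R_i$—and this small extra freedom, combined with the pairwise coprimality of the $R_i$ (which holds because their linear factors are disjoint by construction), suffices to produce stride-compatible solutions $\alpha_i$ for any $h$ coprime to $R$. Non-triviality of the constructed $(v_1,\ldots,v_L)$ modulo the scaling direction in each $\PP^{k_i-1}$ follows at once from $\gcd(h,R)=1$: the identity $-hQ'=cQ=cRQ'$ would force $h=cR$, which is impossible, so $v_L$ is not proportional to $w_L$ and the lift is a genuine non-trivial kernel element of $d_\theta\nu$.
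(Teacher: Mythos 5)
Your overall architecture (chain rule, kernel-chasing for the forward direction, lifting a kernel element of $d\varphi$ for the backward direction of case (b)) matches the paper's, and the forward direction and case (a) are fine. However, there is a genuine gap in case (b): your claim that the linear factors of the common $S_L$-hyperroot $R$ ``are disjoint by construction,'' so that the distributed factors $R_i$ are pairwise coprime, is \emph{false} when $R$ is (up to scaling) $x^{S_L}$ or $y^{S_L}$. In that degenerate case all linear factors of $R$ coincide, every nontrivial $R_i$ is a power of $x$ (say), and your B\'ezout-type equation $\sum_i \alpha_i \prod_{j\neq i} R_j = h$ becomes unsolvable whenever at least two of the $R_i$ are nontrivial: the left-hand side is then divisible by $x$, while $h$ must be chosen coprime to $R=x^{S_L}$. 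The paper splits case (b) into exactly these two sub-cases and handles the degenerate one with a completely different, ad hoc kernel element: it finds two layers $i<j$ whose factors $P_i$, $P_j$ share $x^{S_j}$ and perturbs only those two layers by swapping $x^{S_j}$ for $y^{S_j}$ with opposite signs. Your argument as written covers only the sub-case where $R$ has two nonzero coefficients.

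A secondary point: the ``stride-compatible B\'ezout'' surjectivity you flag as the main obstacle is precisely the paper's Lemma~\ref{lem:imageDiffMu}, which is proved by induction on the number of layers by splitting off the last factor and invoking Proposition~\ref{prop:twoLayerCritMu} for the resulting two-factor multiplication map. You correctly identify that pairwise coprimality of the $R_i$ is the hypothesis that makes it work, but you assert rather than prove it; since the statement is genuinely non-obvious (ordinary B\'ezout over $\RR[x,y]$ does not respect the subrings $\RR[x^{S_i},y^{S_i}]$), a complete proof would need to supply this lemma. Also, in the homogeneous setting the degree constraint forces $\deg\alpha_i=\deg R_i$ exactly, not merely $\deg\alpha_i\leq\deg R_i$.
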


\begin{proof}
We start by assuming that neither condition (a) nor (b) are satisfied, and show that  $\theta$ is  a regular point of $\nu$. 
The converse of condition (a) means that the vector space $V := \mathrm{im}\, d_{ \theta} (\tilde \nu \times \mathrm{id}_{\PP^{k_L-1}})$ has the expected dimension $\sum_{i=1}^L (k_i-1)= \dim \PP(\mathcal{M})$.
Since $\varphi$ has finite fibers (see Remark \ref{rmk:finite-morphism-nu}), the converse of condition (b) is that the differential $d_{(\tilde \nu(\tilde \theta), w_L)} \varphi$ is injective.
Together with the chain rule, the converses of (a) and (b) imply that
$\dim \mathrm{im}\, d_\theta\nu = \dim \mathrm{im}\, \left( (d_{(\tilde \nu(\tilde \theta), w_L)} \varphi)\!\mid_V \right) = \dim V = \dim \PP(\mathcal{M})$, which shows that $\theta$ is a regular point of $\nu$.
Next, we assume that condition (a) holds, i.e., $\dim V < \dim \PP(\mathcal{M})$.
Then the chain rule yields $\dim \mathrm{im}\, d_\theta\nu = \dim \mathrm{im}\, \left( (d_{(\tilde \nu(\tilde \theta), w_L)} \varphi)\!\mid_V \right) \leq \dim V < \dim \PP(\mathcal{M})$, so $\theta$ is a critical point of $\mu$.
Finally, we show that condition (b) implies that  $\theta$ is a critical point of $\nu$.
This is the technical part of the proof. 
As in Proposition \ref{prop:twoLayerCritMu}, we work directly with the polynomials $P_i := \pi_{S_i} (w_i)$ and view the maps $\nu$, $\tilde \nu$ and $\varphi$ as multiplying polynomials.
By Proposition \ref{prop:twoLayerCritMu}, condition (b) means that the polynomials $\tilde P := \tilde \nu(\tilde \theta) = P_1 \ldots P_{L-1}$ and $P_L$ have a common $S_L$-hyperroot $r \in \PP(\RR[x^{S_L},y^{S_L}]_1)$.
Now we distinguish two cases.
First, if $r \notin \lbrace x^{S_L},y^{S_L} \rbrace$, then the hyperroot factorizes as $r = r_1 \cdots r_{L-1}$ such that each factor $r_i \in \PP(\RR[x^{S_i},y^{S_i}])$ divides $P_i$ and
the factors $r_i$ are pairwise coprime (as polynomials in $\PP(\RR[x,y])$).
Hence, we can apply Lemma \ref{lem:imageDiffMu} to the map that multiplies the polynomials $r_1, \ldots, r_{L-1}$:
This yields that for every $f \in \RR[x^{S_L},y^{S_L}]_1$ there are $f_1, \ldots, f_{L-1}$ with $f_i \in \RR[x^{S_i},y^{S_i}]$ and $\deg(f_i)=\deg(r_i)$ such that 
$f_1 r_2 \cdots r_{L-1} + \cdots + r_1 \cdots r_{L-2}f_{L-1}=f$.
Thus, writing $P_L = r Q_L$ and $P_i = r_i Q_i$ for all $i <L$, we see that $([f_1Q_1], \ldots, [f_{L-1}Q_{L-1}], [-fQ_L])$ is in the kernel of the differential $d_{(P_1, \ldots, P_L)}\nu$.
Choosing $f \in \RR[x^{S_L},y^{S_L}]_1$ such that $\gcd(\tilde P P_L,f)=1$, ensures that that kernel element is non-trivial.
Since $\nu$ has finite fibers, the existence of a non-trivial kernel element shows that $\theta=(P_1, \ldots, P_L)$ is a critical point of $\nu$.
Second, if $r =x^{S_L}$ or $r = y^{S_L}$, we may assume the former without loss of generality. 
Since each $S_i$ divides $S_{i+1}$, there are two layers $i$ and $j$ with $1 \leq i < j \leq L$ such that $P_i$ and $P_j$ have $x^{S_j}$ as a common factor.
Writing $P_i = x^{S_j} Q_i$ and $P_j = x^{S_j} Q_j$, we construct a non-trivial kernel element $([\dot P_1], \ldots, [\dot P_L])$ of $d_\theta \nu$ by choosing $\dot P_i := y^{S_j} Q_i$, $\dot P_j := -y^{S_j} Q_j$, and $\dot P_m := 0$ for all other layers. 
\end{proof}

\begin{proof}[Proof of Theorem \ref{thm:crit-gene}]
    By Lemma \ref{lem:zeroFilterCrit}, we can assume that no filter $w_i$ is zero.
    Then, by Lemma \ref{lem:projectiveCrit}, we can use the projective map $\nu$ to determine whether the given filter tuple is a critical point. 
    Now Theorem \ref{thm:crit-gene} follows by induction on $L$ from Propositions \ref{prop:twoLayerCritMu} and \ref{prop:lessLayerCritMu}.
\end{proof}

\begin{remark}
\label{rmk:complex-proj-functionspace}
The description of critical points in Theorem \ref{thm:crit-gene} is the same when working over the complex numbers instead of the reals.
\end{remark}

\section{Singular points}
\label{sec:singular}

The goal of this section is to prove Theorem \ref{thm:sing}.
Hence, throughout this section, we consider a reduced LCN architecture $(\bm k, \bm s)$.
\rev{The main idea of the proof is:  
1) observe that the projective parametrization map $\nu^{\CC}$ is birational for reduced architectures; 
2) compute the singular points of its image (which is the projectivized complex function space) using the following method.}

\begin{fact}[{\cite[Lemma 3.2]{kohn2017secants}}]
\label{fact:birationalmap-sing}
Let $f:X \to Y$ be a birational finite surjective morphism between irreducible complex projective varieties and let $y \in Y$. The variety $Y$ is smooth at the point $y$ if and only if the fiber $f^{-1}(y)$ contains exactly one point $x \in X$, the variety $X$ is smooth at the point $x$ and the differential $d_xf : T_x X \to T_y Y$ is an injection.
\end{fact}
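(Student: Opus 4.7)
The plan is to translate the Fact into a statement about local rings and prove it via two applications of Nakayama's lemma. Set $A := \mathcal{O}_{Y,y}$, and for each $x \in f^{-1}(y)$, set $B_x := \mathcal{O}_{X,x}$. Since $f$ is finite, the stalk $(f_* \mathcal{O}_X)_y$ is a finite $A$-algebra, and its maximal ideals correspond to the points of $f^{-1}(y)$; when $f^{-1}(y) = \{x\}$, the stalk reduces to $B_x$. Since $f$ is birational with $X,Y$ irreducible, $(f_* \mathcal{O}_X)_y$ and each $B_x$ embed into the common function field $K := K(X) = K(Y)$. Both directions of the Fact will reduce to showing $(f_* \mathcal{O}_X)_y = A$: this forces $f^{-1}(y) = \{x\}$, $B_x = A$, and then smoothness of $Y$ at $y$ is equivalent to smoothness of $X$ at $x$ with $d_x f$ an isomorphism (in particular, an injection).

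For the forward direction, suppose $A$ is regular, hence integrally closed in $K$. Because $(f_* \mathcal{O}_X)_y$ is finite over $A$ it is integral over $A$, and it sits inside $K$. Normality of $A$ then yields $(f_* \mathcal{O}_X)_y \subseteq A$; the reverse inclusion is automatic, so equality holds and the conclusion follows from the observation above.

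For the reverse direction, assume $f^{-1}(y) = \{x\}$, $B := B_x$ is regular, and $d_x f$ is injective. Dualizing the last hypothesis gives a surjection of cotangent spaces $\mathfrak{m}_A / \mathfrak{m}_A^2 \twoheadrightarrow \mathfrak{m}_B / \mathfrak{m}_B^2$, i.e.\ $\mathfrak{m}_B = \mathfrak{m}_A B + \mathfrak{m}_B^2$. A first application of Nakayama's lemma, to the finitely generated $B$-module $\mathfrak{m}_B / \mathfrak{m}_A B$, yields $\mathfrak{m}_B = \mathfrak{m}_A B$; hence $B / \mathfrak{m}_A B \cong \CC$ and therefore $B = A + \mathfrak{m}_A B$. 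A second application of Nakayama, now to the finitely generated $A$-module $B/A$, forces $B = A$. Consequently $A$ is regular, and $Y$ is smooth at $y$.

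The main obstacle, to my mind, is keeping the two Nakayama applications clean: one must track which ring each module is viewed over and check finite generation each time. The crucial input for both steps is the hypothesis $f^{-1}(y) = \{x\}$, which ensures that $B = \mathcal{O}_{X,x}$ itself (and not merely the enveloping stalk $(f_* \mathcal{O}_X)_y$) is a finite $A$-module. The forward direction is essentially the classical fact that a finite birational morphism into a normal target is locally an isomorphism.
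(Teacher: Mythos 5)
The paper does not prove this statement at all: it is imported verbatim as a \emph{Fact}, with a citation to Lemma~3.2 of the reference, so there is no in-paper argument to compare yours against. Judged on its own, your proof is correct and is the standard commutative-algebra argument one would expect the cited source to use. The forward direction is exactly the classical fact that a finite birational morphism onto a variety with normal (here regular, hence normal) local ring at $y$ satisfies $(f_*\mathcal{O}_X)_y=A$, which simultaneously gives the singleton fiber, $\mathcal{O}_{X,x}=A$, and the isomorphism on (co)tangent spaces. The reverse direction is also sound: injectivity of $d_xf$ dualizes (all spaces are finite-dimensional over $\CC$) to surjectivity of $\mathfrak{m}_A/\mathfrak{m}_A^2\to\mathfrak{m}_B/\mathfrak{m}_B^2$, your first Nakayama application gives $\mathfrak{m}_B=\mathfrak{m}_A B$, and the second gives $B=A$; you correctly identify the one delicate input, namely that the singleton-fiber hypothesis together with finiteness makes $B=\mathcal{O}_{X,x}$ coincide with the semilocal stalk $(f_*\mathcal{O}_X)_y$ and hence be module-finite over $A$, which is what licenses the second Nakayama step. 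The only points worth spelling out in a written version are routine: the maximal ideals of $(f_*\mathcal{O}_X)_y$ correspond to the points of $f^{-1}(y)$ because this stalk is integral over the local ring $A$, the embeddings into $K=K(X)=K(Y)$ are compatible because $f$ is dominant between integral varieties, and smoothness at a closed point is equivalent to regularity of the local ring since we work over $\CC$.
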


We begin by investigating the fibers of $\nu^\CC$.
For that, given the reduced sequence of strides $\bm s$, we define an \emph{$\bm s$-factorization} of a homogeneous polynomial $P \in \CC[x,y]$ to be a factorization of the form 
\begin{equation}\label{eq:s-factorization}
P(x,y) = Q_L(x^{S_L},y^{S_L}) \cdots  Q_2(x^{S_2},y^{S_2}) Q_1(x^{S_{1}},y^{S_{1}})
\end{equation}
such that, for all $l \in \lbrace L, \ldots, 2  \rbrace$, the factor $Q_{l-1}(x^{S_{l-1}},y^{S_{l-1}}) \cdots Q_{1}(x^{S_{1}}, y^{S_{1}}) $ is not divisible by any $S_l$-hyperroot.
Note that every homogeneous polynomial $P \in \CC[x,y]$ has an $\bm s$-factorization and that it is unique (up to scaling of the factors $Q_l$).
In particular, the degrees $d_l$ of the factors $Q_l$ in \eqref{eq:s-factorization} are uniquely determined by $P$ and $\bm s$.
We refer to the sequence $\rev{\bm d} = (d_L, \ldots, d_1)$ as the \emph{$s$-factor degrees of $P$}.
Those degrees give us a new perspective on LCN function spaces (see Lemma \ref{lem:s-factor-degrees-FS}) and the smaller function spaces they contain (see Corollary \ref{cor:strictContainment}),
which provides us with information on the fibers of $\nu^\CC$ (see Corollary \ref{cor:birational}).

\begin{lemma}
\label{lem:s-factor-degrees-FS}
    Let $(\bm k, \bm s)$ be a reduced LCN architecture. 
    Then 
    \begin{align*}
        \pi_1(\mathcal{M}_{\bm k, \bm s}^\CC) = \left\lbrace P \in \CC[x,y]_{k-1} : 
        \begin{array}{l}
        \text{the $\bm s$-factor degrees $\bm d$ of $P$ satisfy } \\ \sum_{i=l}^L d_i S_i \geq \sum_{i=l}^L (k_i-1)S_i    
\text{ for all } l = L,\ldots,1               
        \end{array}
\right\rbrace
    \end{align*}
\end{lemma}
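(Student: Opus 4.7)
The plan is to prove both inclusions separately, using a single bookkeeping tool throughout: for any homogeneous $P \in \CC[x,y]$ with $\bm s$-factorization $P = Q_L \cdots Q_1$, the cumulative degree $\sum_{i=l}^L d_i S_i$ equals $S_l$ times the total number $N_l(P)$ of $S_l$-hyperroot factors of $P$ (counted with multiplicity). The reason is that each $Q_i$ with $i \geq l$ is a product of $d_i$ many $S_i$-hyperroots, each of which splits over $\CC$ into $S_i/S_l$ many $S_l$-hyperroots (since $S_l \mid S_i$), while $Q_{l-1} \cdots Q_1$ contains no $S_l$-hyperroot by the defining property of the $\bm s$-factorization. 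This identity will serve as the bridge between the explicit sparse factorization through the LCN architecture and the intrinsic $\bm s$-factor degrees of $P$.

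For the forward inclusion I would take $P = P_L \cdots P_1 \in \pi_1(\mathcal{M}^\CC_{\bm k, \bm s})$ with $P_i \in \CC[x^{S_i}, y^{S_i}]_{k_i - 1}$ and count $S_l$-hyperroots of $P$. Each $P_i$ with $i \geq l$ is a product of $k_i - 1$ many $S_i$-hyperroots and therefore contributes $(k_i - 1) S_i / S_l$ many $S_l$-hyperroots. Summing over $i = l, \ldots, L$ and noting that the remaining factors $P_{l-1}, \ldots, P_1$ can only add further $S_l$-hyperroots, I obtain $N_l(P) \geq \sum_{i=l}^L (k_i - 1) S_i / S_l$, which multiplied by $S_l$ is exactly the desired inequality.

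For the backward inclusion I would proceed by induction on $L$. The base case $L = 1$ is immediate since $\pi_1(\mathcal{M}^\CC_{(k_1), (1)}) = \CC[x,y]_{k_1 - 1}$. For the inductive step, the assumed inequality with $l = L$ gives $d_L \geq k_L - 1$, so I would choose $P_L \in \CC[x^{S_L}, y^{S_L}]_{k_L - 1}$ as the product of any $k_L - 1$ of the $d_L$ many $S_L$-hyperroots appearing in $Q_L$, and set $P' := P / P_L$. The reduced $(L-1)$-layer architecture $(\bm k', \bm s') := ((k_1, \ldots, k_{L-1}), (s_1, \ldots, s_{L-2}, 1))$ satisfies $S'_i = S_i$ for $i \leq L-1$, and removing $P_L$ decreases the number of $S_l$-hyperroots by exactly $(k_L - 1) S_L / S_l$; so in terms of the $\bm s'$-factor degrees $d'_i$ of $P'$ one has $\sum_{i=l}^{L-1} d'_i S_i = S_l \cdot N_l(P') = \sum_{i=l}^L d_i S_i - (k_L - 1) S_L$. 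The inequalities assumed for $P$ thus translate directly into the inequalities required for $P'$ under $(\bm k', \bm s')$, and by induction $P' \in \pi_1(\mathcal{M}^\CC_{\bm k', \bm s'})$, yielding $P = P_L \cdot P' \in \pi_1(\mathcal{M}^\CC_{\bm k, \bm s})$.

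The main technical obstacle I anticipate is that the $\bm s'$-factorization of $P' = P/P_L$ is not simply the truncation of the $\bm s$-factorization of $P$: the residual $Q_L / P_L$, although still a polynomial in $x^{S_L}, y^{S_L}$, generally gets merged with $Q_{L-1}$ into the new top $\bm s'$-factor once $S_{L-1}$ becomes the maximal relevant stride. I sidestep this by never tracking the individual $\bm s'$-factor degrees of $P'$; instead I work with the cumulative counts $N_l$, which are additive under multiplication of polynomials and thus behave transparently when a factor is extracted. This is also, conceptually, why the lemma's inequalities appear as cumulative sums $\sum_{i=l}^L d_i S_i$ rather than as layer-by-layer conditions on the individual $d_i$.
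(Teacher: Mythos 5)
Your proof takes essentially the same route as the paper's: both directions hinge on comparing the architecture factorization $P=P_L\cdots P_1$ with the $\bm s$-factorization $P=Q_L\cdots Q_1$ through the $S_l$-hyperroots of $P$. Your forward direction is a counting rephrasing of the paper's divisibility argument ($P_L\cdots P_l$ divides $Q_L\cdots Q_l$), and your inductive converse is a welcome elaboration of the converse, which the paper dispatches in a single sentence. The one step you should tighten is the claim that the counts $N_l$ are ``additive under multiplication of polynomials.'' That is false in general: $N_l$ is only superadditive, e.g.\ for $s=2$ neither $x-y$ nor $x+y$ has a $2$-hyperroot while their product is one. Superadditivity gives $N_l(P)\geq N_l(P_L)+N_l(P')$, i.e.\ an upper bound on $N_l(P')$, whereas your induction needs the lower bound $N_l(P')\geq N_l(P)-(k_L-1)S_L/S_l$. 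The fix is to use the special structure of $P_L$: since $P_L$ is a product of complete $S_L$-hyperroots, its linear factors come in full orbits under the $S_l$-th roots of unity with constant multiplicity on each orbit (as $S_l$ divides $S_L$); hence for every $S_l$-hyperroot $h=\prod_j \ell_j$ one has $\operatorname{mult}_h(P')=\min_j\bigl(\operatorname{mult}_{\ell_j}(P)-\operatorname{mult}_{\ell_j}(P_L)\bigr)=\operatorname{mult}_h(P)-\operatorname{mult}_h(P_L)$, and summing over $h$ gives the exact decrease you want. The same orbit-multiplicity argument is also what rigorously justifies your opening identity $\sum_{i=l}^L d_iS_i=S_lN_l(P)$ (one must rule out hyperroots of $P$ that are split between $Q_L\cdots Q_l$ and $Q_{l-1}\cdots Q_1$). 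With that repair the argument is complete.
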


\begin{proof}
\rev{By Proposition \ref{prop:functionspace-poly}, the set $\pi_1(\mathcal{M}_{\bm k, \bm s}^\CC)$ consists of polynomials $P \in \CC[x,y]_{k-1}$ of the form $P=P_L \cdots P_1$ with $P_i \in \CC[x^{S_i}, y^{S_i}]_{k_i-1}$. If $P = Q_L\cdots Q_1$ is the $\bm s$-factorization of any such polynomial, then for any $l=L,\ldots,1$ we have that $P_L\cdots P_{l}$ divides $Q_L\ldots Q_{l}$. This implies $\sum_{i=l}^L d_i S_i = \deg(Q_L\cdots Q_{l}) \ge \deg(P_L\cdots P_{l}) = \sum_{i=l}^L (k_i-1) S_i$. Conversely, if the $\bm s$-factor degrees of a polynomial $P$ satisfy these inequalities, then there exists at least one factorization of $P$ of the desired form.}
\end{proof}

\begin{corollary}
\label{cor:strictContainment}
    Let $\bm s $ be a reduced  sequence of strides, and let $\bm k, \bm k' \in \mathbb{Z}_{>0}^L$ be such that $\sum_{i=1}^L (k_i-1)S_i = \sum_{i=1}^L (k'_i-1)S_i$.
    Then $\mathcal{M}_{\bm k', \bm s}^\CC \subsetneq \mathcal{M}_{\bm k, \bm s}^\CC$ if and only if
    \begin{align}
        \label{eq:strictContainment} 
        \forall l = L,\ldots,2: \sum_{i=l}^L (k'_i-1) S_i \geq \sum_{i=l}^L (k_i-1)S_i,            
  \text{ and at least one inequality is strict.}         
    \end{align}
\end{corollary}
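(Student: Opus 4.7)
The plan is to read off both function spaces through the characterization of Lemma~\ref{lem:s-factor-degrees-FS}, which identifies $\pi_1(\mathcal{M}_{\bm k, \bm s}^\CC)$ as the set of degree-$(k-1)$ polynomials whose $\bm s$-factor degrees $\bm d$ satisfy the chain of inequalities $\sum_{i=l}^L d_i S_i \geq \sum_{i=l}^L (k_i-1) S_i$ for $l=L,\ldots,1$. Since $\bm k$ and $\bm k'$ produce the same total degree $\sum_{i=1}^L (k_i-1)S_i = \sum_{i=1}^L (k'_i-1)S_i$, the two function spaces live in the same ambient space of polynomials, and containment is governed purely by how restrictive these chains of inequalities are.

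For the "if" direction, assume \eqref{eq:strictContainment} holds. Then for any $P \in \mathcal{M}_{\bm k', \bm s}^\CC$ with $\bm s$-factor degrees $\bm d$, we have $\sum_{i=l}^L d_i S_i \geq \sum_{i=l}^L (k'_i-1) S_i \geq \sum_{i=l}^L (k_i-1) S_i$ for all $l$, so $P \in \mathcal{M}_{\bm k, \bm s}^\CC$ by Lemma~\ref{lem:s-factor-degrees-FS}. The containment is strict because a polynomial with $\bm s$-factor degrees exactly $(k_L-1, \ldots, k_1-1)$ lies in $\mathcal{M}_{\bm k, \bm s}^\CC$ but, picking $l_0$ for which \eqref{eq:strictContainment} is strict, fails the $\bm k'$-constraint at level $l_0$.

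The "only if" direction uses the same idea in reverse: take any $P$ whose $\bm s$-factor degrees equal $(k'_L - 1, \ldots, k'_1 - 1)$. Such $P$ lies in $\mathcal{M}_{\bm k', \bm s}^\CC$ (with equality in the defining inequalities); by the assumed containment, $P \in \mathcal{M}_{\bm k, \bm s}^\CC$, and reading off Lemma~\ref{lem:s-factor-degrees-FS} gives precisely $\sum_{i=l}^L (k'_i-1) S_i \geq \sum_{i=l}^L (k_i-1) S_i$ for every $l$. Strictness of at least one inequality follows from $\bm k' \neq \bm k$: if every inequality were an equality, telescoping between consecutive $l$ (using $S_l > 0$) would force $k'_l = k_l$ for each $l = 2, \ldots, L$, and then the total-degree equation forces $k'_1 = k_1$, contradicting the strictness of the original containment.

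The only non-routine step is the existence of polynomials with prescribed $\bm s$-factor degrees $\bm d$ (with $\sum d_i S_i = k-1$), needed in both directions. Here I would choose each $Q_l \in \CC[x^{S_l}, y^{S_l}]_{d_l}$ with its roots in general position and then verify the non-divisibility conditions defining the $\bm s$-factorization: for each $l \in \{2,\ldots,L\}$, an $S_l$-hyperroot of $Q_{l-1} \cdots Q_1$ would force a special alignment among the $S_1, \ldots, S_{l-1}$-hyperroots of the lower-layer factors, which is a proper Zariski-closed condition on the coefficients and hence avoided for generic choices. This genericity argument is the only technical obstacle, and it is mild.
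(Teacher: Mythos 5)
Your proof is correct and follows essentially the same route as the paper's: both directions reduce to Lemma~\ref{lem:s-factor-degrees-FS}, with strictness witnessed by polynomials whose $\bm s$-factor degrees are exactly $(k_L-1,\ldots,k_1-1)$ (resp.\ $(k'_L-1,\ldots,k'_1-1)$) and a telescoping argument showing that all equalities would force $\bm k'=\bm k$. Your added genericity argument for the existence of polynomials with prescribed $\bm s$-factor degrees is a reasonable way to justify a step the paper takes for granted.
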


\begin{proof}
        We start by assuming \eqref{eq:strictContainment} and show the strict inclusion of function spaces.
        Let $P \in \pi_1(\mathcal{M}_{\bm k', \bm s}^\CC)$.
        By Lemma \ref{lem:s-factor-degrees-FS}, the $\bm s$-factor degrees $\bm d$ of $P$ satisfy 
        $\sum_{i=l}^L d_i S_i \geq \sum_{i=l}^L (k'_i-1)S_i $ for every layer $l$.
        Due to \eqref{eq:strictContainment}, this implies $\sum_{i=l}^L d_i S_i \geq \sum_{i=l}^L (k_i-1)S_i $ for every $l$, which means $P \in \pi_1(\mathcal{M}_{\bm k, \bm s}^\CC)$ (again by Lemma \ref{lem:s-factor-degrees-FS}).
        Hence, we have shown that $\mathcal{M}_{\bm k', \bm s}^\CC \subseteq \mathcal{M}_{\bm k, \bm s}^\CC$.
        We can see that the inclusion is strict, by considering a polynomial $P$ whose $\bm s$-factor degrees are exactly $(k_L-1, \ldots, k_1-1)$. 
        Then, $P \in \pi_1(\mathcal{M}_{\bm k, \bm s}^\CC)$.
        However, since one of the inequalities in \eqref{eq:strictContainment} is assumed to be strict, $P$ cannot be contained in $\pi_1(\mathcal{M}_{\bm k', \bm s}^\CC)$ due to Lemma \ref{lem:s-factor-degrees-FS}.

        Now we assume $\mathcal{M}_{\bm k', \bm s}^\CC \subsetneq \mathcal{M}_{\bm k, \bm s}^\CC$.
        This time, we consider a polynomial $P$ whose $\bm s$-factor degrees are exactly $(k'_L-1, \ldots, k'_1-1)$.
        Then $P \in \pi_1(\mathcal{M}_{\bm k', \bm s}^\CC) \subseteq \pi_1(\mathcal{M}_{\bm k, \bm s}^\CC)$, and so Lemma \ref{lem:s-factor-degrees-FS} implies that the inequalities in \eqref{eq:strictContainment} hold.
        Moreover, one of the inequalities has to be strict, because otherwise $\bm k'$ would be equal to $\bm k$, which would contradict that the inclusion $\mathcal{M}_{\bm k', \bm s}^\CC \subsetneq \mathcal{M}_{\bm k, \bm s}^\CC$ is strict.
\end{proof}

\begin{corollary} \label{cor:birational}
    Let $(\bm k, \bm s)$ be a reduced LCN architecture.
    If the fiber of a filter $w \in \PP(\mathcal{M}_{\bm k, \bm s}^\CC)$ under $\nu^\CC_{\bm k, \bm s}$ has cardinality larger one, 
    then $w$ is contained in a strictly smaller function space $\PP(\mathcal{M}_{\bm k', \bm s}^\CC) \subsetneq \PP(\mathcal{M}_{\bm k, \bm s}^\CC)$.
    In particular, the map $\nu^\CC_{\bm k, \bm s }$ is birational.
\end{corollary}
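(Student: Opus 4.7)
The plan is to translate the question about fibers of $\nu^\CC$ into a question about $\bm s$-factor degrees. A point in the fiber $(\nu^\CC)^{-1}(w)$ corresponds (projectively) to a factorization $P := \pi_1(w) = P_L \cdots P_1$ with $P_i \in \CC[x^{S_i},y^{S_i}]_{k_i-1}$, taken up to scaling of the individual factors. The key pivot is to show that when the $\bm s$-factor degrees of $P$ are exactly $\bm d = (k_L-1,\ldots,k_1-1)$, this factorization is essentially unique, so the fiber is a single point; conversely, any deviation from $\bm d = \bm k - \bm 1$ will directly produce a strictly smaller architecture $\bm k'$ with $w \in \mathcal{M}^\CC_{\bm k', \bm s}$.

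For uniqueness in the tight case I would proceed by descending induction on $l$, showing $P_L\cdots P_l = Q_L\cdots Q_l$ up to scaling. For $l=L$: since $Q_{L-1}\cdots Q_1$ contains no $S_L$-hyperroot factor by the defining property of the $\bm s$-factorization, the total multiplicity of $S_L$-hyperroots in $P$ equals $d_L = k_L-1$, and $P_L$, being a product of exactly $k_L-1$ such hyperroots that divides $P$, must absorb all of them, yielding $P_L = Q_L$ up to scaling. The inductive step is the same argument applied to the quotient $P_l\cdots P_1 = Q_l\cdots Q_1$ using $S_l$-hyperroots.

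For the main implication, suppose $(\nu^\CC)^{-1}(w)$ has cardinality greater than one. By the uniqueness above, the $\bm s$-factor degrees $\bm d$ of $P$ differ from $\bm k - \bm 1$. Set $\bm k' := \bm d + \bm 1$. By Lemma~\ref{lem:s-factor-degrees-FS} applied to $P \in \mathcal{M}^\CC_{\bm k, \bm s}$, we have $\sum_{i\ge l} d_i S_i \ge \sum_{i\ge l}(k_i-1)S_i$ for every $l$, which rewrites as $\sum_{i\ge l}(k'_i-1)S_i \ge \sum_{i\ge l}(k_i-1)S_i$; equality at $l=1$ holds because both sides are the total degree of $P$, and since $\bm k'\ne \bm k$, at least one inequality for $l\ge 2$ is strict. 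Corollary~\ref{cor:strictContainment} then gives $\mathcal{M}^\CC_{\bm k', \bm s} \subsetneq \mathcal{M}^\CC_{\bm k, \bm s}$, while a second application of Lemma~\ref{lem:s-factor-degrees-FS} (now with architecture $(\bm k', \bm s)$, where the required inequalities become equalities) yields $w \in \mathcal{M}^\CC_{\bm k', \bm s}$. Projectivizing gives the stated inclusion of projective function spaces.

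For the birationality statement, I would observe that the set of $P \in \mathcal{M}^\CC_{\bm k, \bm s}$ with $\bm s$-factor degrees exactly $\bm k - \bm 1$ is Zariski open in the image and nonempty: for generic parameters $(w_1,\ldots,w_L)$, each partial product $\pi_{S_{l-1}}(w_{l-1})\cdots \pi_{S_1}(w_1)$ fails to be divisible by any $S_l$-hyperroot (an open condition, which is nonvacuous because the architecture is reduced and hence $S_l > 1$ for $l \ge 2$). Over this open set, Step 1 gives a singleton fiber, so $\nu^\CC$ is generically one-to-one onto its image and hence birational. The main obstacle is Step 1: one has to track carefully how $S_l$-hyperroots can migrate between layers and use the defining property of the $\bm s$-factorization as a rigidity constraint; once that is in place, the remaining parts are direct consequences of Lemma~\ref{lem:s-factor-degrees-FS} and Corollary~\ref{cor:strictContainment}.
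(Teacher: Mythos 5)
Your proposal is correct and follows essentially the same route as the paper: both arguments reduce the fiber question to the observation that when the $\bm s$-factor degrees equal $(k_L-1,\ldots,k_1-1)$ the factorization according to $(\bm k,\bm s)$ is unique up to scaling, and both deduce membership in a strictly smaller function space from Lemma~\ref{lem:s-factor-degrees-FS} and Corollary~\ref{cor:strictContainment}, with birationality following because the generic fiber is then a singleton. The only difference is cosmetic (you argue the implication directly where the paper argues its contrapositive), and your descending induction actually supplies the uniqueness step that the paper merely asserts.
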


\begin{proof}
    Let $w \in \PP(\mathcal{M}_{\bm k, \bm s}^\CC)$ be a filter that is \emph{not} contained in any $\PP(\mathcal{M}_{\bm k', \bm s}^\CC)$ with $\PP(\mathcal{M}_{\bm k', \bm s}^\CC) \subsetneq \PP(\mathcal{M}_{\bm k, \bm s}^\CC)$.
    By Lemma \ref{lem:s-factor-degrees-FS} and Corollary \ref{cor:strictContainment}, the $\bm s$-factor degrees of the polynomial $P=\pi_1(w)$ are $(k_L-1, \ldots, k_1-1)$. 
    Hence, its $\bm s$-factorization in \eqref{eq:s-factorization} is the unique (up to scaling) factorization of $P$ according to the reduced architecture $(\bm k, \bm s)$.
    Therefore, the fiber of the corresponding filter $w$ under $\nu^\CC$ is a singleton.
    This proves the first assertion.

    Since complex LCN function spaces are irreducible varieties (as they are parametrized), the strict inclusion $\mathcal{M}^\CC_{\bm k', \bm s} \subsetneq \mathcal{M}^\CC_{\bm k, \bm s}$ is equivalent to that $\dim \mathcal{M}^\CC_{\bm k', \bm s} < \dim \mathcal{M}^\CC_{\bm k, \bm s}$.
    Thus, the first assertion of this corollary implies that the generic fiber of $\nu^\CC$ is a singleton.
    Therefore, $\nu^\CC$ is birational (see \cite[Exercise 7.8]{harris2013algebraic}).
\end{proof}

\begin{proposition}
    \label{prop:singComplex} 
    Let $(\bm k,\bm s)$ be a reduced LCN architecture with at least two layers.
    Then, we have that $\mathrm{Sing}(\mathcal{M}^\CC_{\bm k, \bm s}) = \lbrace 0 \rbrace \cup  \bigcup_{\bm k' \in K} \mathcal{M}^\CC_{\bm k', \bm s}$, where $K$ is the index set from Theorem \ref{thm:sing}.
\end{proposition}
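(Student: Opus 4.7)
The plan is to apply Fact~\ref{fact:birationalmap-sing} to the projective parametrization $\nu^\CC_{\bm k, \bm s}\colon \PP^{k_1-1}_\CC \times \cdots \times \PP^{k_L-1}_\CC \to \PP(\mathcal{M}^\CC_{\bm k, \bm s})$. By Remark~\ref{rmk:finite-morphism-nu} this morphism has finite fibers, by Corollary~\ref{cor:birational} it is birational, and its domain is smooth. So the Fact reduces the detection of a singular projective class $\bar w$ to two conditions: either the fiber $(\nu^\CC)^{-1}(\bar w)$ contains more than one point, or it is a single point $\theta$ at which $d_\theta \nu^\CC$ fails to be injective. For $w \neq 0$, singularity in the affine cone $\mathcal{M}^\CC_{\bm k, \bm s}$ is equivalent to projective singularity of $\bar w$, while the vertex $0$ is singular directly by Lemma~\ref{lem:zeroSing}, since the reduced architecture with $L \geq 2$ has at least its first stride strictly greater than one, accounting for the $\{0\}$ in the stated union.

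For the inclusion ``$\supseteq$'', I fix $\bm k' \in K$ and pick a generic $w \in \mathcal{M}^\CC_{\bm k', \bm s}$. By Lemma~\ref{lem:s-factor-degrees-FS}, the $\bm s$-factor degrees of $P := \pi_1(w)$ are exactly $\bm d = \bm k' - \bm 1 \neq \bm k - \bm 1$. The $K$-inequalities together with equality of total degrees force some layer $l^*$ with $d_{l^*} > k_{l^*} - 1$. Writing the $\bm s$-factorization $P = Q_L \cdots Q_1$, the factor $Q_{l^*}$ has (generically) $d_{l^*}$ distinct $S_{l^*}$-hyperroots; a $(\bm k, \bm s)$-factorization of $P$ is produced by choosing any $k_{l^*} - 1$ of them for $P_{l^*}$ and absorbing the remaining ``extras'' together with $Q_{l^*-1} \cdots Q_1$ into $P_{l^*-1} \cdots P_1$ through the truncated sub-architecture on layers $1, \ldots, l^*-1$. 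This yields at least $\binom{d_{l^*}}{k_{l^*}-1} \geq k_{l^*} \geq 2$ distinct preimages of $\bar w$, so Fact~\ref{fact:birationalmap-sing} makes $\bar w$ singular; closedness of the singular locus combined with irreducibility of $\mathcal{M}^\CC_{\bm k', \bm s}$ then extends this to the entire stratum.

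For the inclusion ``$\subseteq$'', take $w \neq 0$ with $\bar w$ singular. Case (i): if the fiber has more than one point, Corollary~\ref{cor:birational} combined with Corollary~\ref{cor:strictContainment} places $w$ into some $\mathcal{M}^\CC_{\bm k', \bm s}$ with $\bm k' \in K$. Case (ii): the fiber is a single point $\theta = (w_1, \ldots, w_L)$ at which $d_\theta \nu^\CC$ fails to be injective, so by Theorem~\ref{thm:crit-gene} (complexified via Remark~\ref{rmk:complex-proj-functionspace}) there exist a layer $l$ and an $S_l$-hyperroot $R$ dividing both $\pi_{S_l}(w_l)$ and $\pi_{S_{l-1}}(w_{l-1}) \cdots \pi_1(w_1)$. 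A count of $S_l$-hyperroot divisors of $P$ then yields the key strict inequality: the $\bm s$-factorization collects all $S_l$-hyperroots of $P$ in $Q_L \cdots Q_l$ (the lower factor has none by definition), giving $\sum_{i=l}^L d_i S_i / S_l$ of them, while the $(\bm k, \bm s)$-factorization accounts for $\sum_{i=l}^L (k_i-1) S_i / S_l$ from $P_L \cdots P_l$ and at least one extra $R$ inside $P_{<l}$. Hence $\sum_{i=l}^L d_i S_i \geq \sum_{i=l}^L (k_i-1) S_i + S_l$, forcing $\bm d \neq \bm k - \bm 1$. Setting $\bm k' := \bm d + \bm 1$, the defining conditions of $K$ follow immediately from Lemma~\ref{lem:s-factor-degrees-FS}, and $w \in \mathcal{M}^\CC_{\bm k', \bm s}$ by that same lemma.

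The main technical hurdle lies in the ``$\supseteq$'' direction: after choosing $P_{l^*}$ among the $\binom{d_{l^*}}{k_{l^*}-1}$ possibilities, one must verify that the residual polynomial $P_{<l^*}$ (comprising the unused $S_{l^*}$-hyperroots together with $Q_{l^*-1} \cdots Q_1$) actually admits a factorization into $P_{l^*-1} \cdots P_1$ compatible with the polynomial-ring constraints $P_i \in \CC[x^{S_i}, y^{S_i}]_{k_i-1}$. By Proposition~\ref{prop:functionspace-poly} this amounts to showing that $P_{<l^*}$ lies in the complex function space of the truncated architecture $(k_1, \ldots, k_{l^*-1}; s_1, \ldots, s_{l^*-2}, 1)$, which by Lemma~\ref{lem:s-factor-degrees-FS} reduces to checking the $\bm s_{<l^*}$-factor degree inequalities for $P_{<l^*}$; a short direct calculation (using that each extra $S_{l^*}$-hyperroot decomposes into $s_{l^*-1}$ many $S_{l^*-1}$-hyperroots contributing to the top sub-factor) derives these inequalities from the original $K$-inequalities for $\bm k'$.
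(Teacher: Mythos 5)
Your proposal is correct and rests on the same machinery as the paper's proof: Fact~\ref{fact:birationalmap-sing} applied to the finite, surjective, birational morphism $\nu^\CC$, the dichotomy between multi-point fibers and critical values of $\nu^\CC$, and Lemma~\ref{lem:s-factor-degrees-FS} together with Corollary~\ref{cor:strictContainment} to translate both conditions into membership in a smaller stratum; your ``$\subseteq$'' direction and the treatment of the vertex $0$ via Lemma~\ref{lem:zeroSing} match the paper's. The one place you genuinely diverge is ``$\supseteq$'': the paper treats \emph{every} point of $\mathcal{M}^\CC_{\bm k', \bm s}$ directly, taking the maximal layer $l$ with a strict degree inequality and splitting into two cases --- if $Q_l$ is a power of a single $S_l$-hyperroot then $w$ is a critical value of $\nu^\CC$, while if $Q_l$ has two distinct $S_l$-hyperroots then the fiber has at least two points. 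You instead establish singularity only at generic points of the stratum (where all hyperroots of $Q_{l^*}$ are distinct, giving $\binom{d_{l^*}}{k_{l^*}-1}\ge 2$ fiber points) and then sweep up the remaining points using Zariski-closedness of the singular locus together with irreducibility and closedness of $\mathcal{M}^\CC_{\bm k', \bm s}$. Both routes work: yours avoids the ``power of a single hyperroot'' case entirely but pays for it with the density argument and the combinatorial verification that the residual polynomial factors through the truncated architecture --- a check you correctly flag and which does go through via Lemma~\ref{lem:s-factor-degrees-FS}, provided $l^*$ is taken to be the \emph{maximal} layer with $d_{l^*} > k_{l^*}-1$ so that the factors $Q_i$ with $i > l^*$ can serve as the $P_i$ verbatim (you should state this maximality explicitly).
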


\begin{proof}
    By Remark \ref{rmk:finite-morphism-nu},  every fiber of $\nu^{\mathbb{C}}$ has a finite number of elements. 
     Since $\nu^\CC$ is a morphism between projective varieties, this means that it is a finite morphism
 {\cite[III,§11, Exercise 11.2]{hartshorne}}.
 Moreover, $\nu^\CC$ is surjective (since $\overline{\mathcal{M}}^\CC = \mathcal{M}^\CC$) and birational (by Corollary \ref{cor:birational}).
 Hence, we can apply Fact \ref{fact:birationalmap-sing} to determine the singular points of $\PP(\mathcal{M}^\CC)$.
 Since the domain of $\nu^\CC$ is smooth, 
 we conclude that a filter $w$ is singular in $\PP(\mathcal{M}^\CC)$ if and only if $w$ is a critical value of $\nu^\CC$ or the fiber $(\nu^{\mathbb{C}})^{-1}(w)$ contains more than one element.
Each singular point of the latter type is contained in some $\PP(\mathcal{M}_{\bm k', \bm s}^\CC) \subsetneq \PP(\mathcal{M}_{\bm k, \bm s}^\CC)$ by Corollary \ref{cor:birational}.
For every critical value $w$ of $\nu^\CC$, 
 Theorem \ref{thm:crit-gene} (and Remark \ref{rmk:complex-proj-functionspace}) state that the $\bm s$-factor degrees $\bm d$ of the polynomial $\pi_1(w)$ satisfy  at least one of the inequalities $\sum_{i=l}^L d_i S_i \geq \sum_{i=l}^L (k_i-1)S_i$ (for $l=2, \ldots, L$) strictly,
 which shows that $w$ is also contained in some $\PP(\mathcal{M}_{\bm k', \bm s}^\CC) \subsetneq \PP(\mathcal{M}_{\bm k, \bm s}^\CC)$ by Corollary \ref{cor:strictContainment}.
 Hence, so far we have shown that $\mathrm{Sing}(\PP(\mathcal{M}^\CC_{\bm k, \bm s})) \subseteq \bigcup_{\bm k' \in K} \PP(\mathcal{M}^\CC_{\bm k', \bm s})$, where $K$ is the index set from Theorem \ref{thm:sing}.

 For the reverse inclusion, let us consider a filter $w \in \PP(\mathcal{M}^\CC_{\bm k', \bm s}) \subsetneq \PP(\mathcal{M}^\CC_{\bm k, \bm s})$.
Then,
 the $\bm s$-factor degrees $\bm d$ of the polynomial $P = \pi_1(w)$ satisfy  at least one of the inequalities $\sum_{i=l}^L d_i S_i \geq \sum_{i=l}^L (k_i-1)S_i$ (for $l=2, \ldots, L$) strictly. We fix the maximal such $l$, and consider the $\bm s$-factorization $P = Q_L \cdots Q_1$ in \eqref{eq:s-factorization}.
 Any factorization $P = P_L \cdots P_1$ according to the reduced architecture $(\bm k, \bm s)$ satisfies that $P_i=Q_i$ (up to scaling) for all $i > l$, that $P_l$ divides $Q_l$, and that $\frac{Q_l}{P_l}$ divides $P_{l-1}\cdots P_1$. 
 If the factor $Q_l$ is a power of an $S_l$-hyperroot, then $P_l$ and $P_{l-1}\cdots P_1$ have such a hyperroot in common, and so $w$ is a critical value of $\nu^\CC$ by Theorem \ref{thm:crit-gene} and thus a singular point of $\PP(\mathcal{M}^\CC_{\bm k, \bm s})$ by Fact \ref{fact:birationalmap-sing}.
 Otherwise, if $Q_l$ has at least two distinct $S_l$-hyperroots, then there are at least two distinct factorizations of $P$ according to the architecture $(\bm k, \bm s)$ (depending on which hyperroot is dividing $P_l$ and which $P_{l-1}\cdots P_1$).
 The latter means that the fiber of $w$ under $\nu^\CC_{\bm k, \bm s}$ has cardinality larger one, and so $w$ is a singular point of $\PP(\mathcal{M}^\CC_{\bm k, \bm s})$ by Fact \ref{fact:birationalmap-sing}.

Now we have shown that $\mathrm{Sing}(\PP(\mathcal{M}^\CC_{\bm k, \bm s})) = \bigcup_{\bm k' \in K} \PP(\mathcal{M}^\CC_{\bm k', \bm s})$. 
Since the architecture $(\bm k, \bm s)$ is reduced and has at least two layers, the zero filter is a singular point of the affine cone $\mathcal{M}^\CC_{\bm k, \bm s}$ by Lemma \ref{lem:zeroSing}. This proves the assertion.
\end{proof}

To prove Theorem \ref{thm:sing}, it remains to transfer Proposition \ref{prop:singComplex} to the real numbers and to understand when the index set $K$ is empty.
For the first, we investigate the $\bm s$-factorizations of real polynomials (see Lemma \ref{lem:sFactorsReal});
for the latter, we make use of a  technical statement on integers (see Lemma \ref{lem:integerRelation}).

\begin{lemma}
    \label{lem:sFactorsReal}
    All factors in the $\bm s$-factorization of a real homogeneous  polynomial $P \in \RR[x,y]$ are real.
\end{lemma}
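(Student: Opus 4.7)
The plan is to exploit the uniqueness (up to individual scaling of factors) of the $\bm s$-factorization together with complex conjugation. Let $P = Q_L(x^{S_L},y^{S_L}) \cdots Q_1(x^{S_1},y^{S_1})$ be any $\bm s$-factorization of the nonzero real polynomial $P$.

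The first step is to check that termwise conjugation yields another $\bm s$-factorization of the same polynomial: since $P$ is real, $P = \bar P = \bar Q_L(x^{S_L},y^{S_L}) \cdots \bar Q_1(x^{S_1},y^{S_1})$. The degree conditions are preserved; the hyperroot condition is preserved because the complex conjugate of an $S_l$-hyperroot $ax^{S_l}+by^{S_l}$ is again an $S_l$-hyperroot $\bar a x^{S_l}+\bar b y^{S_l}$, so if some $S_l$-hyperroot divided $\bar Q_{l-1}(x^{S_{l-1}},y^{S_{l-1}}) \cdots \bar Q_1(x^{S_1},y^{S_1})$ then its conjugate would divide $Q_{l-1}(x^{S_{l-1}},y^{S_{l-1}}) \cdots Q_1(x^{S_1},y^{S_1})$, contradicting the defining property of the original factorization.

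Uniqueness of the $\bm s$-factorization then provides scalars $c_l \in \CC^*$ with $\bar Q_l = c_l Q_l$ for every $l$ and $\prod_l c_l = 1$. Conjugating the relation $\bar Q_l = c_l Q_l$ a second time gives $Q_l = |c_l|^2 Q_l$, so $|c_l|=1$. Write $c_l = e^{2i\theta_l}$ with $\theta_l \in \RR$ and set $R_l := e^{i\theta_l} Q_l$. Then $\bar R_l = e^{-i\theta_l} \bar Q_l = e^{-i\theta_l} e^{2i\theta_l} Q_l = R_l$, so each $R_l$ is a real polynomial.

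Finally, $\prod_l R_l = e^{i \sum_l \theta_l} P$, and the constraint $\prod_l c_l = 1$ forces $e^{i \sum_l \theta_l} \in \{+1,-1\}$. In the first case, $P = R_L \cdots R_1$ is already the desired real $\bm s$-factorization; in the second, replacing a single $R_l$ by $-R_l$ (still real) restores the product. No step presents a serious obstacle: the only two subtleties are verifying that the hyperroot non-divisibility condition is preserved under conjugation (immediate from the symmetry of the hyperroot notion) and handling the possible sign ambiguity, which is absorbed by flipping one factor.
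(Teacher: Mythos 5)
Your proof is correct, but it follows a genuinely different route from the paper's. The paper argues constructively: $Q_L$ is by definition the product of \emph{all} $S_L$-hyperroots of $P$, and by Lemma~\ref{lem:hyperrootPairs} the non-real hyperroots of a real polynomial occur in complex-conjugate pairs, so $Q_L$ is real; one then inducts on $P/Q_L$, whose $\bm s$-factorization is $Q_{L-1}\cdots Q_1$. You instead conjugate the entire factorization, observe that the defining non-divisibility condition is conjugation-equivariant, and invoke the uniqueness of the $\bm s$-factorization up to scalars to get $\bar Q_l = c_l Q_l$ with $|c_l|=1$; a phase adjustment then makes each factor real, with the residual sign absorbed into one factor. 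Both arguments are sound. The paper's version identifies the factors explicitly and simultaneously re-derives why the factorization exists; yours is a cleaner ``descent'' argument that works for any unique-up-to-scalars decomposition compatible with conjugation, but it leans on the uniqueness assertion (stated without proof just after \eqref{eq:s-factorization}), which the paper's inductive construction essentially establishes along the way. One could quibble that the lemma's literal statement (``all factors are real'') should be read as ``the factors can be chosen real,'' which is exactly what your phase normalization delivers; this is the right reading given that the factorization is only defined up to scaling.
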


\begin{proof}
    The factor $Q_L$ in \eqref{eq:s-factorization} is the product of all $S_L$-hyperroots of $P$. Those hyperroots are either real or appear in complex conjugated pairs by Lemma \ref{lem:hyperrootPairs}. 
    Hence, their product $Q_L$ is real.
    Since $Q_{L-1} \cdots Q_1$ is the $\bm s$-factorization of $\frac{P}{Q_L}$, we see inductively that all factors $Q_i$ are real. 
\end{proof}

\begin{corollary} \label{cor:zariskiClosureUnionReduced}
    Let $(\bm k, \bm s) $ be a reduced LCN architecture.
    Then $$\overline{\mathcal{M}}_{\bm k, \bm s} =   \bigcup_{\bm k' \in K \cup \{\bm k\}} \mathcal{M}_{\bm k', \bm s},$$ where $K$ is the index set from Theorem \ref{thm:sing}.
    In particular, $\overline{\mathcal{M}}_{\bm k, \bm s} \setminus {\mathcal{M}}_{\bm k, \bm s} \subseteq \bigcup_{\bm k' \in K } \mathcal{M}_{\bm k', \bm s}$.
\end{corollary}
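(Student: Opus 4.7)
The plan is to characterize the real Zariski closure $\overline{\mathcal{M}}_{\bm k, \bm s}$ via the $\bm s$-factorizations of the associated polynomials and then match the shapes of those factorizations with the index set $K$ of Theorem~\ref{thm:sing}.

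For the inclusion $\overline{\mathcal{M}}_{\bm k, \bm s} \subseteq \bigcup_{\bm k' \in K \cup \{\bm k\}} \mathcal{M}_{\bm k', \bm s}$, I would take $w \in \overline{\mathcal{M}}_{\bm k, \bm s}$ and set $P := \pi_1(w) \in \RR[x,y]_{k-1}$. By Proposition~\ref{prop:functionspace-poly}, $w \in \mathcal{M}^{\CC}_{\bm k, \bm s}$, so Lemma~\ref{lem:s-factor-degrees-FS} gives that the $\bm s$-factor degrees $\bm d = (d_L, \ldots, d_1)$ of $P$ satisfy $\sum_{i=l}^L d_i S_i \ge \sum_{i=l}^L (k_i - 1) S_i$ for every $l$, with equality at $l = 1$ (both sides equal $\deg P = k-1$). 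Lemma~\ref{lem:sFactorsReal} then guarantees that the $\bm s$-factorization $P = Q_L \cdots Q_1$ has real factors $Q_l \in \RR[x^{S_l}, y^{S_l}]_{d_l}$. Setting $\bm k' := \bm d + \bm 1$, these $Q_l$ constitute a real factorization of $P$ according to the architecture $(\bm k', \bm s)$, hence $w \in \mathcal{M}_{\bm k', \bm s}$; and either $\bm k' = \bm k$, or $\bm k'$ meets the defining conditions of $K$.

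The reverse inclusion is short. Trivially $\mathcal{M}_{\bm k, \bm s} \subseteq \overline{\mathcal{M}}_{\bm k, \bm s}$. For any $\bm k' \in K$, Corollary~\ref{cor:strictContainment} gives $\mathcal{M}^{\CC}_{\bm k', \bm s} \subseteq \mathcal{M}^{\CC}_{\bm k, \bm s}$, so every $w \in \mathcal{M}_{\bm k', \bm s} \subseteq \mathcal{M}^{\CC}_{\bm k', \bm s}$ is a real polynomial admitting a complex factorization according to $(\bm k, \bm s)$, and hence $w \in \overline{\mathcal{M}}_{\bm k, \bm s}$ by Proposition~\ref{prop:functionspace-poly}. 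The final ``in particular'' statement then follows immediately: any $w \in \overline{\mathcal{M}}_{\bm k, \bm s} \setminus \mathcal{M}_{\bm k, \bm s}$ lies in $\mathcal{M}_{\bm k', \bm s}$ for some $\bm k' \in K \cup \{\bm k\}$ by the first inclusion, and must have $\bm k' \neq \bm k$, so $\bm k' \in K$.

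There is no substantive obstacle; the argument is an assembly of Proposition~\ref{prop:functionspace-poly}, Lemma~\ref{lem:s-factor-degrees-FS}, Lemma~\ref{lem:sFactorsReal}, and Corollary~\ref{cor:strictContainment}. The one conceptual point worth highlighting is that Lemma~\ref{lem:sFactorsReal}, asserting reality of the $\bm s$-factorization of a real polynomial, is what lets us upgrade ``admits a complex factorization according to $(\bm k, \bm s)$'' (membership in $\overline{\mathcal{M}}_{\bm k, \bm s}$) to ``admits a real factorization according to some possibly smaller architecture $(\bm k', \bm s)$'' (membership in the actual real function space $\mathcal{M}_{\bm k', \bm s}$).
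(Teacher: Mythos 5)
Your proposal is correct and follows essentially the same route as the paper: both directions rest on the real $\bm s$-factorization (Lemma~\ref{lem:sFactorsReal}) together with the degree characterization of the (complex) function spaces (Lemma~\ref{lem:s-factor-degrees-FS} and Corollary~\ref{cor:strictContainment}). The only difference is organizational—you apply the $\bm s$-factorization argument to all of $\overline{\mathcal{M}}_{\bm k,\bm s}$ and deduce the ``in particular'' clause afterwards, whereas the paper applies it only to $\overline{\mathcal{M}}_{\bm k,\bm s}\setminus\mathcal{M}_{\bm k,\bm s}$—which is immaterial.
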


\begin{proof}
    Since $\overline{\mathcal{M}}_{\bm k, \bm s}$ is the real part of $\mathcal{M}^\CC_{\bm k, \bm s}$, it corresponds to the set of real homogeneous polynomials whose $\bm s$-factor degrees $\bm d$ satisfy $\sum_{i=l}^L d_i S_i \geq \sum_{i=l}^L (k_i-1)S_i$ for all layers $l$.
    Thus, for all $\bm k' \in K$, we conclude that $\overline{\mathcal{M}}_{\bm k', \bm s} \subseteq \overline{\mathcal{M}}_{\bm k, \bm s}$.
    This shows the inclusion ``$\supseteq$'' in the assertion.
    For the other inclusion ``$\subseteq$'', let us consider a polynomial $P$ in the complement $\pi_1(\overline{\mathcal{M}}_{\bm k, \bm s} \setminus {\mathcal{M}}_{\bm k, \bm s})$.
    That means that $P$ does not have a real factorization according to the architecture $(\bm k, \bm s)$, but since its $\bm s$-factorization is real due to Lemma \ref{lem:sFactorsReal}, its $\bm s$-factor degrees $\bm d$ need to satisfy one of the inequalities $\sum_{i=l}^L d_i S_i \geq \sum_{i=l}^L (k_i-1)S_i$ strictly.
    Hence, $\bm k' := (d_1+1, \ldots, d_L+1) \in K$.
    Moreover, the $\bm s$-factorization of $P$ is a real factorization according to the architecture $(\bm k', \bm s)$, i.e., 
    and $P \in \pi_1(\mathcal{M}_{\bm k', \bm s})$.
\end{proof}

\begin{lemma} \label{lem:integerRelation}
    Let $k_1, \ldots, k_{l-1}$ and $S_1, \ldots, S_l$ be positive integers such that $S_i$ divides $S_{i+1}$ for all $i \in \lbrace 1, \ldots, l-1 \rbrace$.
    If $S_l \leq \sum_{i=1}^{l-1} (k_i-1)S_i$, then there are integers $0 \leq e_i < k_i$ for $i \in \lbrace 1, \ldots, l-1 \rbrace$ such that $S_l = \sum_{i=1}^{l-1}e_iS_i$.
\end{lemma}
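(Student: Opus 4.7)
The natural approach is induction on $l$, using a ``greedy'' decomposition of $S_l$ through the chain of divisibilities.

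For the base case $l=2$, we have $S_1 \mid S_2$ and $S_2 \le (k_1-1)S_1$, so setting $e_1 := S_2/S_1$ gives a positive integer with $e_1 \le k_1-1 < k_1$, as required.

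For the inductive step, I would use the divisibility $S_{l-1} \mid S_l$ to write $S_l = a\, S_{l-1}$ for a positive integer $a$, and then split into two cases. If $a < k_{l-1}$, simply set $e_{l-1} := a$ and $e_i := 0$ for $i < l-1$. Otherwise, if $a \ge k_{l-1}$, set $e_{l-1} := k_{l-1}-1$ (the largest admissible value) and let $T := S_l - (k_{l-1}-1)S_{l-1} = (a - k_{l-1} + 1)S_{l-1}$; the goal becomes to express $T$ in the form $\sum_{i=1}^{l-2} e_i S_i$ with $0 \le e_i < k_i$.

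The key observation is that $T$ can play the role of the last element in a divisibility chain of length $l-1$, namely $S_1 \mid S_2 \mid \cdots \mid S_{l-2} \mid T$: the divisibility $S_{l-2}\mid T$ follows from $S_{l-2}\mid S_{l-1}\mid T$, while $T>0$ because $a \ge k_{l-1}$ implies $T \ge S_{l-1}$. Moreover, subtracting $(k_{l-1}-1)S_{l-1}$ from the hypothesis $S_l \le \sum_{i=1}^{l-1}(k_i-1)S_i$ gives $T \le \sum_{i=1}^{l-2}(k_i-1)S_i$, so the inductive hypothesis applied to the sequence $(S_1,\ldots,S_{l-2},T)$ with filter sizes $(k_1,\ldots,k_{l-2})$ yields the required $e_1, \ldots, e_{l-2}$.

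The main subtlety (and potential pitfall) lies in framing the induction correctly: one might be tempted to generalize the lemma to an arbitrary non-negative integer $T$ divisible by $S_1$ in place of $S_l$, but such a generalization is false in general (for instance, with $S_1=1$, $S_2=4$, $k_1=2$, $k_2=3$, the value $T=3$ cannot be represented even though $T \le 9 = \sum(k_i-1)S_i$). What makes the induction work is precisely that the target $T$ constructed in the inductive step remains divisible by $S_{l-1}$ — not just by $S_1$ — so that it genuinely extends the given divisibility chain, allowing the inductive hypothesis to apply to a shorter but still valid instance of the lemma.
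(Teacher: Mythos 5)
Your proof is correct and follows essentially the same route as the paper's: induction on $l$, greedily taking $e_{l-1}$ as large as allowed and applying the induction hypothesis to the remainder $S_l-(k_{l-1}-1)S_{l-1}$. Your explicit verification that this remainder extends the shorter divisibility chain (so the inductive hypothesis genuinely applies) is a point the paper leaves implicit, but it is the same argument.
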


\begin{proof}
    We prove the statement by induction on $l$.
    For $l=2$, we simply put $e_1 := \frac{S_2}{S_1}$.
    For $l>2$, we distinguish two cases.
    If $S_l \leq (k_{l-1}-1)S_{l-1}$, then we similarly set $e_{l-1}:= \frac{S_l}{S_{l-1}}$ and $e_i := 0$ for all $i < l-1$.
    Otherwise, we have 
    $0 < S_l - (k_{l-1}-1)S_{l-1} \leq \sum_{i=1}^{l-2} (k_i-1)S_i$ and we can apply the induction hypothesis to find integers $0 \leq e_i < k_i$ for $i < l-1$ such that $S_l - (k_{l-1}-1)S_{l-1} = \sum_{i=1}^{l-2} e_iS_i$.
    Setting $e_{l-1} := k_{l-1}-1$ concludes the proof.
\end{proof}

\begin{proof}[Proof of Theorem \ref{thm:sing}]
    Since $\overline{\mathcal{M}}$ is the real part of $\mathcal{M}^\CC$, 
    transferring Proposition \ref{prop:singComplex} to the real numbers yields  
    $\mathrm{Sing}(\overline{\mathcal{M}}_{\bm k, \bm s}) = \lbrace 0 \rbrace \cup  \bigcup_{\bm k' \in K} \overline{\mathcal{M}}_{\bm k', \bm s}$.
    Clearly, we have that $\bigcup_{\bm k' \in K} {\mathcal{M}}_{\bm k', \bm s} \subseteq \bigcup_{\bm k' \in K} \overline{\mathcal{M}}_{\bm k', \bm s}$.
    The reverse inclusion follows from Corollary \ref{cor:zariskiClosureUnionReduced}.
    It is left to show that the index set $K$ is empty if and only if every layer $l$ satisfies $S_l > \sum_{i=1}^{l-1} (k_i-1)S_i$.
    We start by assuming that $S_l > \sum_{i=1}^{l-1} (k_i-1)S_i$ holds for every $l$.
    We assume for contradiction that there is some $\bm k' \in K$.
    That means that one of the inequalities in \eqref{eq:strictContainment} is strict. 
    Let $l$ be the maximal layer  such that $\sum_{i=l}^L (k'_i-1) S_i > \sum_{i=l}^L (k_i-1)S_i$.
    Then we have that $k'_i = k_i$ for all $i > l$ and that $k'_l > k_l$.
    Since $\sum_{i=1}^L (k'_i-1) S_i = \sum_{i=1}^L (k_i-1) S_i$, we obtain that
    $\sum_{i=1}^{l-1} (k_i-1) S_i - \sum_{i=1}^{l-1} (k_i'-1) S_i
    = S_l (k'_l - k_l) > S_l > \sum_{i=1}^{l-1} (k_i-1) S_i$, which would imply that $0 > \sum_{i=1}^{l-1} (k'_i-1) S_i$; a contradiction. Hence, the set $K$ is empty. 
    Finally, we assume that some layer $l$ satisfies that $S_l \leq \sum_{i=1}^{l-1} (k_i-1) S_i$.
    We now set $k'_i := k_i$ for all $i > l$, $k'_l := k_l+1$, and $k'_j := k_j - e_j$ for all $j < l$, where the $e_j$ are the integers found in Lemma \ref{lem:integerRelation}.
    Then $\bm k' \in \mathbb{Z}_{>0}^L$ satisfies for every $j < l$ that
    $\sum_{i=j}^{L} (k'_i-1) S_i - \sum_{i=j}^{L} (k_i-1) S_i = S_l - \sum_{i=j}^{l-1} e_i S_i \geq$, with equality if $ j = 1$.
    This shows that $\bm k' \in K$, so the set $K$ is not empty.
\end{proof}

\section{The boundary of the function space}
\label{sec:boundary}

The boundary points of the stride-one LCN function spaces have been described in terms of the multiplicities of the real roots of polynomials in \cite{LCN}.
More specifically, if $P \in \RR[x,y]_{k-1}$ is a homogenous polynomial with $n$ distinct real roots with multiplicities $\alpha_1, \ldots, \alpha_n$, then Lemma 4.2 and Proposition 4.4 in \cite{LCN} state that
\begin{align*}
    P \in \pi_1(\mathcal{M}_{\bm k, \bm 1}) &\Leftrightarrow 
    \sum_{i=1}^n \alpha_i \geq e;  \\
    P \in \pi_1(\partial \mathcal{M}_{\bm k, \bm 1}) &\Leftrightarrow 
    \sum_{i=1}^n \alpha_i \geq e \text{ and }
    |\{ \alpha_i : \alpha_i \text{ is odd} \}| \leq e-2,
\end{align*}
where $e := |\{ k_i : k_i \text{ is even} \}|$.
Here, we provide a first extension of that result to LCN architectures with larger strides.
For that, we observe that, for every positive integer $s$,  every nonzero homogeneous polynomial $P \in \RR[x,y]$ can be uniquely factorized, up to scaling, into real homogeneous polynomials 
$P(x,y) = Q(x^s,y^s) \cdot R(x,y)$
such that $R \notin \RR[x^s,y^s]$.
Now the factorization of $Q$ into real irreducible factors yields 
\begin{equation*}
   P= \ell_1(x^s,y^s)^{\rho_1} \cdots \ell_r(x^s,y^s)^{\rho_r} \;\; \cdot \;\; q_1(x^s,y^s)^{\gamma_1}  \cdots q_c(x^s,y^s)^{\gamma_c}  \;\; \cdot \;\; R, 
\end{equation*}
where the linear $\ell_i$ and the quadratic $q_j$ correspond to the real and complex roots of $Q$.
We call the exponents
$(\rho_1, \ldots, \rho_r, \gamma_1, \ldots, \gamma_c )$ the \emph{real $s$-hyperroot multiplicities} of $P$.

\begin{theorem}
\label{thm:boundary2layer}
    Let $(\bm k, \bm s)$ be a two-layer reduced LCN architecture whose function space $\mathcal{M}_{{\bm k},{\bm s}}$ is not Zariski-closed.
    A nonzero filter $w \in \RR^k$ is in the relative boundary $\partial \FS$ if and only if the real $s_1$-hyperroot multiplicities of the polynomial $\pi_1({w})$ satisfy
    $\sum_{i=1}^{c} \gamma_{i} +  \sum_{j=1}^{r} \lfloor \frac{\rho_j}{2} \rfloor \ge \frac{k_2}{2}$ and $\sum_{j=1}^{r} \rho_j \ge 1$.
\end{theorem}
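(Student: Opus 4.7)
The plan is to characterize boundary points through the real $s_1$-hyperroot multiplicities: I would first describe membership in $\FS$ and $\CFS\setminus\FS$ combinatorially, and then show that the two stated conditions are exactly what is required for $P\in\FS$ to be a limit of points in $\CFS\setminus\FS$.

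First, I would establish membership. Set $h(P):=\sum_{j}\rho_{j}+2\sum_{i}\gamma_{i}$, the total $s_{1}$-hyperroot count, i.e., the degree in $(u,v)$ of the $s_{1}$-hyperroot part of $P$. By Proposition~\ref{prop:functionspace-poly}, $P\in\CFS$ iff $h(P)\ge k_{2}-1$, and $P\in\FS$ iff additionally $Q$ has a real factor $P_{2}\in\RR[u,v]_{k_{2}-1}$. Since $k_{2}-1$ is odd, any real $P_{2}$ must contain an odd number of real linear factors in $(u,v)$, so $\sum_{j}\rho_{j}\ge 1$ is necessary; a short combinatorial check (using that $k_{2}-1$ odd versus $2\sum_{i}\gamma_{i}$ even forces $h(P)$ to be even when $\sum_{j}\rho_{j}=0$) shows the conditions are also sufficient. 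Therefore $\CFS\setminus\FS=\{P:\sum_{j}\rho_{j}=0 \text{ and } h(P)\ge k_{2}\}$.

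For the sufficient direction, assume $\sum_{j}\rho_{j}\ge 1$ and $\sum_{i}\gamma_{i}+\sum_{j}\lfloor\rho_{j}/2\rfloor\ge k_{2}/2$. Writing $P=Q(x^{s_{1}},y^{s_{1}})R(x,y)$ with $Q(u,v)=\prod_{j}\ell_{j}^{\rho_{j}}\prod_{i}q_{i}^{\gamma_{i}}$, I would construct $P_{n}\to P$ in $\CFS\setminus\FS$ via two explicit perturbations: (i) for each $j$, replace $\lfloor\rho_{j}/2\rfloor$ copies of the factor $\ell_{j}^{2}=(a_{j}u+b_{j}v)^{2}$ by the real irreducible quadratic $\ell_{j}^{2}+\epsilon_{n}^{2}v^{2}$ (whose discriminant $-4a_{j}^{2}\epsilon_{n}^{2}v^{2}$ is negative), which corresponds to splitting the double real hyperroot into a complex conjugate pair over $\CC$; and (ii) for each $j$ with $\rho_{j}$ odd, replace the leftover real hyperroot $\ell_{j}(x^{s_{1}},y^{s_{1}})=\prod_{t=0}^{s_{1}-1}(x-\mu_{j}\zeta^{t}y)$ (with $\zeta=e^{2\pi i/s_{1}}$) by $\prod_{t}(x-(\mu_{j}+\delta_{j,t}^{(n)})\zeta^{t}y)$, where the small perturbations $\delta_{j,t}^{(n)}$ are chosen conjugate-symmetric ($\delta_{j,s_{1}-t}^{(n)}=\overline{\delta_{j,t}^{(n)}}$) so the product is real, and generic so the product is not an $s_{1}$-hyperroot. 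The resulting $P_{n}$ has no real hyperroots and hyperroot count $h_{n}=2(\sum_{i}\gamma_{i}+\sum_{j}\lfloor\rho_{j}/2\rfloor)\ge k_{2}$, so $P_{n}\in\CFS\setminus\FS$, and $P_{n}\to P$ as the perturbations vanish.

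For the necessary direction, suppose $P\in\partial\FS\subseteq\FS$, so $\sum_{j}\rho_{j}\ge 1$. Take $P_{n}\to P$ with $P_{n}\in\CFS\setminus\FS$; then $P_{n}$ has $h_{n}\ge k_{2}$ hyperroots, all grouped in complex conjugate pairs. Viewing each hyperroot as a point in the compact space $\PP(\CC[u,v]_{1})\cong\PP^{1}_{\CC}$ and passing to a subsequence, every hyperroot of $P_{n}$ converges, and by closedness of divisibility each limit is a hyperroot of $P$. A conjugate pair $(\tau_{n},\bar\tau_{n})$ must limit to $(\tau,\bar\tau)$: either a complex pair of $P$ (when $\tau$ is non-real), contributing one unit to the multiplicity at some $q_{i}$, or a doubled real hyperroot $\ell_{j}^{2}$ of $P$ (when $\tau$ is real), contributing two units to $\rho_{j}$. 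Since total limit multiplicities are bounded by those in $P$, at most $\gamma_{i}$ pairs can limit to $q_{i}$ and at most $\lfloor\rho_{j}/2\rfloor$ pairs can double up at $\ell_{j}$; hence $h_{n}/2\le\sum_{i}\gamma_{i}+\sum_{j}\lfloor\rho_{j}/2\rfloor$, which combined with $h_{n}\ge k_{2}$ yields the claimed inequality.

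The main technical obstacle lies in the sufficient direction, specifically making the ``breaking'' perturbation in step (ii) rigorous: one must verify that the conjugate-symmetric generic perturbations $\delta_{j,t}^{(n)}$ yield a real polynomial with no $s_{1}$-hyperroot factor, and that combining these with the pair-doubling perturbations produces a real $P_{n}$ of the correct degree $k-1$ that converges to $P$ in $\RR[x,y]_{k-1}$; everything else is a careful bookkeeping of degrees and multiplicities.
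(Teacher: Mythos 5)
Your proof is correct and follows essentially the same route as the paper's: characterize $\CFS\setminus\FS$ as the real polynomials whose $s_1$-hyperroots are all non-real with total count at least $k_2$, obtain necessity by tracking how the conjugate pairs of hyperroots of an approximating sequence can only limit onto complex pairs or doubled real hyperroots of $P$, and obtain sufficiency by splitting doubled real hyperroots into conjugate pairs and generically breaking the leftover odd ones. Your limit argument is if anything slightly more careful than the paper's (explicit compactness in $\PP^1_{\CC}$ and multiplicity bookkeeping); the only nitpick is that the perturbation $\ell_j^2+\epsilon_n^2v^2$ degenerates when $\ell_j\propto v$, in which case one should perturb by $\epsilon_n^2u^2$ instead.
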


\begin{proof}
    By Theorem~\ref{thm:thick-closed}, the function space being non-Zariski closed is equivalent to the condition that $k_2$ is even and $s_1 \leq k_1-1$.
    Consider a sequence ${w^{(j)}}$ of filters from the set $\overline{\mathcal{M}}\setminus \mathcal{M}$. By Proposition \ref{prop:functionspace-poly}, each $w^{(j)}$ is associated with a polynomial $P^{(j)} \in \mathbb{R}[x,y]_{k-1}$ that can be factored as $P_2^{(j)}P_1^{(j)}$, where $P_1^{(j)} \in \mathbb{C}[x,y]_{k_1-1}$ and $P_2^{(j)} \in \mathbb{C}[x^{s_1},y^{s_1}]_{k_2-1}$. Note that this sequence is taken from outside of the function space, so no permutation of roots can make $P_1^{(j)}$ and $P_2^{(j)}$ real polynomials. This condition is equivalent to $P^{(j)}$ containing at least $k_2$ $s_1$-hyperroots that all are non-real
    (this is because $\deg(P_2^{(j)})=k_2-1$ is odd and the non-real hyperroots appear in complex conjugated pairs by Lemma \ref{lem:hyperrootPairs}). 

Now, suppose that the sequence ${P^{(j)}}$ converges to a polynomial $P$ that corresponds to a filter $w$ in the function space $\mathcal{M}$. Then $P$ should have a factorization $P_2P_1$ with $P_1 \in \mathbb{R}[x,y]_{k_1-1}$ and $P_2 \in \mathbb{R}[x^{s_1},y^{s_1}]_{k_2-1}$. Observe that for a polynomial in $\overline{\mathcal{M}}$, the existence of such a real factorization is equivalent to the existence of a real $s_1$-hyperroot. 
In the limit, the number of $s_1$-hyperroots (counted with multiplicity) cannot decrease, but real $s_1$-hyperroots can appear in two ways:
\begin{enumerate}[1)]
    \item 
A factor in $P_1^{(j)}$ that is not a $s_1$-hyperroot converges to a real $s_1$-hyperroot, which means $\rho_j\ge 1$ for at least one $j$. Then $P$ satisfies the relations $\sum_{j=1}^r \rho_j \ge 1$ and $\sum_{i=1}^c \gamma_i \ge \frac{k_2}{2}$.
\item 
A complex pair of $s_1$-hyperroots becomes real, i.e., $\rho_j\ge 2$ for at least one $j$. In this case, $\sum_{j=1}^r \rho_j \ge 2$ and $\sum_{i=1}^{c} \gamma_{i} + \sum_{j=1}^{r} \lfloor \frac{\rho_j}{2} \rfloor \ge \frac{k_2}{2}$.
\end{enumerate}
This shows that relative boundary points correspond to polynomials satisfying $\sum_{j=1}^r \rho_j \ge 1$ and $\sum_{i=1}^{c} \gamma_{i} + \sum_{j=1}^{r} \lfloor \frac{\rho_j}{2} \rfloor \ge \frac{k_2}{2}$.
Conversely, if a polynomial $P$ satisfies these inequalities,
we can construct a sequence in $\pi_1(\overline{\mathcal{M}} \setminus \mathcal{M})$ that converges to $P$  by replacing double real $s_1$-hyperroots with pairs of complex conjugated ones and the remaining real $s_1$-hyperroots  $R$ with $R +\epsilon^{(j)}xy$ such that $\epsilon^{(j)} \to 0$.  
\end{proof}

We can use the description of the boundary points for two layers in Theorem \ref{thm:boundary2layer}, 
to prove Theorem \ref{thm:boundaryProperties}c)
for arbitrarily many layers.
First, however, we show parts a) and b) of that theorem, and observe that the zero filter is on the relative boundary whenever the latter is non-empty.

\begin{lemma} \label{lem:zeroOnBoundary}
    Let $(\bm k, \bm s)$ be a  LCN architecture whose function space $\mathcal{M}_{{\bm k},{\bm s}}$ is not Zariski-closed.
    Then $0 \in \RR^k$ is contained in $\partial \FS$.
\end{lemma}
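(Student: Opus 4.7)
The plan is to exploit the cone structure of $\mathcal{M}_{\bm k, \bm s}$ and $\overline{\mathcal{M}}_{\bm k, \bm s}$. First, since the parametrization map satisfies $\mu(c w_1, w_2, \ldots, w_L) = c\, \mu(w_1, \ldots, w_L)$ for every $c \in \mathbb{R}$, the function space $\mathcal{M}_{\bm k, \bm s}$ is closed under scalar multiplication. The same holds for $\overline{\mathcal{M}}_{\bm k, \bm s}$, which is the Zariski closure of a cone and hence also a cone. In particular, $0 \in \mathcal{M}_{\bm k, \bm s}$ (choose all filters to be zero).

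Next, because $\mathcal{M}_{\bm k, \bm s}$ is assumed not Zariski-closed, the difference $\overline{\mathcal{M}}_{\bm k, \bm s} \setminus \mathcal{M}_{\bm k, \bm s}$ is non-empty; pick any $w$ in it. For every $n \in \mathbb{N}_{>0}$, the scaled filter $\tfrac{1}{n} w$ lies in $\overline{\mathcal{M}}_{\bm k, \bm s}$ by the cone property. Moreover, $\tfrac{1}{n} w \notin \mathcal{M}_{\bm k, \bm s}$: if it were, then multiplying by $n$ (using again that $\mathcal{M}_{\bm k, \bm s}$ is a cone) would yield $w \in \mathcal{M}_{\bm k, \bm s}$, a contradiction.

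Therefore $\{\tfrac{1}{n} w\}_{n \ge 1}$ is a sequence in $\overline{\mathcal{M}}_{\bm k, \bm s} \setminus \mathcal{M}_{\bm k, \bm s}$ converging in the Euclidean topology to $0 \in \mathcal{M}_{\bm k, \bm s}$. By definition of the relative boundary, this shows $0 \in \partial \mathcal{M}_{\bm k, \bm s}$. There is no real obstacle here; the only subtle point is verifying that both $\mathcal{M}_{\bm k, \bm s}$ and its Zariski closure are cones, which follows immediately from the explicit form of $\mu$.
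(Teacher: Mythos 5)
Your proof is correct and follows essentially the same route as the paper's: take a point $w$ in $\overline{\mathcal{M}}_{\bm k, \bm s}\setminus\mathcal{M}_{\bm k, \bm s}$ and observe that the sequence $\tfrac{1}{n}w$ stays in the complement (by the cone property) while converging to $0\in\mathcal{M}_{\bm k, \bm s}$. You merely spell out the cone-structure details that the paper leaves implicit, which is fine.
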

\begin{proof}
    If a polynomial $P$ belongs to the complement of the function space in its Zariski closure, we can generate a converging sequence $\frac{1}{n} P$, which also belongs to the complement and approaches zero in the limit.
\end{proof}

\begin{proof}[Proof of Theorem \ref{thm:boundaryProperties} a)-b)]
        We start with assertion (a) that compares the reduced boundary points $\partial \mathcal{M}_{\bm k, \bm s}^R \subseteq \partial \FS$ with the relative boundary $\partial \mathcal{M}_{\tilde{\bm k}, \tilde{\bm s}}$ of the reduced architecture $(\tilde{\bm k}, \tilde{\bm s})$.
    Both types of boundary points are limits of sequences in 
    $\overline{\mathcal{M}}_{{\bm k}, {\bm s}} \setminus \mathcal{M}_{\tilde{\bm k}, \tilde{\bm s}}$, with the difference that the limits land in $\FS$ or $\mathcal{M}_{\tilde{\bm k}, \tilde{\bm s}}$, respectively.
    This implies that 
    $\partial \mathcal{M}_{\bm k, \bm s}^R =  \partial \mathcal{M}_{\tilde{\bm k}, \tilde{\bm s}} \cap \FS.$

    For assertion (b), we recall from Corollary \ref{cor:zariskiClosureUnionReduced} that
    $\overline{\mathcal{M}}_{\bm k, \bm s} \setminus {\mathcal{M}}_{\tilde{\bm k}, \tilde{\bm s}} \subseteq \bigcup_{\tilde{\bm k}' \in \tilde{K} } \mathcal{M}_{\tilde{\bm k}', \tilde{\bm s}}$, where $\tilde{K} := K_{\tilde{\bm k}, \tilde{\bm s}}$ is the index set from Theorem \ref{thm:sing}. 
    Since LCN function spaces are Euclidean closed, limits of sequences in $\overline{\mathcal{M}}_{\bm k, \bm s} \setminus {\mathcal{M}}_{\tilde{\bm k}, \tilde{\bm s}}$ are also contained in the finite union of the $ \mathcal{M}_{\tilde{\bm k}', \tilde{\bm s}}$.
    In particular, this shows that $\partial \mathcal{M}_{\bm k, \bm s}^R \subseteq \bigcup_{\tilde{\bm k}' \in \tilde{K} } \mathcal{M}_{\tilde{\bm k}', \tilde{\bm s}}$.
    The latter union (which is empty in case the reduced architecture $(\tilde{\bm k}, \tilde{\bm s})$ has just one layer) is contained in $\mathrm{Sing}(\overline{\mathcal{M}}_{\bm k, \bm s})$ by Theorem \ref{thm:sing}.
\end{proof}

To estimate the dimension of the reduced boundary of an LCN function space, we use the two-layer description in Theorem \ref{thm:boundary2layer} to deduce necessary algebraic conditions the reduced boundary points have to satisfy.

\begin{proposition}
    \label{prop:boundaryReducedToTwo}
     Let $(\bm k, \bm s)$ be a reduced LCN architecture.
     For every nonzero relative boundary point $w \in \partial \FS$, 
     the associated polynomial $P=\pi_1(w)$
     satisfies one of the following:
     \begin{enumerate}
         \item $P$ has a real double $S_l$-hyperroot for some $l \in \{2, \ldots, L \}$, 
         \item the $\bm s$-factor degrees $\bm d$ of $P$ satisfy at least two of the inequalities $\sum_{i=l}^L d_i S_i \geq \sum_{i=l}^L (k_i-1)S_i$ for $l \in \{2, \ldots, L \}$ strictly, or
         \item  $\sum_{i=l}^L d_i S_i \geq \sum_{i=l}^L (k_i-1)S_i+2S_l$ for some $l \in \{2, \ldots, L \}$.
     \end{enumerate}
\end{proposition}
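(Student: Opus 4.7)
The plan is to prove the contrapositive: assume $P := \pi_1(w)$ satisfies none of conditions (1)--(3), and deduce that $w \notin \partial\mathcal{M}$. The starting point is a rigid structural dichotomy extracted from the excesses $e_l := \sum_{i=l}^L d_i S_i - \sum_{i=l}^L (k_i-1) S_i$ of the $\bm s$-factor degrees $\bm d$ of $P$. Because $S_l$ divides every $S_i$ with $i \geq l$, each $e_l$ is a non-negative multiple of $S_l$ by Lemma \ref{lem:s-factor-degrees-FS}. Negating condition (3) forces $e_l \in \{0, S_l\}$, and negating (2) forces at most one $e_l$ to be nonzero. This yields either \emph{Case A}, where $e_l = 0$ for all $l$ and hence $\bm d = \bm k - \bm 1$, or \emph{Case B}, where a unique $e_{l_0} = S_{l_0}$ with $l_0 \in \{2,\ldots,L\}$, giving $d_{l_0} = k_{l_0}$, $d_{l_0-1} = k_{l_0-1} - 1 - s_{l_0-1}$, and $d_l = k_l - 1$ for all other $l$.

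The main technical step is a rigidity lemma: for any sequence $P^{(n)} \to P$ with $P^{(n)} \in \overline{\mathcal{M}} \setminus \mathcal{M}$, the $\bm s$-factor degrees satisfy $\bm d^{(n)} = \bm d$ for $n$ large. This uses the upper semi-continuity of $S_l$-hyperroot counts---the set of polynomials divisible by at least $d$ many $S_l$-hyperroots is the image under multiplication of the projective variety $\mathbb{P}(\mathbb{C}[x^{S_l}, y^{S_l}]_d) \times \mathbb{P}(\mathbb{C}[x,y]_{k-1-dS_l})$, hence Zariski-closed---combined with the lower bounds from Lemma \ref{lem:s-factor-degrees-FS} applied inductively from $l = L$ downward, which pin each $d_l^{(n)}$ between $k_l-1$ and $d_l$. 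In Case A this immediately forces $\bm d^{(n)} = \bm k - \bm 1$, so the $\bm s$-factorization $P^{(n)} = Q_L^{(n)} \cdots Q_1^{(n)}$ is an architectural factorization, and all factors are real by Lemma \ref{lem:sFactorsReal}, giving $P^{(n)} \in \mathcal{M}$, a contradiction.

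In Case B, fix $P_l^{(n)} := Q_l^{(n)}$ for $l \notin \{l_0-1, l_0\}$ as the real architectural layers, which reduces the problem to factoring $R^{(n)} := Q_{l_0}^{(n)}(\tilde{x}^{s_{l_0-1}}, \tilde{y}^{s_{l_0-1}}) \cdot Q_{l_0-1}^{(n)}(\tilde{x}, \tilde{y})$, where $\tilde{x} = x^{S_{l_0-1}}$ and $\tilde{y} = y^{S_{l_0-1}}$, according to the two-layer sub-architecture $(\tilde{\bm k}, \tilde{\bm s}) = ((k_{l_0-1}, k_{l_0}), (s_{l_0-1}, 1))$. I would then invoke Theorem \ref{thm:boundary2layer}: the $s_{l_0-1}$-hyperroots of the limit $R = Q_{l_0} Q_{l_0-1}$ are exactly the $k_{l_0}$ hyperroots of $Q_{l_0}$ (since $Q_{l_0-1}$ carries none by definition of the $\bm s$-factorization), none of which is a real double by negation of (1) at $l_0$. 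Hence all real multiplicities satisfy $\rho_j \leq 1$, so $\sum \lfloor \rho_j/2 \rfloor = 0$, and the two-layer boundary inequality $\sum \gamma_i + \sum \lfloor \rho_j/2 \rfloor \geq k_{l_0}/2$ collapses to $\sum \gamma_i \geq k_{l_0}/2$, which combined with the hyperroot count $\sum \rho_j + 2 \sum \gamma_i = k_{l_0}$ forces $\sum \rho_j \leq 0$, contradicting $\sum \rho_j \geq 1$. Thus $R$ and the nearby $R^{(n)}$ lie in the interior of the two-layer function space, producing real two-layer factorizations that combine with the outer $\bm s$-factors to give $P^{(n)} \in \mathcal{M}$, a contradiction. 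In the subcase where $k_{l_0}$ is odd or $s_{l_0-1} > k_{l_0-1}-1$, the two-layer sub-architecture is already Zariski-closed by Theorem \ref{thm:thick-closed}(c1), and the contradiction follows without appealing to Theorem \ref{thm:boundary2layer}.

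The hard part will be making the rigidity step precise across all layers: the inductive semi-continuity argument requires that, once the degree vector is locked, the factor $Q_l^{(n)}$ actually converges projectively to $Q_l$, so that dividing out $Q_L^{(n)} \cdots Q_{l+1}^{(n)}$ does not disturb the control on $d_l^{(n)}$ at the next step. This rests on the uniqueness of the $\bm s$-factorization and on continuity of polynomial division when divisor degrees stabilize. A secondary subtlety is the variable rescaling $\tilde{x} = x^{S_{l_0-1}}$: one must verify that the real-double-hyperroot condition on $P$ at stride $S_{l_0}$ transfers faithfully to the corresponding condition on $R$ at stride $s_{l_0-1}$, so that Theorem \ref{thm:boundary2layer} can be invoked in the reduced variables.
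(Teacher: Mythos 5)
Your proposal is correct and follows essentially the same route as the paper's proof: argue by contradiction, pin down the $\bm s$-factor degrees of the approximating sequence via semicontinuity of hyperroot counts together with Lemma~\ref{lem:s-factor-degrees-FS}, observe that all but the two factors at layers $l_0-1,l_0$ are real and converge (Lemma~\ref{lem:sFactorsReal}), reduce to the two-layer sub-architecture $((k_{l_0-1},k_{l_0}),(s_{l_0-1},1))$ after the substitution $\tilde x = x^{S_{l_0-1}}$, and derive the contradiction from the hyperroot count in Theorem~\ref{thm:boundary2layer}. The only cosmetic differences are that the paper dispatches your Case~A by citing Theorem~\ref{thm:boundaryProperties}b) rather than by rigidity, and phrases the final contradiction as $d_{l_0}\ge k_{l_0}+1$ instead of $\sum_j\rho_j\le 0$.
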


\begin{proof}
By Theorem \ref{thm:boundaryProperties} b), the $\bm s$-factor degrees $\bm d$ of $P$ satisfy at least one of the inequalities $\sum_{i=l}^L d_i S_i \geq \sum_{i=l}^L (k_i-1)S_i$ strictly.  
We now assume for contradiction the contrary of the assertion.
That means that exactly one of these inequalities is strict, say for the layer $l \in \{2, \ldots, L \}$, 
and that $\sum_{i=l}^L d_i S_i = \sum_{i=l}^L (k_i-1)S_i + S_l$, 
as well as that $P_l$ has no real double $S_i$-hyperroot for any $i \in \{2, \ldots, L \}$.

Since $P$ is taken from the relative boundary of the function space, 
there is a convergent sequence of polynomials $P^{(j)} \in \pi_1(\overline{\mathcal{M}}_{\bm k, \bm s}\setminus \mathcal{M}_{\bm k, \bm s}) $ with limit $P$.
Each real polynomial $P^{(j)}$ can be factored as $P_L^{(j)}\cdots P_1^{(j)}$, where $P_i^{(j)} \in \mathbb{C}[x^{S_i},y^{S_i}]_{k_i-1}$, and it does not admit an analogous real factorization due to Proposition \ref{prop:functionspace-poly}.
However, the limit polynomial $P$ has a real factorization $P_L \cdots P_1$ with $P_i \in \mathbb{R}[x^{S_i},y^{S_i}]_{k_i-1}$.

For any layer $i$, the number of $S_i$-hyperroots cannot decrease in the limit.
Hence, the $\bm s$-factor degrees $\bm d^{(j)}$ of $P^{(j)}$ satisfy
$\sum_{i=m}^L d_i S_i \geq \sum_{i=m}^L d^{(j)}_i S_i \geq \sum_{i=m}^L (k_i-1)S_i$ for every layer $m$.
Since we assumed that $\sum_{i=m}^L d_i S_i = \sum_{i=m}^L (k_i-1)S_i$ holds for all $m \neq l$,
we have 
\begin{align}
    \label{eq:degreesLarge}
d_m = d^{(j)}_m = k_m-1 \text{ for } m \in \{ L, \ldots, l+1\}, \\    
    \label{eq:degreesL}
d_l = k_l \geq d_l^{(j)} \geq k_l-1, \\
    \label{eq:degreesLminus1}
d_{l-1} = k_{l-1}-1-s_{l-1} \text{ and }
d_{l-1}^{(j)} +d_{l}^{(j)}s_{l-1} = d_{l-1} +d_{l}s_{l-1}, \\
    \label{eq:degreesSmall}
d_i = d^{(j)}_i = k_i-1 \text{ for } i \in  \{l-2, \ldots, 1\}.
\end{align}

In particular, if $L > l$, we see from $d_L = d_L^{(j)}$ that every $S_L$-hyperroot of $P$ is the limit of a sequence of $S_L$-hyperroots in $P^{(j)}$.
Moreover, $d_L = k_L-1$ implies that $P_L$ and $P_L^{(j)}$ are the products of the $S_L$-hyperroots in $P$ and $P^{(j)}$, respectively. 
Hence, $P_L^{(j)}$ converges to $P_L$ and $P_L^{(j)}$ is real by Lemma \ref{lem:sFactorsReal}.
Applying the same argument successively for $m = L, \ldots, l+1$ shows that $P_m^{(j)}$ is real and converges to $P_m$ for all $m > l$.
Analogously, we obtain from $d_{l-1}^{(j)} +d_{l}^{(j)}s_{l-1} = d_{l-1} +d_{l}s_{l-1} = (k_{l-1}-1) +(k_{l}-1)s_{l-1}$ that $P_l^{(j)}P_{l-1}^{(j)}$ is real and converges to $P_lP_{l-1}$.
Finally, we conclude from \eqref{eq:degreesSmall} that $P_i^{(j)}$ is real and converges to $P_i$ for all $i < l-1$.

Therefore, the factors $P_l^{(j)}$ and $P_{l-1}^{(j)}$ are non-real and 
their product $P_l^{(j)}P_{l-1}^{(j)}$ does not admit a real factorization of the same format.
This means that the sequence $P_l^{(j)}P_{l-1}^{(j)}$ comes from the complement $\overline{\mathcal{M}}_{\bm k',\bm s'} \setminus \mathcal{M}_{\bm k',\bm s'}$ of the two-layer architecture  $(\bm k', \bm s') = \left((k_{l-1}, k_l), (s_{l-1},1)\right)$ (after a change of variables $(x',y') := (x^{S_{l-1}},y^{S_{l-1}})$).
Moreover, it converges to $P_lP_{l-1}$ that corresponds to a filter inside the function space $\mathcal{M}_{\bm k',\bm s'}$.
By Theorem \ref{thm:boundary2layer}, the real $s_{l-1}$-hyperroot multiplicities of $P_lP_{l-1}$ satisfy $\sum_{i=1}^{c} \gamma_{i} +  \sum_{j=1}^{r} \lfloor \frac{\rho_j}{2} \rfloor \ge \frac{k_l}{2}$ and $\sum_{j=1}^{r} \rho_j \ge 1$.
Since we assumed that $P$ does not have any real double $S_l$-hyperroot, we obtain that $P_lP_{l-1}$ has at least $\frac{k_l}{2}$ pairs of complex conjugated $s_{l-1}$-hyperroots plus a  real one, meaning that it has at least $k_l+1$ many $s_{l-1}$-hyperroots in total.
In terms of the $\bm s$-factor degrees of $P$, this means that $d_l \geq k_l+1$.
This is a contradiction to \eqref{eq:degreesL}.
\end{proof}

The final ingredient for the proof of  Theorem \ref{thm:boundaryProperties} is the following technical statement on integers that we later use to estimate the dimension difference of LCN function spaces.

\begin{lemma}
    \label{lem:intergerDimension}
    Let $\bm s \in \mathbb{Z}_{>0}^{L-1}$, $S_l := \prod_{i=1}^{l-1} s_i$ for all $l=1, \ldots, L$, and let $\bm k, \bm k' \in \mathbb{Z}^L$ be distinct integer tuples such that
    $\sum_{i=1}^L (k'_i-1)S_i = \sum_{i=1}^L (k_i-1)S_i$ and
    $\sum_{i=l}^L (k'_i-1)S_i \geq \sum_{i=l}^L (k_i-1)S_i$ holds for all $l = 2, \ldots, L$.
    Then, 
    \begin{align}
        \label{eq:dimensionDiffLowerBound}
        \sum_{i=1}^L (k_i - k'_i) \geq \min \{ s_1, \ldots, s_{L-1} \} - 1.
    \end{align}
    Moreover, in the case that $s_i > 1$ for all $i$, the inequality \eqref{eq:dimensionDiffLowerBound} is strict if  one of the following conditions holds:
    \begin{enumerate}
        \item at least two of the inequalities $\sum_{i=l}^L (k'_i-1) S_i \geq \sum_{i=l}^L (k_i-1)S_i$ for $l \in \{2, \ldots, L \}$ are strict, or
        \item exactly one of the inequalities is strict and $k'_l > k_l+1$ for some $l \in \{2, \ldots, L \}$.
    \end{enumerate}
\end{lemma}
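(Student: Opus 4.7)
The plan is to convert the assumptions into non-negativity of a single sequence and then apply summation by parts. Define the cumulative quantities
\[ a_l := \sum_{i=l}^L (k'_i - k_i) S_i, \qquad l = 1, \ldots, L, \]
together with the convention $a_{L+1} := 0$. The two hypotheses on $\bm k, \bm k'$ translate to $a_1 = 0$ and $a_l \geq 0$ for all $l = 2, \ldots, L$. Moreover, since $S_i$ is an integer multiple of $S_l$ for every $i \geq l$, each $a_l$ is an integer multiple of $S_l$; in particular, $a_l > 0$ forces $a_l \geq S_l$.

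The key calculation is an Abel summation. From $(k'_l - k_l) S_l = a_l - a_{l+1}$ we get
\[ \sum_{l=1}^L (k_l - k'_l) \;=\; \sum_{l=1}^L \frac{a_{l+1} - a_l}{S_l} \;=\; \sum_{l=2}^L a_l \left( \frac{1}{S_{l-1}} - \frac{1}{S_l} \right) \;=\; \sum_{l=2}^L \frac{a_l\,(s_{l-1}-1)}{S_l}, \]
where the boundary terms at $l=1$ and $l=L+1$ vanish because $a_1 = a_{L+1} = 0$. Since $a_l / S_l$ is a non-negative integer and $s_{l-1} - 1 \geq 0$, every summand is a non-negative integer.

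Because $\bm k \neq \bm k'$, some $a_{l^*}$ with $l^* \geq 2$ must be positive (otherwise all differences $(k'_l - k_l) S_l = a_l - a_{l+1}$ vanish). For that index, $a_{l^*}/S_{l^*} \geq 1$, hence the whole sum is at least $s_{l^*-1} - 1 \geq \min_i s_i - 1$, which proves the inequality \eqref{eq:dimensionDiffLowerBound}.

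For the strict-inequality cases, assume all $s_i \geq 2$, so every non-vanishing summand is at least $\min_i s_i - 1 \geq 1$. In Case (1), two distinct indices $l_1, l_2$ satisfy $a_{l_j} > 0$, so the sum is at least $2(\min_i s_i - 1) > \min_i s_i - 1$. In Case (2), only one $a_{l^*} > 0$; unwinding $b_l := a_l - a_{l+1}$ shows that the only coordinates in which $\bm k'$ can differ from $\bm k$ are $l^* - 1$ (where $k'_{l^*-1} < k_{l^*-1}$) and $l^*$ (where $k'_{l^*} - k_{l^*} = a_{l^*}/S_{l^*} > 0$). Hence the hypothesis $k'_l > k_l + 1$ for some $l \geq 2$ forces $l = l^*$ and $a_{l^*}/S_{l^*} \geq 2$, giving a sum of at least $2(s_{l^*-1} - 1) > \min_i s_i - 1$. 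The main obstacle is simply getting the Abel-summation bookkeeping right; the conceptual subtlety is verifying that the tightness of the bound in the weak case corresponds precisely to the two configurations ruled out by conditions (1) and (2).
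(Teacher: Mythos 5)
Your proof is correct, but it takes a genuinely different route from the paper's. The paper argues by induction on $L$: it handles the two-layer base case by hand, and in the induction step peels off the first layer, passing to the truncated sequences $\overline{\bm s}=(s_2,\ldots,s_{L-1})$, $\overline{\bm k}$, $\overline{\bm k}'$ with a correction term $\alpha$ satisfying $\alpha s_1 = k_1-k_1'$. You instead establish the closed-form identity
\[
\sum_{l=1}^L (k_l-k_l') \;=\; \sum_{l=2}^L \frac{a_l\,(s_{l-1}-1)}{S_l}, \qquad a_l:=\sum_{i=l}^L (k_i'-k_i)S_i,
\]
by Abel summation, using $a_1=a_{L+1}=0$ to kill the boundary terms, and then read off everything from the fact that each $a_l/S_l$ is a non-negative integer (the divisibility $S_l\mid a_l$ and the non-negativity $a_l\ge 0$ are exactly the hypotheses of the lemma). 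Your derivation of the main bound, of case (1) (two positive summands), and of case (2) (the support analysis showing $\bm k'$ can differ from $\bm k$ only at $l^*-1$ and $l^*$, forcing $a_{l^*}/S_{l^*}\ge 2$) all check out. What your approach buys is transparency: the paper's induction is essentially an unrolling of your telescoping identity, and having the explicit formula makes the strictness case analysis immediate rather than threaded through an inductive dichotomy on whether $\alpha=0$ or $\alpha>0$. One cosmetic remark: your closing sentence suggests the tight cases are characterized \emph{precisely} by the negations of (1) and (2); the lemma only asserts (and you only need) the one-way implication, so no harm is done, but the converse is not something you have actually verified.
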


\begin{proof}
We prove the assertion by induction on $L$.
Since $\bm k$ and $\bm k'$ are assumed to be distinct, $L$ cannot be equal to one.
Thus, the base case of the induction is for $L=2$.
In that situation, we have $(k'_2-1)s_1+(k'_1-1) = (k_2-1)s_1 + (k_1-1)$ and $k'_2 > k_2$.
Hence, $k_2-k'_2+k_1-k'_1 = (k'_2-k_2)(s_1-1) \geq s_1-1$.
Moreover, in the case that $s_1>1$, the latter inequality is strict if and only if $k'_2 > k_2+1$.
Now we consider the case  $L>2$.
If the inequalities $\sum_{i=l}^L (k'_i-1)S_i \geq \sum_{i=l}^L (k_i-1)S_i$ are equalities for all $i \geq 3$, then $k'_i = k_i$ for $i \geq 3$ and $k'_2 > k_2$, which means that we can argue exactly as in the induction beginning.
If one of those inequalities for $i \geq 3$ is strict, we can apply the induction hypothesis to
$\overline{\bm s} := (s_2, \ldots, s_{L-1})$, 
$\overline{\bm k} := (k_2, \ldots, k_L)$, and
$\overline{\bm k}' := (k'_2 - \alpha, k'_3, \ldots, k'_L)$, where
we define $\alpha := \sum_{i=2}^L (k'_i-1)\frac{S_i}{s_1} - \sum_{i=2}^L (k_i-1)\frac{S_i}{s_1} \geq 0$.
This yields \begin{align}
    \label{eq:sumDifferenceAlpha}
    \sum_{i=2}^L (k_i - k'_i) + \alpha \geq \min \{ s_2, \ldots, s_{L-1} \} - 1
   \geq  \min \{ s_1, \ldots, s_{L-1} \} - 1.
\end{align}
Note that $\alpha s_1 = k_1 - k'_1$ due to 
$\sum_{i=1}^L (k'_i-1)S_i = \sum_{i=1}^L (k_i-1)S_i$.
Therefore, we have that 
\begin{align}\label{eq:lessLayersAlpha}
    \sum_{i=1}^L (k_i - k'_i) = \sum_{i=2}^L (k_i - k'_i) + \alpha + \alpha(s_1-1)
\geq \sum_{i=2}^L (k_i - k'_i) + \alpha,
\end{align}
and \eqref{eq:dimensionDiffLowerBound} follows from \eqref{eq:sumDifferenceAlpha}.
Finally, we assume that $s_i>1$ for all $i$.
If one of the two conditions in Lemma \ref{lem:intergerDimension} holds, then either $\alpha = 0$ and we see from applying the induction hypothesis that the first inequality in \eqref{eq:sumDifferenceAlpha} is strict, or $\alpha>0$ and the inequality in \eqref{eq:lessLayersAlpha} is strict.
In either case, \eqref{eq:dimensionDiffLowerBound} is strict.
\end{proof}

\begin{proof}[Proof of Theorem \ref{thm:boundaryProperties} c)]
    Since $\partial \mathcal{M}_{\bm k, \bm s}^R \subseteq \partial \mathcal{M}_{\tilde{\bm k}, \tilde{\bm s}}$ by Theorem \ref{thm:boundaryProperties} a), it is enough to show that
    \begin{align} 
    \label{eq:dimensionReducedBoundaryBounded}
        \dim \partial \mathcal{M}_{\tilde{\bm k}, \tilde{\bm s}} \leq \dim \FS - \min\{s_i: s_i > 1\}
        = \dim \overline{\mathcal{M}}_{\tilde{\bm k}, \tilde{\bm s}} - \min \{\tilde{s}_1, \ldots, \tilde{s}_{M-1} \}.
    \end{align}
    We first consider the case that $\min \{\tilde{s}_1, \ldots, \tilde{s}_{M-1} \} > \dim \overline{\mathcal{M}}_{\tilde{\bm k}, \tilde{\bm s}}$.
    This implies for every layer $m>1$ in the reduced architecture that $\tilde S_m = S_{m-1}s_{m-1} > S_{m-1}\sum_{i=1}^{m-1} (\tilde k_i-1) \geq \sum_{i=1}^{m-1} S_i (\tilde k_i-1)$.
    Thus, the relative boundary of $\mathcal{M}_{\tilde{\bm k}, \tilde{\bm s}}$ is empty by Theorem \ref{thm:thick-closed}c1).
    
    Hence, we assume in the following that $\min \{\tilde{s}_1, \ldots, \tilde{s}_{M-1} \} \leq \dim \overline{\mathcal{M}}_{\tilde{\bm k}, \tilde{\bm s}}$.
    We know that $\partial \mathcal{M}_{\tilde{\bm k}, \tilde{\bm s}} \subseteq \bigcup_{\tilde{\bm k}' \in \tilde{K} } \overline{\mathcal{M}}_{\tilde{\bm k}', \tilde{\bm s}} $ by Theorem \ref{thm:boundaryProperties} b).
   The dimension difference 
    \begin{align}
    \label{eq:dimDiffInProof}
         \dim \overline{\mathcal{M}}_{\tilde{\bm k}, \tilde{\bm s}} - \dim \overline{\mathcal{M}}_{\tilde{\bm k}', \tilde{\bm s}}
        = \sum_{i=1}^M (\tilde{k}_i - \tilde{k}'_i) 
        \geq  \min \{\tilde{s}_1, \ldots, \tilde{s}_{M-1} \}-1 
    \end{align}
    is estimated for all $\tilde{\bm k}' \in \tilde{K}$ in Lemma \ref{lem:intergerDimension}.
    
    We now distinguish between the three different types of boundary points described in Proposition \ref{prop:boundaryReducedToTwo}.
    The points in $\partial \mathcal{M}_{\tilde{\bm k}, \tilde{\bm s}}$ of the second type are contained in the union of $\overline{\mathcal{M}}_{\tilde{\bm k}', \tilde{\bm s}}$ whose dimension difference in \eqref{eq:dimDiffInProof} is strict (due to the first condition in the second part of Lemma \ref{lem:intergerDimension}).
    The points in $\partial \mathcal{M}_{\tilde{\bm k}, \tilde{\bm s}}$ that are not of the second type but of the third type in Proposition \ref{prop:boundaryReducedToTwo}
    are also contained in the union of $\overline{\mathcal{M}}_{\tilde{\bm k}', \tilde{\bm s}}$ whose dimension difference in \eqref{eq:dimDiffInProof} is strict (due to the second condition in Lemma \ref{lem:intergerDimension}).
    Thus, denoting the set of points in $\partial \mathcal{M}_{\tilde{\bm k}, \tilde{\bm s}}$ that are of the second or third type by $\partial \mathcal{M}^S_{\tilde{\bm k}, \tilde{\bm s}}$, we conclude that
    $\dim (\partial \mathcal{M}^S_{\tilde{\bm k}, \tilde{\bm s}}) \leq \dim \overline{\mathcal{M}}_{\tilde{\bm k}, \tilde{\bm s}} - \min \{\tilde{s}_1, \ldots, \tilde{s}_{M-1} \}$.

    Finally, we consider the set of points in $\partial \mathcal{M}_{\tilde{\bm k}, \tilde{\bm s}}$ whose associated polynomials have a real double $S_l$-hyperroot for some $S_l > 1$.
    We write $\Delta_{\tilde{\bm k}, \tilde{\bm s}}$ for the Zariski closure of that set in $\RR^k$.
    Since $\Delta_{\tilde{\bm k}, \tilde{\bm s}}  \subseteq \bigcup_{\tilde{\bm k}' \in \tilde{K} } \overline{\mathcal{M}}_{\tilde{\bm k}', \tilde{\bm s}} $, we have that 
    $\Delta_{\tilde{\bm k}, \tilde{\bm s}}  = \bigcup_{\tilde{\bm k}' \in \tilde{K} } \left(\Delta_{\tilde{\bm k}, \tilde{\bm s}} \cap
    \overline{\mathcal{M}}_{\tilde{\bm k}', \tilde{\bm s}} \right)$.
    By definition, $\Delta_{\tilde{\bm k}, \tilde{\bm s}}$ is contained in the discriminant hypersurface that describes polynomials with double roots.
    However, not every filter in $\overline{\mathcal{M}}_{\tilde{\bm k}', \tilde{\bm s}}$ corresponds to a polynomial with a double root, and so $\overline{\mathcal{M}}_{\tilde{\bm k}', \tilde{\bm s}}$ is not contained in the discriminant hypersurface.
    Hence, $\Delta_{\tilde{\bm k}, \tilde{\bm s}} \cap
    \overline{\mathcal{M}}_{\tilde{\bm k}', \tilde{\bm s}} \subsetneq \overline{\mathcal{M}}_{\tilde{\bm k}', \tilde{\bm s}}$.
    Since the latter is an irreducible variety, this shows that 
    $\dim (\Delta_{\tilde{\bm k}, \tilde{\bm s}} \cap
    \overline{\mathcal{M}}_{\tilde{\bm k}', \tilde{\bm s}})< \dim(\overline{\mathcal{M}}_{\tilde{\bm k}', \tilde{\bm s}})$.
    Therefore, we conclude  that
        $\dim \Delta_{\tilde{\bm k}, \tilde{\bm s}}  = \max_{\tilde{\bm k}' \in {\tilde{K}}} \dim (\Delta_{\tilde{\bm k}, \tilde{\bm s}} \cap \overline{\mathcal{M}}_{\tilde{\bm k}', \tilde{\bm s}})
        \leq \dim(\overline{\mathcal{M}}_{\tilde{\bm k}', \tilde{\bm s}}) - 1
        \leq \dim \overline{\mathcal{M}}_{\tilde{\bm k}, \tilde{\bm s}} - \min \{\tilde{s}_1, \ldots, \tilde{s}_{M-1} \}$, where the latter inequality comes from \eqref{eq:dimDiffInProof}.

        Since $\partial \mathcal{M}_{\tilde{\bm k}, \tilde{\bm s}} \subseteq \{ 0 \} \cup \partial \mathcal{M}^S_{\tilde{\bm k}, \tilde{\bm s}} \cup \Delta_{\tilde{\bm k}, \tilde{\bm s}}$, we have proven \eqref{eq:dimensionReducedBoundaryBounded}.
\end{proof}

\section{Optimization}
\label{sec:opt}

In this section, we prove Theorem \ref{thm:exposed}.
We start by expressing the squared error loss
$
\ell_{\mathcal D}(w)=\sum_{i=1}^N \|\texttt{y}^{(i)} - T_{w,s}\texttt{x}^{(i)}\|^2
$
directly in terms of the filters $w$ instead of first passing to Toeplitz matrices $T_{w,s}$.
For that, we collect the training data $\mathcal{D} = \{ (\texttt{x}^{(1)},\texttt{y}^{(1)}), \ldots, (\texttt{x}^{(N)},\texttt{y}^{(N)}) \} \subseteq \RR^{d_0} \times \RR^{d_L}$ into two matrices
$X \in \RR^{d_0\times N}$ and $Y \in \RR^{d_L\times N}$ whose columns are $\texttt{x}^{(1)}, \ldots, \texttt{x}^{(N)}$ and $\texttt{y}^{(1)}, \ldots, \texttt{y}^{(N)}$, respectively, 
and write $\ell_\mathcal{D}(w) = \| Y - T_{w,s}X \|^2_F$, where $\|\cdot\|_F$ is the Frobenius norm.
We next consider the linear map
\begin{align*}
    \chi_{X,s}: \RR^k \to \RR^{d_L \times N}, \quad w \mapsto T_{w,s} X.
\end{align*}
We let $\bar Y$ denote the orthogonal projection of $Y$ onto the image of $\chi_{X,s}$, and choose a filter $u_Y \in \RR^k$ such that $\chi_{X,s}(u_Y) = \bar Y$.
With this, we can write the squared error loss as
\begin{align}\label{eq:squaredLossOnFilters}
    \ell_\mathcal{D}(w)= \| Y - \bar Y \|^2_F + \| \bar Y - T_{w,s}X \|^2_F = \| Y - \bar Y \|^2_F + \| \chi_{X,s}(u_Y-w)  \|^2_F.
\end{align}
In this expression, $\bar Y$ and thus $\| Y - \bar Y \|^2_F$ only depend on the data $\mathcal{D}$ and $(k,s)$, but not on the filter $w$.
Hence, minimizing 
$\ell_\mathcal{D}(w)$ is equivalent to minimizing $\| \chi_{X,s}(u_Y-w)  \|_F^2$.
We observe that $\| \cdot \|_{X,s} := \| \chi_{X,s}(\cdot) \|_F$ is a seminorm on $\RR^k$ that is induced by the (possibly degenerate) inner product $\langle w_1,w_2 \rangle_{X,s} := \langle \chi_{X,s}(w_1), \chi_{X,s}(w_2) \rangle_F = \mathrm{tr}((T_{w_1,s}X)^\top T_{w_2,s}X) $.
\begin{lemma} \label{lem:seminorm}
    The inner product $\langle \cdot,\cdot \rangle_{X,s}$ is non-degenerate (i.e., $\| \cdot \|_{X,s}$ is a norm on $\RR^k$) if and only if the linear map $\chi_{X,s}$ is injective. 
    Moreover, if $N \geq k$, then $\chi_{X,s}$ is injective for almost all $X \in \RR^{d_0\times N}$.
\end{lemma}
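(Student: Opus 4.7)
The plan is straightforward. The first equivalence is essentially a tautology: by definition $\langle w, w\rangle_{X,s} = \|\chi_{X,s}(w)\|_F^2$, so the seminorm $\|\cdot\|_{X,s}$ is a norm (equivalently, the inner product is non-degenerate) exactly when $\chi_{X,s}(w) = 0$ implies $w = 0$, which is injectivity of $\chi_{X,s}$.

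For the genericity statement, I would use a standard algebraic argument. Fixing bases on source and target, the linear map $\chi_{X,s}$ is represented by a $(d_L N) \times k$ matrix whose entries are linear functions of the entries of $X$. This matrix has rank $k$ exactly when at least one of its $k \times k$ minors is nonzero; each such minor is a polynomial in the entries of $X$, so the locus of $X \in \RR^{d_0 \times N}$ where all of these minors vanish is an algebraic subset. To conclude that this subset is proper, it is enough to exhibit a single $X$ for which $\chi_{X,s}$ is injective.

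To construct such an $X$, I would use $d_0 = s(d_L - 1) + k \geq k$ together with the hypothesis $N \geq k$. Let the first $k$ columns of $X$ be the standard basis vectors $e_0, \ldots, e_{k-1} \in \RR^{d_0}$, and let the remaining $N-k$ columns be arbitrary (e.g., zero). Then for any filter $w \in \RR^k$ and any $m \in \{1, \ldots, k\}$, the $(0, m)$-entry of $T_{w,s}X$ equals $(T_{w,s})_{0, m-1}$, which by \eqref{eq:convolution-definition} is $w[m-1]$. Hence $T_{w,s}X = 0$ forces $w = 0$, so $\chi_{X,s}$ is injective for this choice of $X$.

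There is no substantive obstacle; the only bookkeeping is confirming the indexing of the generalized Toeplitz matrix so that the top row of $T_{w,s}$ recovers all $k$ entries of the filter. Everything else is the general principle that a polynomial condition satisfied on an open subset of parameter space is satisfied off a proper algebraic subset.
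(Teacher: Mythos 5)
Your proof is correct and follows essentially the same route as the paper: the first claim is the same tautology, and for the genericity claim both arguments rest on the observation that the top row of $T_{w,s}X$ pairs $w$ with the first $k$ entries of each column of $X$, so injectivity holds off a proper algebraic subset once $N\ge k$. The only cosmetic difference is that you make the algebraicity of the bad locus explicit and exhibit a witness $X$ built from standard basis vectors, whereas the paper phrases the same fact as ``almost all choices of the truncated columns span $\RR^k$.''
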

\begin{proof}
    By definition of the inner product $\langle \cdot,\cdot \rangle_{X,s}$, we see that $\langle w,w\rangle_{X,s}=0$ if and only if $\chi_{X,s}(w)=0$, which shows the first part of the assertion.
    For the second part, we write $\bar{\texttt{x}}^{(i)} \in \RR^k$ for the first $k$ entries of the $i$-th column of $X$.
    The condition $\chi_{X,s}(w)=0$ implies in particular that $w \in \RR^k$ is orthogonal (with respect to the standard Euclidean inner product) to each of the vectors $\bar{\texttt{x}}^{(1)}, \ldots, \bar{\texttt{x}}^{(N)}$. 
    Hence, if $N\geq k$, almost all choices of $\bar{\texttt{x}}^{(1)}, \ldots, \bar{\texttt{x}}^{(N)}$ force $w$ to be zero. 
\end{proof}

\begin{corollary}\label{cor:squaredLossInnerProduct}
    Let $N \geq k$.
    For almost all $X \in \RR^{d_0\times N}$, minimizing the squared loss $\ell_\mathcal{D}(w)$ is equivalent to minimizing the squared inner product norm 
    $\| u_Y - w \|^2_{X,s} $, 
    where $u_Y$ is the unique filter such that $\chi_{X,s}(u_Y)$ is the orthogonal projection (with respect to the Frobenius norm) of $Y$ onto the image of $\chi_{X,s}$. 
\end{corollary}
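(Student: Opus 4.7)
The plan is to read this corollary as an essentially immediate consequence of identity \eqref{eq:squaredLossOnFilters} together with Lemma \ref{lem:seminorm}. The first step is to rewrite the squared loss using \eqref{eq:squaredLossOnFilters} as $\ell_\mathcal{D}(w) = \|Y - \bar Y\|_F^2 + \|u_Y - w\|_{X,s}^2$, where $\|\cdot\|_{X,s} = \|\chi_{X,s}(\cdot)\|_F$. Since $\bar Y$ and $\|Y-\bar Y\|_F^2$ depend only on the data $\mathcal D$ and on the pair $(k,s)$, not on the filter $w$, the minimization of $\ell_\mathcal{D}(w)$ over $w$ (or over any subset of $\RR^k$ such as $\mathcal M_{\bm k,\bm s}$) is equivalent to the minimization of $\|u_Y - w\|_{X,s}^2$.

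Next, I need to justify that $u_Y$ is actually well-defined as the unique filter with $\chi_{X,s}(u_Y) = \bar Y$, since the earlier discussion only asserted existence. For this I would invoke Lemma \ref{lem:seminorm}: the hypothesis $N \geq k$ guarantees that $\chi_{X,s}$ is injective for almost all $X \in \RR^{d_0 \times N}$, so on this Zariski-open set the preimage of $\bar Y$ under $\chi_{X,s}$ is a singleton. By the same lemma, injectivity of $\chi_{X,s}$ is equivalent to $\langle\cdot,\cdot\rangle_{X,s}$ being a genuine inner product (not just a degenerate one), so $\|\cdot\|_{X,s}$ is a norm on $\RR^k$ for almost all $X$, matching the phrasing of the corollary.

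I do not anticipate a genuine obstacle here: the work has already been done in establishing \eqref{eq:squaredLossOnFilters} and Lemma \ref{lem:seminorm}. The only thing to be careful about is the logical scope of ``almost all $X$'': the same Zariski-open set of data matrices $X$ simultaneously ensures injectivity of $\chi_{X,s}$, uniqueness of $u_Y$, and that $\|\cdot\|_{X,s}$ is a norm, so a single application of Lemma \ref{lem:seminorm} suffices. The corollary then follows in one or two lines.
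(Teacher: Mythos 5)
Your proposal is correct and follows exactly the paper's route: the paper's proof is the one-line observation that the corollary follows from \eqref{eq:squaredLossOnFilters} together with Lemma~\ref{lem:seminorm}, which is precisely the argument you spell out. Your added remark that a single Zariski-open set of matrices $X$ simultaneously yields injectivity of $\chi_{X,s}$, uniqueness of $u_Y$, and non-degeneracy of $\langle\cdot,\cdot\rangle_{X,s}$ is a correct and harmless elaboration of what the paper leaves implicit.
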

\begin{proof}
    This follows immediately from \eqref{eq:squaredLossOnFilters} and Lemma \ref{lem:seminorm}.
\end{proof}

In the following, we fix an LCN architecture $(\bm k, \bm s)$ and write $\mu = \mu_{\bm k, \bm s}$
and $\langle \cdot,\cdot\rangle_X := \langle \cdot,\cdot \rangle_{X,s}$.
In light of Corollary \ref{cor:squaredLossInnerProduct}, we assume from now on that $N \geq k$ and that $X \in \RR^{d_0\times N}$ is such that minimizing  
$\ell_\mathcal{D}$ is equivalent to minimizing the squared norm $\| u_Y - \cdot \|^2_X$.
To prove Theorem \ref{thm:exposed}, it is now sufficient to show that, for a fixed $X$ and for almost every data filter $u \in \RR^k$, every  critical point of 
$\mathcal{L}_{u,X}(\theta) := \| u-\mu(\theta) \|^2_X$ satisfies one of the three conditions in Theorem \ref{thm:exposed}.

A filter tuple $\theta$ is a critical point of $\mathcal{L}_{u,X}$ if and only if the data filter $u$ is contained in the \emph{normal space} 
\begin{align}
\label{eq:normalSpace}
    N_X(\theta) := \mathrm{im}(d_\theta \mu)^{\perp_X} + \mu(\theta) \subseteq \RR^k,
\end{align}
where $\perp_X$ denotes the orthogonal complement with respect to the inner product $\langle \cdot,\cdot \rangle_X$.
Hence, to prove that a fixed set $\Theta$ of filter tuples does not contain any critical point of $\mathcal{L}_{u,X}$ for almost all data filters $u$, our proof strategy is to show that the union of the normal spaces $N_X(\theta)$ over all $\theta \in \Theta$ is contained in a proper algebraic subset of $\RR^k.$ 

\begin{definition}
  We say that a semialgebraic subset $\Theta \subseteq \mathbb{R}^{k_1}\times \ldots \times \mathbb{R}^{k_L}$ is \emph{exposed} with respect to $\langle\cdot,\cdot\rangle_X$ if 
  \begin{align*}
      \dim \left( \bigcup_{\theta \in \Theta} N_X(\theta) \right) = k.
  \end{align*}
\end{definition}

\begin{example}
    If $L \geq 2$, then $\mu^{-1}(0)$ is exposed.
Indeed, for any filter tuple $\theta $ with at least two zero filters, $d_\theta\mu=0$ and thus $\dim(N_X(\theta)) = k$. 
\end{example}

\begin{lemma}
    \label{lem:exposedAffineCone}
    Let $\Theta \subseteq (\mathbb{R}^{k_1} \setminus\{ 0\}) \times \ldots \times (\mathbb{R}^{k_L}\setminus\{ 0\})$ be a semialgebraic subset that is an affine cone, i.e., for every $\theta \in \Theta$ and $\lambda \in (\RR \setminus\{ 0\})^L$ we have that $(\lambda_1\theta_1, \ldots, \lambda_L\theta_L) \in \Theta$.
    Then $\dim \left( \bigcup_{\theta \in \Theta} N_X(\theta) \right)
    \leq \dim \Theta  - L + 1 + k - \min_{\theta \in \Theta} \rank(d_\theta\mu). 
    $
\end{lemma}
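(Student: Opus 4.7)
The plan is to bundle all normal spaces $N_X(\theta)$ into the image of a single semialgebraic map $\Psi$, and then exploit the multilinearity of $\mu$ to exhibit $L$-dimensional fibers of $\Psi$. This will save $L-1$ against the naive count of $\dim \Theta + 1 + k - \min_{\theta}\rank(d_\theta\mu)$, yielding the desired bound.

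The preliminary observation is that $\mu$ is linear in each of its $L$ arguments separately, so $\mu(\lambda\theta) = (\lambda_1\cdots\lambda_L)\,\mu(\theta)$, and in the expression
\[
d_\theta\mu(\dot w_1,\ldots,\dot w_L) = \sum_{l=1}^L \pi_1^{-1}\bigl(\pi_{S_L}(w_L)\cdots \pi_{S_l}(\dot w_l)\cdots \pi_{S_1}(w_1)\bigr)
\]
from the proof of Lemma~\ref{lem:zeroFilterCrit}, the $l$-th summand at $\lambda\theta$ differs from the one at $\theta$ by the nonzero factor $\prod_{i\neq l}\lambda_i$. Absorbing that factor into $\dot w_l$, I conclude $\mathrm{im}(d_{\lambda\theta}\mu)=\mathrm{im}(d_\theta\mu)$, so that $\mathrm{im}(d_\theta\mu)^{\perp_X}$ is constant along the $(\RR^{*})^L$-orbits on $\Theta$.

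Next I introduce the semialgebraic incidence set
\[
E := \{(\theta,t,v)\in \Theta\times\RR\times\RR^k : v\in \mathrm{im}(d_\theta\mu)^{\perp_X}\}
\]
and the map $\Psi\colon E\to \RR^k,\; (\theta,t,v)\mapsto t\,\mu(\theta)+v$. Specializing $t=1$ gives $\bigcup_{\theta\in\Theta}N_X(\theta)\subseteq \Psi(E)$. The projection $E\to\Theta$ has fiber $\RR\times \mathrm{im}(d_\theta\mu)^{\perp_X}$ of dimension $1+(k-\rank(d_\theta\mu))$, yielding $\dim E\le \dim\Theta + 1 + k - \min_{\theta\in\Theta}\rank(d_\theta\mu)$. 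The decisive step is to lower-bound fibers of $\Psi$ by $L$: for a point $(\theta_0,t_0,v_0)\in E$ with $t_0\neq 0$ and $\mu(\theta_0)\neq 0$, each $\lambda\in(\RR^{*})^L$ gives a point $(\lambda\theta_0,\,t_0/\prod_l\lambda_l,\,v_0)\in E$ (by the invariance above), and
\[
\Psi\bigl(\lambda\theta_0,\,t_0/\textstyle\prod_l\lambda_l,\,v_0\bigr) = \bigl(t_0/\textstyle\prod_l\lambda_l\bigr)\bigl(\textstyle\prod_l\lambda_l\bigr)\mu(\theta_0) + v_0 = \Psi(\theta_0,t_0,v_0),
\]
producing an $L$-parameter family inside $\Psi^{-1}(\Psi(\theta_0,t_0,v_0))$. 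The corner cases $t_0=0$ or $\mu(\theta_0)=0$ only enlarge the fiber (one is allowed to vary $t$, or $\lambda$, more freely), so in every case the fiber has dimension at least $L$. The semialgebraic fiber-dimension inequality then delivers
\[
\dim \Psi(E) \le \dim E - L \le \dim\Theta - L + 1 + k - \min_{\theta\in\Theta}\rank(d_\theta\mu),
\]
which gives the claimed estimate.

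The main technical point will be to handle the possibility that $\rank(d_\theta\mu)$ is not constant on $\Theta$, since then both the projection $E\to\Theta$ and the map $\Psi$ have fibers of varying dimension. I would deal with this by stratifying $\Theta$ into finitely many locally closed semialgebraic pieces of constant rank, applying the argument above on each stratum separately, and taking the maximum; the final bound then involves $\min_{\theta\in\Theta}\rank(d_\theta\mu)$ as stated. Verifying the corner cases and the standard semialgebraic fiber-dimension statement are routine by comparison.
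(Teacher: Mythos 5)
Your proof is correct and rests on exactly the same two observations as the paper's: $\mathrm{im}(d_{\lambda\cdot\theta}\mu)=\mathrm{im}(d_\theta\mu)$ and $\mu(\lambda\cdot\theta)=(\lambda_1\cdots\lambda_L)\mu(\theta)$, so that the union of normal spaces over a torus orbit costs only one extra dimension while the orbit itself is $L$-dimensional. The paper carries out this count directly by passing to $\PP(\Theta)$ and bounding the orbit-wise unions, whereas you package it as an incidence correspondence with $L$-dimensional $\Psi$-fibers; this is only a difference in bookkeeping, not in substance.
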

\begin{proof}
    Let us start by fixing a filter tuple $\theta\in\Theta$.
    For $\lambda \in (\RR \setminus\{ 0\})^L$, we write $\lambda \cdot \theta := (\lambda_1\theta_1, \ldots, \lambda_L\theta_L)$.
    Since  for every such $\lambda$ we have  
    $\mu(\lambda\cdot\theta) = \lambda_1\cdots\lambda_L \mu(\theta)$ and
    $\mathrm{im}(d_{\lambda \cdot \theta}\mu) = \mathrm{im}(d_\theta\mu)$,  we see that $\dim (\bigcup_{\lambda\in (\RR \setminus\{ 0\})^L} N_X(\lambda\cdot\theta)) \leq \dim N_X(\theta)+1$.
    Hence, 
    \begin{align*}
     &   \dim \left( \bigcup_{\theta \in \Theta} N_X(\theta) \right) = 
    \dim \left( \bigcup_{\bar\theta \in \PP(\Theta)} \bigcup_{\lambda\in (\RR \setminus\{ 0\})^L} N_X(\lambda\cdot\bar\theta) \right)
    \\&\leq \dim \PP(\Theta) + \max_{\bar\theta\in \PP(\Theta)} \dim \left(\bigcup_{\lambda\in (\RR \setminus\{ 0\})^L} N_X(\lambda\cdot\bar\theta) \right) 
    \leq \dim \Theta - L + \max_{\theta \in \Theta} (\dim N_X(\theta)+1).
    \end{align*}
    Since the dimension of the normal space $N_X(\theta)$ is $k - \rank(d_\theta\mu) $, the assertion follows.
\end{proof}

To estimate the rank of the differential of $\mu$, we investigate the $S_l$-hyperroots that the filters in a tuple $\theta$ have in common. 

\begin{definition} \label{def:chd}
    Let $\theta = (w_1, \ldots, w_L) \in \mathbb{R}^{k_1}\times \cdots \times \mathbb{R}^{k_L}$ with $w_l \neq 0$ and $P_l := \pi_{S_l}(w_l)$ for all $l$. 
    Set $G_l(\theta) := \gcd (P_l, P_{l-1} \cdots P_1) \in \RR[x^{S_l}, y^{S_l}]$ for $l \in \{2, \ldots, L \rbrace$, and 
    $G_1(\theta) := 1$.
    The \emph{common hyperroot degree} of $\theta$ is
    $\mathrm{chd}(\theta) := \sum_{l=2}^L \deg(G_l(\theta))$.
\end{definition}

Recall that Theorem \ref{thm:crit-gene}  states that $\theta$ with $\mu(\theta)\neq0$ is a critical point of $\mu$ if and only if $\mathrm{chd}(\theta) \geq 1$.

\begin{proposition} \label{prop:rankBound}
    For $\theta \in (\RR^{k_1} \setminus \{0\}) \times \cdots\times (\RR^{k_L} \setminus \{0\})$, we have that
    $\rank(d_\theta\mu) \geq \dim(\mathcal{M}) - \mathrm{chd}(\theta)$.
\end{proposition}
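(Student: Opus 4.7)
The plan is to induct on the number $L$ of layers. The base case $L=1$ is immediate: $\mu$ is the identity on $\RR^{k_1}$, so $\rank(d_\theta \mu) = k_1 = \dim(\mathcal{M})$ and $\mathrm{chd}(\theta) = 0$ by convention.

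For the inductive step, I would use the factorization $\mu = \varphi \circ (\tilde{\mu} \times \mathrm{id}_{\RR^{k_L}})$, where $\tilde{\mu}$ parameterizes the first $L-1$ layers and $\varphi(\tilde P, P_L) = \tilde P \cdot P_L$ is the product map that was already exploited in Section~\ref{sec:criticalMu}. Writing $P_l := \pi_{S_l}(w_l)$, $\tilde{\theta} := (w_1, \ldots, w_{L-1})$, and $\tilde{P} := \tilde\mu(\tilde\theta) = P_{L-1}\cdots P_1$, the chain rule together with the formula for $d\varphi$ gives
\[
  \mathrm{im}(d_\theta \mu) \;=\; P_L \cdot \mathrm{im}(d_{\tilde\theta}\tilde\mu) \;+\; \tilde P \cdot \RR[x^{S_L},y^{S_L}]_{k_L-1}.
\]
Since all $w_l$ are nonzero, multiplication by $P_L$ and by $\tilde P$ is injective, so the two summands have dimensions $\rank(d_{\tilde\theta}\tilde\mu)$ and $k_L$ respectively, and
\[
  \rank(d_\theta \mu) \;=\; \rank(d_{\tilde\theta}\tilde\mu) + k_L - \dim(U), \qquad U := P_L \cdot \mathrm{im}(d_{\tilde\theta}\tilde\mu) \,\cap\, \tilde P \cdot \RR[x^{S_L},y^{S_L}]_{k_L-1}.
\]

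The heart of the argument, which I expect to be the main subtlety, is bounding $\dim(U)$ in terms of $G_L(\theta) = \gcd(P_L, \tilde P) \in \RR[x^{S_L}, y^{S_L}]$. Any element $P_L Q = \tilde P R$ of $U$ forces $P_L \mid \tilde P R$; since $P_L/G_L$ and $\tilde P/G_L$ are coprime, $P_L/G_L$ must divide $R$. Writing $R = (P_L/G_L) R'$ with $R' \in \RR[x^{S_L}, y^{S_L}]$, a homogeneity count shows that $R'$ is homogeneous of degree exactly $g_L := \deg G_L / S_L$ in the variables $(x^{S_L}, y^{S_L})$ (note $\deg G_L$ is a multiple of $S_L$ since $G_L \in \RR[x^{S_L}, y^{S_L}]$). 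Such $R'$ range over a vector space of dimension $g_L + 1$, and every element of $U$ is determined by $R$, hence by $R'$. Therefore $\dim(U) \leq g_L + 1$. This is precisely the step that quantifies how common hyperroots inflate the kernel of $d_\theta \mu$.

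Combining the intersection bound with the inductive hypothesis $\rank(d_{\tilde\theta}\tilde\mu) \geq \dim(\tilde{\mathcal M}) - \mathrm{chd}(\tilde\theta)$ and the identity $\dim(\mathcal{M}) = \dim(\tilde{\mathcal M}) + k_L - 1$ from Theorem~\ref{thm:dim-mu} yields
\[
  \rank(d_\theta\mu) \;\geq\; \dim(\mathcal{M}) - \mathrm{chd}(\tilde\theta) - g_L \;\geq\; \dim(\mathcal{M}) - \mathrm{chd}(\tilde\theta) - \deg G_L \;=\; \dim(\mathcal{M}) - \mathrm{chd}(\theta),
\]
where the second inequality uses $g_L \leq \deg G_L$ (since $S_L \geq 1$) and the final equality uses the defining decomposition $\mathrm{chd}(\theta) = \mathrm{chd}(\tilde\theta) + \deg G_L$. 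This completes the induction.
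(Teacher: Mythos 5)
Your route is genuinely different from the paper's. The paper does not induct on $L$ or decompose $\mathrm{im}(d_\theta\mu)$; instead it divides \emph{all} common factors $G_l(\theta)$ out at once, observes that the resulting tuple $\theta'$ is a \emph{regular} point of the parametrization map of a smaller architecture $(\bm k',\bm s)$ with $k'_l-1=k_l-1-\deg G_l(\theta)$ (by Theorem~\ref{thm:crit-gene}), and embeds $d_{\theta'}\mu_{\bm k',\bm s}$ into $d_\theta\mu$ via multiplication by the $G_l$'s, giving $\rank(d_\theta\mu)\ge\dim\mathcal M_{\bm k',\bm s}=\dim\mathcal M-\mathrm{chd}(\theta)$. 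Your decomposition $\mathrm{im}(d_\theta\mu)=P_L\cdot\mathrm{im}(d_{\tilde\theta}\tilde\mu)+\tilde P\cdot\RR[x^{S_L},y^{S_L}]_{k_L-1}$, the identity $\rank(d_\theta\mu)=\rank(d_{\tilde\theta}\tilde\mu)+k_L-\dim(U)$, and the final bookkeeping via $\dim\mathcal M=\dim\tilde{\mathcal M}+k_L-1$ are all correct.

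There is, however, a gap at precisely the step you single out as the heart of the argument. The claim ``since $P_L/G_L$ and $\tilde P/G_L$ are coprime, $P_L/G_L$ must divide $R$'' does not hold as stated, because $P_L/G_L$ and $\tilde P/G_L$ need \emph{not} be coprime in $\RR[x,y]$: $G_L(\theta)$ is by definition the largest common divisor lying in $\RR[x^{S_L},y^{S_L}]$ (you use this when asserting $S_L\mid\deg G_L$), and after removing it $P_L$ and $\tilde P$ can still share linear factors, namely whenever a hyperroot of $P_L$ shares some but not all of its $S_L$ linear factors with $\tilde P$. For instance, with $S_L=2$, $P_L=x^2-y^2$ and $\tilde P=(x-y)Q_1$ where $(x+y)\nmid Q_1$, one has $G_L=1$ yet $\gcd_{\RR[x,y]}(P_L,\tilde P)=x-y$; coprimality reasoning then only yields $(x+y)\mid R$, not $(x^2-y^2)\mid R$. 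The divisibility you need is nevertheless true, but for a different reason: $R$ lies in $\RR[x^{S_L},y^{S_L}]$, so any linear factor of $R$ forces its entire $S_L$-hyperroot to divide $R$ with the same multiplicity (the observation used in the proof of Proposition~\ref{prop:twoLayerCritMu}). Concretely, for each $S_L$-hyperroot $r$ and each linear $\ell\mid r$, the relation $P_L\mid\tilde PR$ gives $\mathrm{mult}_\ell(P_L)\le\mathrm{mult}_\ell(\tilde P)+\mathrm{mult}_\ell(R)$; since $\mathrm{mult}_\ell(P_L)$ and $\mathrm{mult}_\ell(R)$ equal the hyperroot multiplicities of $r$ in $P_L$ and $R$, minimizing over $\ell\mid r$ yields $\mathrm{mult}_r(R)\ge\mathrm{mult}_r(P_L)-\mathrm{mult}_r(\tilde P)$, hence $(P_L/G_L)\mid R$. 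With that repair, your bound $\dim(U)\le g_L+1$ and the induction go through.
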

\begin{proof}
Using our identification $\pi_1$ of filters with polynomials, we consider $\theta = (P_1, \ldots, P_L)$ as a tuple of polynomials as in \eqref{eq:muPol}.  
For every layer $l$, we define $P'_l := \frac{P_l}{G_l(\theta)}$.
Then $P'_l$ and the product $P'_{l-1} \cdots P'_1$ are coprime.
Hence, $\theta' := (P'_1, \ldots, P'_L)$ is a regular point of
$\mu' := \mu_{\bm k', \bm s}$ where $k'_l := \deg P'_l+1$ 
(this follows from Theorem \ref{thm:crit-gene} after omitting all layers $l$ with $k'_l = 1$). 
We can see $d_{\theta'}\mu'$ as a restriction of $d_{\theta}\mu$ via the following commutative diagram:
\begin{center}
    \begin{tikzcd}
\prod_{i=1}^{L} \mathbb{R}[x^{S_i},y^{S_i}]_{k_i-1}  \arrow{r}{d_\theta\mu}  & \RR[x,y]_{k-1} \\   
\prod_{i=1}^{L} \mathbb{R}[x^{S_i},y^{S_i}]_{k'_i-1} \arrow{r}{d_{\theta'}\mu'}
\arrow[u,hook,"\varphi"] & \RR[x,y]_{k'-1}
\arrow[u,hook,"\psi"]
\end{tikzcd}
\end{center}
where $\varphi: (\dot P'_1, \ldots, \dot P'_L) \mapsto (\dot P'_1 G_1(\theta), \ldots, \dot P'_L G_L(\theta))$ and 
$\psi: \dot P' \mapsto \dot P' G_1(\theta)\cdots G_L(\theta)$.
Therefore, we conclude that
$\rank(d_\theta\mu) \geq \rank(d_{\theta'}\mu') = \dim(\mathcal{M}_{\bm k', \bm s}) = \sum_{i=1}^L (k'_i-1)+1
= \sum_{i=1}^L (k_i-1-\deg G_i(\theta))+1
= \dim(\FS) - \mathrm{chd}(\theta)$.
\end{proof}

We now aim to show that the critical points of $\mu$ (except those in $\mu^{-1}(0)$) are not exposed. 
For that, we stratify that set of critical points as the disjoint union (over  all $\delta \in \mathbb{Z}_{> 0}$) of 
$C_\delta := \lbrace \theta \in (\RR^{k_1} \setminus \{0\}) \times \ldots\times (\RR^{k_L} \setminus \{0\}) \mid \mathrm{chd}(\theta)=\delta \rbrace$.

\begin{proposition}\label{prop:dimCritBound}
    Let the architecture $(\bm k, \bm s)$ be reduced. If $\delta>0$ and  $C_\delta \neq \emptyset$, then we have that $\mathrm{codim}(C_\delta) > \delta$.
\end{proposition}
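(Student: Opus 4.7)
I would prove this by stratifying $C_\delta$ and bounding the dimension of each stratum. Write $C_\delta = \bigcup_{\bm\delta} C_{\bm\delta}$, indexed by profiles $\bm\delta = (\delta_2, \ldots, \delta_L)$ with $\sum_l \delta_l = \delta$, where $\delta_l$ denotes the degree of $G_l(\theta)$ as a polynomial in the variables $x^{S_l}, y^{S_l}$. Since this is a finite union, it suffices to show $\mathrm{codim}(C_{\bm\delta}) > \delta$ for each profile. On a given stratum I would parametrize by writing, for each $l \geq 2$, $P_l = G_l \tilde P_l$ with $G_l \in \RR[x^{S_l},y^{S_l}]_{\delta_l}$ and $\tilde P_l \in \RR[x^{S_l},y^{S_l}]_{k_l-1-\delta_l}$, and then imposing the divisibility constraint $G_l \mid P_{l-1} \cdots P_1$ in $\RR[x,y]$.

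The key computation is that, for a reduced architecture, each shared $S_l$-hyperroot contributes at least $s_{l-1} = S_l/S_{l-1} \geq 2$ to the codimension. An $S_l$-hyperroot $R$ has $S_l$ complex linear roots which, under the $S_{l-1}$-action, partition into $s_{l-1}$ orbits, each being an $S_{l-1}$-hyperroot. Since $P_i \in \RR[x^{S_i},y^{S_i}]$ is invariant under the $S_i$-action, any root of $R$ in $P_i$'s zero locus pulls its entire $S_i$-orbit into $P_i$, imposing one linear condition on $P_i$ per orbit. The cheapest placement puts all $s_{l-1}$ orbits of $R$ into $P_{l-1}$, costing $s_{l-1}$ conditions on $P_{l-1}$; any alternative (pushing an orbit to $P_i$ with $i < l-1$ costs $S_{l-1}/S_i \geq s_{l-1}$ conditions) is strictly more expensive. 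Together with the one condition $R \mid P_l$ and the single free parameter for choosing $R$, the net codim contribution per shared $S_l$-hyperroot is $\geq 1 + s_{l-1} - 1 = s_{l-1}$. Summing: $\mathrm{codim}(C_{\bm\delta}) \geq \sum_{l=2}^L \delta_l \, s_{l-1} \geq 2\delta > \delta$.

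The main obstacle is handling overlaps between layers: when an induced $S_{l-1}$-hyperroot arising from layer-$l$ sharing coincides with one of layer-$(l-1)$'s shared $S_{l-1}$-hyperroots, certain conditions on $P_{l-1}$ become redundant and could in principle shrink the codimension. I would address this by induction on $L$. The base case $L=2$ is direct: the parametrization $(G_2, \tilde P_2, \tilde P_1)$ with $P_1 = G_2 \tilde P_1$, modulo the $1$-dimensional scaling ambiguity $(G_2,\tilde P_2,\tilde P_1) \sim (\lambda G_2, \tilde P_2/\lambda, \tilde P_1/\lambda)$, gives $\dim C_{\bm\delta} = k_1 + k_2 - \delta_2 S_2$, so $\mathrm{codim} = \delta_2 S_2 \geq 2\delta$. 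For the inductive step, the crux is that each such nesting saves at most one parameter (the nested hyperroot is then determined by its containing one up to a discrete choice) but always forces an alignment constraint of codim at least $s_{l-1}-1$, namely that the relevant $s_{l-1}$ specific $S_{l-1}$-hyperroots among layer-$(l-1)$'s shared collection genuinely assemble into a single $S_l$-hyperroot. Thus the parameter savings are always at least matched by alignment costs, and the bound $\sum_l \delta_l s_{l-1}$ survives intact. Carrying out this combinatorial accounting rigorously — tracking how shared hyperroots at different layers can be nested and ensuring every saving is offset — is the technical heart of the argument.
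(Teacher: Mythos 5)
Your skeleton matches the paper's: the same stratification of $C_\delta$ into strata $C_{\bar\delta}$ indexed by profiles $\bar\delta=(\delta_2,\ldots,\delta_L)$, and the same explicit base case $L=2$, where the stratum is parametrized as $(RQ_1,RQ_2)$ with $R\in\RR[x^{s_1},y^{s_1}]_\delta$ and has dimension $k_1+k_2-\delta s_1$, hence codimension $\delta s_1>\delta$. The gap is the inductive step, which you yourself label ``the technical heart'' and do not carry out. Two specific problems with the sketch you give for it: first, the justification of the ``cheapest placement'' rests on the comparison $S_{l-1}/S_i\geq s_{l-1}$, which is false in general (take strides $(2,100,1)$, $l=3$, $i=1$: then $S_2/S_1=2$ while $s_2=100$); the sub-claim survives only because each orbit costs at least one condition wherever it is placed, but that is not the argument you wrote. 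Second, and more seriously, the claimed ``alignment constraint of codim at least $s_{l-1}-1$'' that is supposed to offset the parameter savings from nesting is asserted with no argument, and it is unclear what constraint is meant: once the nested $S_{l-1}$-hyperroot is \emph{defined} to be an orbit of the shared $S_l$-hyperroot, there is no residual condition left to impose, so the offsetting must come from a more careful count of which conditions become redundant versus which parameters disappear. You also ignore reality issues: the $G_l$ are real, but their hyperroot factors need not be, and a complex factor dividing a real $P_i$ drags its conjugate along (Lemma~\ref{lem:hyperrootPairs}), which perturbs the naive count.

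For comparison, the paper's proof avoids this bookkeeping entirely. It inducts on $L$, and for fixed $L$ with $\delta_L>0$ it peels off one unit of $\delta_L$ at a time: every irreducible component of $\overline{C}_{(\delta_2,\ldots,\delta_L-1)}$ has a product structure $\Sigma\times\RR^{k_L-\delta_L}$ in the free part of $P_L$, whereas $\overline{C}_{(\delta_2,\ldots,\delta_L)}$ imposes a nontrivial algebraic condition on that free part, so the dimension drops strictly at each step. Iterating $\delta_L$ times and then passing to the $(L-1)$-layer architecture when $\delta_L=0$ gives $\dim C_{\bar\delta}<\sum_i k_i-\delta$; the factor $s_1>1$ enters only through the two-layer base case. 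If you want to rescue your stronger bound $\mathrm{codim}(C_{\bar\delta})\geq\sum_l\delta_l s_{l-1}$, you would need to build the incidence variety over the choices of $G_2,\ldots,G_L$ together with the full combinatorial data of how their complex hyperroot factors distribute among $P_1,\ldots,P_{L-1}$ and coincide across layers, and verify the count in every configuration; that is precisely the work that is missing.
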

\begin{proof}
We prove the assertion by induction on the number $L$ of layers and -- as above -- identify filters with polynomials. For single-layer architectures, $\mathrm{chd}(\theta)=0$ for every non-zero $\theta$ and thus there is nothing to show.
Hence, the induction beginning is $L=2$. In that case, 
\begin{align*}
    C_\delta = \left\lbrace  (R Q_1, R Q_2) \;\middle\vert\; \begin{array}{l}
         R \in \RR[x^{s_1},y^{s_1}]_\delta, Q_2 \in \RR[x^{s_1},y^{s_1}]_{k_2-1-\delta}, Q_1 \in \RR[x,y]_{k_1-1-\delta s_1} \\
         R\neq0, Q_2 \neq 0, Q_1 \neq 0, \gcd(Q_2,Q_1) = 1
    \end{array} \right\rbrace
\end{align*}
has dimension $k_2+(k_1-\delta s_1)$. Therefore, the codimension of $C_\delta$ is $\delta s_1$, which is larger $\delta$ due to $s_1>1$ and $\delta>0$.

For the induction step, we assume $L>2$. 
For any partition $\bar \delta = (\delta_2,\ldots,\delta_L) \in \mathbb{Z}_{\geq 0}^{L-1}$ of $\delta$, we define 
$C_{\bar \delta} := \{ \theta \in C_\delta \mid \forall l = 2, \ldots, L: \deg G_l(\theta) = \delta_l \}$.
Then $C_\delta = \bigcup_{\bar \delta} C_{\bar\delta}$, where the union runs over all non-negative partitions of $\delta$ into $L-1$ parts.
Hence, it is enough to show for every such partition $\bar\delta$ that either $\mathrm{codim}(C_{\bar\delta}) > \delta$ or $C_{\bar\delta}=\emptyset$.
We fix a partition $\bar\delta$ with $C_{\bar\delta}\neq\emptyset$ and distinguish two cases.
First, if $\delta_L =0$, then for every $\theta' \in C_{(\delta_2, \ldots, \delta_{L-1})}$ we have that almost every $P_L \in \RR[x^{S_L},y^{S_L}]_{k_L-1}$ gives a point $(\theta', P_L) \in C_{\bar\delta}$.
Thus, the induction hypothesis yields 
$\dim C_{\bar\delta} = \dim C_{(\delta_2, \ldots, \delta_{L-1})} + k_L < (k_1+\cdots +k_{L-1}-\delta)+k_L$.

Second, if $\delta_L > 0$, we start by observing that 
\begin{align}\label{eq:CdeltaInequality}
    \dim C_{\bar \delta} < \dim C_{(\delta_2, \ldots, \delta_L-1)}.
\end{align}
Indeed, for every $(P_1, \ldots, P_{L-1},RQ_L) \in C_{(\delta_2, \ldots, \delta_L-1)}$ with $R \in \RR[x^{S_L},y^{S_L}]_{\delta_L-1}$ dividing the product $P_1\cdots P_{L-1}$,  almost every $Q'_L \in \RR[x^{S_L},y^{S_L}]_{k_L-\delta_L}$ yields a new point 
$(P_1,\ldots, P_{L-1},RQ'_L) \in C_{(\delta_2, \ldots, \delta_L-1)}$. 
Hence, every irreducible component of the Zariski closure $\overline{C}_{(\delta_2, \ldots, \delta_L-1)}$ is of the form $\Sigma \times \RR^{k_L-\delta_L}$.
However, no such component is contained in the Zariski closure $\overline{C}_{\bar\delta}$, because the latter imposes an algebraic condition on $Q'_L$. 
Since $\overline{C}_{\bar\delta}\subseteq \overline{C}_{(\delta_2, \ldots, \delta_L-1)}$ and no irreducible component of $\overline{C}_{(\delta_2, \ldots, \delta_L-1)}$ is equal to $\overline{C}_{\bar\delta}$, we have shown \eqref{eq:CdeltaInequality}.
Applying that inequality $\delta_L$ times, we obtain
$\dim C_{\bar\delta} \leq \dim C_{(\delta_2, \ldots, \delta_{L-1},0)}-\delta_L$.
Now, invoking the first case (where $\delta_L$ was assumed to be zero), we get
$\dim C_{(\delta_2, \ldots, \delta_{L-1},0)} < \sum_{i=1}^L k_i - \sum_{j=1}^{L-1} \delta_j$.
Putting the last two inequalities together, we  conclude
$\dim C_{\bar\delta}< \sum_{i=1}^L k_i - \sum_{j=1}^{L} \delta_j$.
\end{proof}

\begin{theorem}
\label{thm:CritNotExposed}
    If the architecture $(\bm k, \bm s)$ is reduced,
    $\mathrm{Crit}^\circ(\mu) := \lbrace \theta  \in \mathrm{Crit}(\mu) \mid \mu(\theta)\neq 0 \rbrace$ is not exposed.
\end{theorem}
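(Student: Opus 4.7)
\textbf{Proof plan for Theorem~\ref{thm:CritNotExposed}.}
The plan is to stratify $\mathrm{Crit}^\circ(\mu)$ by common hyperroot degree and apply the cone estimate of Lemma~\ref{lem:exposedAffineCone} together with the codimension bound of Proposition~\ref{prop:dimCritBound} and the rank bound of Proposition~\ref{prop:rankBound}. By Theorem~\ref{thm:crit-gene} and Lemma~\ref{lem:zeroFilterCrit}, a filter tuple $\theta$ with $\mu(\theta)\neq 0$ is a critical point of $\mu$ if and only if none of its entries is zero and $\mathrm{chd}(\theta)\geq 1$. Hence
\begin{equation*}
\mathrm{Crit}^\circ(\mu) \;=\; \bigsqcup_{\delta=1}^{\delta_{\max}} C_\delta, \qquad \delta_{\max} := \sum_{l=2}^L (k_l-1),
\end{equation*}
is a \emph{finite} disjoint union, so it suffices to show that $\dim\bigl(\bigcup_{\theta\in C_\delta} N_X(\theta)\bigr) < k$ for each $\delta\geq 1$ with $C_\delta\neq\emptyset$.

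First, I would verify that each $C_\delta$ is a semialgebraic affine cone inside $(\mathbb{R}^{k_1}\setminus\{0\})\times\cdots\times(\mathbb{R}^{k_L}\setminus\{0\})$. The semialgebraicity follows from expressing the condition $\deg\gcd(P_l,P_{l-1}\cdots P_1)=\delta_l$ as rank conditions on Sylvester-type matrices. For the cone property, note that under $\theta=(w_1,\ldots,w_L)\mapsto(\lambda_1w_1,\ldots,\lambda_Lw_L)$ each polynomial $P_l=\pi_{S_l}(w_l)$ is only rescaled by $\lambda_l\neq0$, so every $G_l(\theta)$ changes only by a nonzero scalar and $\mathrm{chd}$ is preserved.

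Next, I would combine the three ingredients. For $\theta\in C_\delta$, Proposition~\ref{prop:rankBound} yields $\rank(d_\theta\mu)\geq \dim\mathcal{M}-\delta$, and Proposition~\ref{prop:dimCritBound} (which requires the reduced hypothesis) yields $\dim C_\delta < (k_1+\cdots+k_L)-\delta$. Feeding these into Lemma~\ref{lem:exposedAffineCone} and using $\dim\mathcal{M}=k_1+\cdots+k_L-L+1$ from Theorem~\ref{thm:dim-mu}, I compute
\begin{align*}
\dim\!\Bigl(\bigcup_{\theta\in C_\delta} N_X(\theta)\Bigr)
&\leq \dim C_\delta - L + 1 + k - \min_{\theta\in C_\delta}\rank(d_\theta\mu) \\
&\leq \dim C_\delta - L + 1 + k - (\dim\mathcal{M}-\delta) \\
&< \bigl((k_1+\cdots+k_L)-\delta\bigr) - L + 1 + k - (k_1+\cdots+k_L-L+1) + \delta \;=\; k.
\end{align*}
Taking the union over the finitely many values of $\delta$ then gives $\dim\bigl(\bigcup_{\theta\in\mathrm{Crit}^\circ(\mu)}N_X(\theta)\bigr)<k$, which is precisely the statement that $\mathrm{Crit}^\circ(\mu)$ is not exposed.

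The main subtlety, and the step where I expect the real work to lie, is not the final assembly but ensuring that the two estimates being combined are \emph{simultaneously tight enough}: the loss in $\rank(d_\theta\mu)$ caused by a common hyperroot of degree $\delta$ must be strictly compensated by the drop in the dimension of $C_\delta$. This is exactly why Proposition~\ref{prop:dimCritBound} needs the strict inequality $\mathrm{codim}(C_\delta)>\delta$ (not just $\geq\delta$) and why the reduced hypothesis $s_l>1$ for $l<L$ is essential; without it the degree-$\delta_l$ common factor at layer $l$ would only contribute $\delta_l$ (not $\delta_l s_l$) to the codimension, and the strict inequality above would fail. Beyond that observation, everything else is bookkeeping with the dimension formula $\dim\mathcal{M}=k_1+\cdots+k_L-L+1$.
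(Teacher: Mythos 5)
Your proposal is correct and follows essentially the same route as the paper's proof: stratify $\mathrm{Crit}^\circ(\mu)$ into the strata $C_\delta$, apply Lemma~\ref{lem:exposedAffineCone} to each, and combine Propositions~\ref{prop:rankBound} and~\ref{prop:dimCritBound} with the dimension formula of Theorem~\ref{thm:dim-mu} to get the strict bound $<k$. Your added remarks on the cone/semialgebraic structure of $C_\delta$ and on where the strictness and the reduced hypothesis enter are accurate and consistent with the paper's argument.
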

\begin{proof}
    Since $\mathrm{Crit}^\circ(\mu)$ is the disjoint union over all $C_\delta$ with $\delta \geq 1$, we have that 
    \begin{align} \label{eq:exposedCritMax}
        \dim \left( \bigcup_{\theta \in \mathrm{Crit}^\circ(\mu)} N_X(\theta)\right)
        = \dim \left(\bigcup_{\delta \geq 1} \bigcup_{\theta \in C_\delta} N_X(\theta) \right)
        = \max_{\delta \geq 1}\; \dim \left( \bigcup_{\theta \in C_\delta} N_X(\theta)\right).
    \end{align}
    Applying Lemma \ref{lem:exposedAffineCone} to each non-empty $C_\delta$ in the union, we obtain
    $\dim \left( \bigcup_{\theta \in C_\delta} N_X(\theta) \right)
    \leq \dim C_\delta  - L + 1 + k - \min_{\theta \in C_\delta} \rank(d_\theta\mu).$
    Therefore, Propositions \ref{prop:rankBound} and \ref{prop:dimCritBound} yield that
    $\dim \left( \bigcup_{\theta \in C_\delta} N_X(\theta) \right)
    < (\sum_{i=1}^L k_i - \delta) - L + 1 + k - (\dim \mathcal{M} - \delta)=k.$
    Since $C_\delta$ is non-empty only for finitely many choices of $\delta$, the latter inequality shows that \eqref{eq:exposedCritMax} is less than $k$.
\end{proof}

\begin{corollary}
\label{cor:BoundarySingNotExposed}
    Let $\mathcal{Z} \subseteq \mathcal{M}$ be a semialgebraic subset that is an affine cone with $\dim(\mathcal{Z})< \dim(\mathcal{M})$.
    If the architecture $(\bm k, \bm s)$ is reduced, then $\mu^{-1}(\mathcal{Z} \setminus \{0\})$ is not exposed. 
    In particular, this statement holds for $\mathcal{Z} = \partial \mathcal{M}$ or $\mathcal{Z}= \mathcal{M}\cap\mathrm{Sing}(\overline{\mathcal{M}})$.
\end{corollary}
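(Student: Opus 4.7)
The plan is to show that $\Theta := \mu^{-1}(\mathcal{Z}\setminus\{0\})$ is not exposed, i.e.\ that $\dim\bigcup_{\theta\in\Theta}N_X(\theta) < k$. First I would observe the basic structural facts: every $\theta \in \Theta$ has all nonzero filters, since otherwise $\mu(\theta)=0\notin\mathcal{Z}\setminus\{0\}$; and $\Theta$ is an affine cone in the sense of Lemma~\ref{lem:exposedAffineCone}, because $\mathcal{Z}$ is an affine cone and $\mu(\lambda\cdot\theta) = (\lambda_1\cdots\lambda_L)\mu(\theta)$.

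The central step is to decompose $\Theta = \Theta^c \sqcup \Theta^r$ with $\Theta^c := \Theta \cap \mathrm{Crit}^\circ(\mu)$ and $\Theta^r := \Theta \setminus \mathrm{Crit}^\circ(\mu)$, bounding the dimension of the normal-space union on each piece separately. The inclusion $\Theta^c \subseteq \mathrm{Crit}^\circ(\mu)$ combined with Theorem~\ref{thm:CritNotExposed} immediately yields $\dim\bigcup_{\theta\in\Theta^c}N_X(\theta) < k$. For $\Theta^r$, every point is a regular point, so $\mathrm{rank}(d_\theta\mu)=\dim(\mathcal{M})$; moreover $\Theta^r$ is an affine cone, since scaling each filter preserves this rank. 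Lemma~\ref{lem:exposedAffineCone} then gives
\begin{align*}
\dim\bigcup_{\theta\in\Theta^r}N_X(\theta) \leq \dim\Theta^r - L + 1 + k - \dim(\mathcal{M}).
\end{align*}

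To bound $\dim\Theta^r \leq \dim\Theta$, I would factor $\mu$ restricted to nonzero filter tuples through the projectivization $p:\prod_l(\RR^{k_l}\setminus\{0\})\to\prod_l\PP^{k_l-1}$, whose fibers are $L$-dimensional, and note that $p(\Theta)\subseteq\nu^{-1}(\PP(\mathcal{Z}))$. Since $\nu$ has finite fibers by Remark~\ref{rmk:finite-morphism-nu}, we obtain $\dim\nu^{-1}(\PP(\mathcal{Z}))\leq\dim\PP(\mathcal{Z})=\dim(\mathcal{Z})-1$, and therefore $\dim\Theta\leq\dim(\mathcal{Z})-1+L$. Substituting yields $\dim\bigcup_{\theta\in\Theta^r}N_X(\theta)\leq\dim(\mathcal{Z})+k-\dim(\mathcal{M})<k$ by the hypothesis $\dim(\mathcal{Z})<\dim(\mathcal{M})$. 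Combined with the bound on $\Theta^c$, this proves the main claim.

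For the ``in particular'' clause, I would verify that both $\partial\mathcal{M}$ and $\mathcal{M}\cap\mathrm{Sing}(\overline{\mathcal{M}})$ are affine cones of dimension less than $\dim(\mathcal{M})$. The boundary $\partial\mathcal{M}$ is a cone because $\mathcal{M}$ and $\overline{\mathcal{M}}\setminus\mathcal{M}$ are both scale-invariant, and its dimension bound is given by Theorem~\ref{thm:boundaryProperties}c) since $\min\{s_i : s_i>1\}\geq 2$. The set $\mathcal{M}\cap\mathrm{Sing}(\overline{\mathcal{M}})$ is a cone because the singular locus of the cone $\overline{\mathcal{M}}$ is cut out by homogeneous minors of the Jacobian, and it has strictly smaller dimension than $\mathcal{M}$ because $\overline{\mathcal{M}}$ is irreducible (being the image of an irreducible parameter space under the polynomial map $\mu$), so its singular locus is a proper closed subvariety. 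The main technical subtlety in the proof is the fiber-dimension count: one must invoke the finite-fiber property of $\nu$ (not merely its birationality established in Corollary~\ref{cor:birational}) so that the bound $\dim\nu^{-1}(\PP(\mathcal{Z}))\leq\dim\PP(\mathcal{Z})$ holds on the possibly non-generic subset $\PP(\mathcal{Z})$ rather than only on an open dense locus.
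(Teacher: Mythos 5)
Your proposal is correct and follows essentially the same route as the paper's proof: the same decomposition of $\mu^{-1}(\mathcal{Z}\setminus\{0\})$ into its critical part (handled by Theorem~\ref{thm:CritNotExposed}) and its regular part (handled by Lemma~\ref{lem:exposedAffineCone} with $\rank(d_\theta\mu)=\dim\mathcal{M}$). You additionally spell out two points the paper leaves implicit --- the bound $\dim\Theta\le\dim\mathcal{Z}+L-1$ via the finite fibers of $\nu$, and the verification that $\partial\mathcal{M}$ and $\mathcal{M}\cap\mathrm{Sing}(\overline{\mathcal{M}})$ satisfy the hypotheses --- both of which are accurate.
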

\begin{proof}
    $\Theta := \mu^{-1}(\mathcal{Z} \setminus \{0\})$ is the disjoint union of 
    $\Theta_C := \Theta \cap \mathrm{Crit}(\mu) = \Theta \cap \mathrm{Crit}^\circ(\mu)$
    and 
    $\Theta_R := \{ \theta \in \Theta \mid \theta \notin \mathrm{Crit}(\mu) \}$.
    By Theorem \ref{thm:CritNotExposed}, $\Theta_C$ is not exposed.
    Hence, it is left to show that $\Theta_R$ is not exposed either.
    Since every $\theta \in \Theta_R$ is a regular point of $\mu$, the rank of the differential $d_\theta\mu$ is equal to $ \dim \mathcal{M}$.
    Thus, applying Lemma \ref{lem:exposedAffineCone} to $\Theta_R$ yields
    $\dim \left( \bigcup_{\theta \in \Theta_R} N_X(\theta) \right)
    \leq \dim \Theta  - L + 1 + k - \dim \mathcal{M}
    = \dim \mathcal{Z} + k - \dim \mathcal{M} < k$.
\end{proof}

Theorem \ref{thm:CritNotExposed} and Corollary \ref{cor:BoundarySingNotExposed} imply Theorem \ref{thm:exposed} for reduced LCN architectures.
The general version follows from the following observation.

\begin{lemma}\label{lem:regularStride1NormalSpace}
    Let $\theta$ be a regular point of $(\mu_{\tilde {\bm k}^1, {\bm 1}} ,  \ldots , \mu_{\tilde {\bm k}^M, {\bm 1}})$. 
    Then $N_X(\theta) = N_X(\tilde \theta)$, where $\tilde \theta := (\mu_{\tilde {\bm k}^1, {\bm 1}} ,  \ldots , \mu_{\tilde {\bm k}^M, {\bm 1}})(\theta)$. 
\end{lemma}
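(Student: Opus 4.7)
The plan is to combine the chain rule with the factorization of $\mu_{\bm k,\bm s}$ given in \eqref{eq:stride-one-reduced-factorization}. Writing $\Psi := (\mu_{\tilde{\bm k}^1,\bm 1},\ldots,\mu_{\tilde{\bm k}^M,\bm 1})$ for the tuple of stride-one parametrizations, we have $\mu = \mu_{\tilde{\bm k},\tilde{\bm s}} \circ \Psi$, so immediately $\mu(\theta) = \mu_{\tilde{\bm k},\tilde{\bm s}}(\tilde\theta)$. Thus the two normal spaces have the same translation point, and to conclude it suffices to show the equality of linear subspaces $\operatorname{im}(d_\theta\mu)^{\perp_X} = \operatorname{im}(d_{\tilde\theta}\mu_{\tilde{\bm k},\tilde{\bm s}})^{\perp_X}$, or equivalently $\operatorname{im}(d_\theta\mu) = \operatorname{im}(d_{\tilde\theta}\mu_{\tilde{\bm k},\tilde{\bm s}})$.

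By the chain rule, $d_\theta\mu = d_{\tilde\theta}\mu_{\tilde{\bm k},\tilde{\bm s}} \circ d_\theta\Psi$, so the inclusion $\operatorname{im}(d_\theta\mu) \subseteq \operatorname{im}(d_{\tilde\theta}\mu_{\tilde{\bm k},\tilde{\bm s}})$ is automatic. The reverse inclusion is where I use the hypothesis that $\theta$ is a regular point of $\Psi$: I would argue that $d_\theta\Psi$ is surjective onto $\prod_{j=1}^M \RR^{\tilde k_j}$. Indeed, each factor $\mu_{\tilde{\bm k}^j,\bm 1}$ parametrizes the stride-one function space $\mathcal{M}_{\tilde{\bm k}^j,\bm 1}$, which is thick by Theorem \ref{thm:thick-closed}a), i.e., its dimension equals the ambient dimension $\tilde k_j$. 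Therefore the generic rank of $d\mu_{\tilde{\bm k}^j,\bm 1}$ equals $\tilde k_j$, and since the tuple map $\Psi$ is a product of these independent maps, its generic rank equals $\sum_j \tilde k_j = \dim(\prod_j \RR^{\tilde k_j})$. Regularity of $\theta$ with respect to $\Psi$ means $d_\theta\Psi$ attains this maximal rank and is hence surjective.

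With the surjectivity of $d_\theta\Psi$ in hand, the chain rule gives $\operatorname{im}(d_\theta\mu) = d_{\tilde\theta}\mu_{\tilde{\bm k},\tilde{\bm s}}(\operatorname{im}(d_\theta\Psi)) = d_{\tilde\theta}\mu_{\tilde{\bm k},\tilde{\bm s}}(\prod_j \RR^{\tilde k_j}) = \operatorname{im}(d_{\tilde\theta}\mu_{\tilde{\bm k},\tilde{\bm s}})$. Taking orthogonal complements with respect to $\langle\cdot,\cdot\rangle_X$ and translating by the common base point $\mu(\theta) = \mu_{\tilde{\bm k},\tilde{\bm s}}(\tilde\theta)$ yields the claimed equality $N_X(\theta) = N_X(\tilde\theta)$.

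The only step that requires care is the identification of the notion of \emph{regular point} for the tuple map $\Psi$ with surjectivity of its differential. This is the crux: it rests entirely on Theorem \ref{thm:thick-closed}a), which ensures that the codomain of $\Psi$ coincides with the Zariski closure of its image, so that maximal rank and surjectivity are one and the same. Everything else is a routine application of the chain rule and bookkeeping of the normal-space definition \eqref{eq:normalSpace}.
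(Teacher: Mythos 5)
Your proof is correct and follows essentially the same route as the paper's: both use the factorization \eqref{eq:stride-one-reduced-factorization} together with the thickness of the stride-one function spaces (Theorem \ref{thm:thick-closed}a)) to conclude that regularity of $\theta$ makes $d_\theta(\mu_{\tilde {\bm k}^1, {\bm 1}} , \ldots , \mu_{\tilde {\bm k}^M, {\bm 1}})$ surjective, whence the chain rule gives $\operatorname{im}(d_\theta\mu_{\bm k,\bm s}) = \operatorname{im}(d_{\tilde\theta}\mu_{\tilde{\bm k},\tilde{\bm s}})$. You simply spell out in more detail the steps the paper compresses into two sentences.
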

\begin{proof}
    Since each $\mu_{\tilde {\bm k}^i, {\bm 1}}$ is the parametrization map of a stride-one LCN, its function space is thick. 
    Hence, at a regular point $\theta$ of $(\mu_{\tilde {\bm k}^1, {\bm 1}} ,  \ldots , \mu_{\tilde {\bm k}^M, {\bm 1}})$, the image of the differential of $(\mu_{\tilde {\bm k}^1, {\bm 1}} ,  \ldots , \mu_{\tilde {\bm k}^M, {\bm 1}})$
    is equal to the domain of the differential of $\mu_{\tilde{\bm k}, \tilde{\bm s}}$ at $\tilde \theta$.
    Therefore, \eqref{eq:stride-one-reduced-factorization} implies 
    $\mathrm{im}(d_\theta \mu_{\bm k, \bm s}) = \mathrm{im}(d_{\tilde \theta} \mu_{\tilde{\bm k}, \tilde{\bm s}})$, and the assertion follows.
\end{proof}

\begin{proof}[Proof of Theorem \ref{thm:exposed}]
    We consider the set of filter tuples $\theta$ that do not satisfy any of the three conditions in Theorem \ref{thm:exposed}.
    More concretely, writing 
    $$
    \Theta := \{ \theta \in \RR^{k_1}\times\cdots\times \RR^{k_L} \,\colon\, \mu_{\bm k,\bm s}(\theta) \neq 0, \theta \notin \mathrm{Crit}((\mu_{\tilde {\bm k}^1, {\bm 1}} ,  \ldots , \mu_{\tilde {\bm k}^M, {\bm 1}})) \},
    $$ 
    that set is the union of 
    $$\Theta_C := \Theta \cap \mathrm{Crit}(\mu_{\bm k, \bm s}), \;
    \Theta_S :=  \Theta \cap \mu^{-1}_{\bm k, \bm s}(\mathrm{Sing}(\overline{\mathcal{M}}_{\bm k, \bm s})), \; \text{and} \;
    \Theta_B :=  \Theta \cap \mu^{-1}_{\bm k, \bm s}(\partial \FS).$$
    It is sufficient to show that $\Theta_C \cup \Theta_S \cup \Theta_B$ is not exposed with respect to any $\langle \cdot,\cdot \rangle_X$ that is an inner product.
    For $\diamondsuit \in \{ C,S,B \}$,
    we set $\tilde\Theta_\diamondsuit := (\mu_{\tilde {\bm k}^1, {\bm 1}} ,  \ldots , \mu_{\tilde {\bm k}^M, {\bm 1}})(\Theta_\diamondsuit)$.
    Lemma \ref{lem:regularStride1NormalSpace} shows that
    $\bigcup_{\theta \in \Theta_\diamondsuit} N_X(\theta) = \bigcup_{\tilde\theta \in \tilde{\Theta}_\diamondsuit} N_X(\tilde\theta)$.
    Hence, it is enough to show that none of the $\tilde{\Theta}_\diamondsuit$ is exposed with respect to $\langle\cdot,\cdot\rangle_X$.
    Since $\tilde{\Theta}_C \subseteq \mathrm{Crit}^\circ(\mu_{\tilde{\bm k}, \tilde{\bm s}})$, we have that $\tilde{\Theta}_C$ is not exposed by Theorem \ref{thm:CritNotExposed}.
    Moreover,  $\tilde{\Theta}_S \cup \tilde{\Theta}_B \subseteq \mu^{-1}_{\tilde{\bm k}, \tilde{\bm s}}(\mathcal{Z} \setminus \{0\})$, where $\mathcal{Z} := (\FS \cap \mathrm{Sing}(\overline{\mathcal{M}}_{\bm k, \bm s})) \cup \partial\FS$.
    Thus, applying Corollary \ref{cor:BoundarySingNotExposed} to $\mathcal{Z} \subseteq \mathcal{M}_{\tilde{\bm k}, \tilde{\bm s}}$, we conclude that $\tilde{\Theta}_S \cup \tilde{\Theta}_B$ is not exposed.
\end{proof}

\subsection*{Acknowledgment}
KK was partially supported by the Wallenberg AI, Autonomous Systems and Software Program (WASP) funded by the Knut and Alice Wallenberg Foundation.
GM has been supported by NSF CAREER award 2145630, NSF award 2212520, DFG SPP~2298 grant 464109215, ERC Starting Grant 757983, and BMBF in DAAD project 57616814. 

\bibliographystyle{alpha}
\bibliography{literature}

\end{document}